\title{Differentiable Cyclic Causal Discovery Under Unmeasured Confounders}
\author{%
  Muralikrishnna~G.~Sethuraman\\
  School of Electrical \& Computer Engineering\\
  Georgia Institute of Technology\\
  \texttt{muralikgs@gatech.edu} \\
  \And
  Faramarz Fekri \\
  School of Electrical \& Computer Engineering\\
  Georgia Institute of Technology \\
  \texttt{faramarz.fekri@ece.gatech.edu} \\
}
\def\sa{\mathcal{A}}
\def\sbe{\mathcal{B}}
\def\sce{\mathcal{C}}
\def\se{\mathcal{E}}
\def\si{\mathcal{I}}
\def\sm{\mathcal{M}}
\def\ses{\mathcal{S}}
\def\spe{\mathcal{P}}
\def\su{\mathcal{U}}
\def\sv{\mathcal{V}}
\def\vs{\bm{s}}
\def\vw{\bm{w}}
\def\vx{\bm{x}}
\def\vy{\bm{y}}
\def\vz{\bm{z}}
\def\vbeta{\bm{\beta}}
\def\vtheta{\bm{\theta}}
\def\vphi{\bm{\phi}}
\def\mB{\mathbf{B}}
\def\mI{\mathbf{I}}
\def\mJ{\mathbf{J}}
\def\mM{\mathbf{M}}
\def\mS{\mathbf{S}}
\def\mU{\mathbf{U}}
\def\mW{\mathbf{W}}
\def\mSigma{\bm{\Sigma}}
\def\rvc{\bm{C}}
\def\rvv{\bm{V}}
\def\rvx{\bm{X}}
\def\rvz{\bm{Z}}
\def\vff{\mathbf{f}}
\def\vffc{\mathbf{F}}
\def\vffc{\mathbf{F}}
\def\vfg{\mathbf{g}}
\def\vfi{\mathbf{id}}
\def\gG{\mathcal{G}}
\def\pE{\mathbb{E}}
\def\acy{\mathrm{acy}}
\def\an{\mathrm{an}}
\def\barr{\mathrm{barren}}
\def\ch{\mathrm{ch}}
\def\de{\mathrm{de}}
\def\dis{\mathrm{dis}}
\def\lra{\leftrightarrow}
\def\pa{\mathrm{pa}}
\def\R{\mathbb{R}}
\def\scn{\mathrm{sc}}
\def\tdo{\mathrm{do}}
\newcommand{\xpa}[1]{\rvx_{\pa_\gG(#1)}}
\newcommand{\sz}{\mathbf{\Sigma}_Z}
\newcommand{\score}{\hat{\ses}(\mB)}
\newcommand{\mperp}{\mathop{\perp}}
\newtheorem{define}{Definition}
\newtheorem{prop}[define]{Proposition}
\newtheorem{assume}[define]{Assumption}
\newtheorem{theorem}[define]{Theorem}
\newtheorem{example}[define]{Example}
\newtheorem{lemma}[define]{Lemma}
\newenvironment{customthm}[1]
  {\innercustomthm}
  {\endinnercustomthm}
\begin{document}

\maketitle

\begin{abstract}
Understanding causal relationships between variables is fundamental across scientific disciplines. Most causal discovery algorithms rely on two key assumptions: (i) all variables are observed, and (ii) the underlying causal graph is acyclic. While these assumptions simplify theoretical analysis, they are often violated in real-world systems, such as biological networks. Existing methods that account for confounders either assume linearity or struggle with scalability. To address these limitations, we propose DCCD-CONF, a novel framework for differentiable learning of nonlinear cyclic causal graphs in the presence of unmeasured confounders using interventional data. Our approach alternates between optimizing the graph structure and estimating the confounder distribution by maximizing the log-likelihood of the data. Through experiments on synthetic data and real-world gene perturbation datasets, we show that DCCD-CONF outperforms state-of-the-art methods in both causal graph recovery and confounder identification. Additionally, we provide consistency guarantees for our framework, reinforcing its theoretical soundness.
\end{abstract}

\section{Introduction}

Modeling cause-effect relationships between variables is a fundamental problem in science \cite{sachs_causal_2005, segal_learning_2005, zhang_integrated_2013}, as it enables the prediction of a system’s behavior under previously unseen perturbations. These relationships are typically represented using \emph{directed graphs} (DGs), where nodes correspond to variables, and directed edges capture causal dependencies.  Consequently, causal discovery reduces to learning the structure of these graphs.

Existing causal discovery algorithms can be broadly classified into three categories: (i) constraint-based methods, (ii) score-based methods, and (iii) hybrid methods.
Constraint-based methods, such as the PC algorithm \citep{sprites, triantafillou2015constraint, heinze2018invariant}, search for causal graphs that best satisfy the independence constraints observed in the data. However, since the number of conditional independence tests grows exponentially with the number of nodes, these methods often struggle with scalability.
Score-based methods, such as the GES algorithm \citep{Meek1997GraphicalMS, hauser2012characterization}, learn graph structures by maximizing a penalized score function, such as the Bayesian Information Criterion (BIC), over the space of graphs. Given the vast search space, these methods often employ greedy strategies to reduce computational complexity. A significant breakthrough came with \citet{notears}, who introduced a continuous constraint formulation to restrict the search space to acyclic graphs, inspiring several extensions \citep{yu2019dag, ng2020role, ng2022masked, zheng20learning, lee2019scaling, dcdi} that frame causal discovery as a continuous optimization problem under various model assumptions.
Hybrid methods \citep{tsamardinos2006max, solus2017consistency, wang2017permutation} integrate aspects of both constraint-based and score-based approaches, leveraging independence constraints while optimizing a score function.

Most causal structure learning methods assume (i) a \emph{directed acyclic graph} (DAG) with no directed cycles and (ii) complete observability, meaning no unmeasured confounders. 
\begin{wrapfigure}{r}{0.4\textwidth}
    \centering
    \scalebox{0.7}{
    \begin{tikzpicture}
    \begin{scope}[xshift=0cm]
        \node (x1) [circle, draw=black] at (0,0) {$X_1$};
        \node (x2) [circle, draw=black] at (2,0) {$X_2$}; 
        \node (x3) [circle, draw=black] at (0,-2) {$X_3$};
        \node (x4) [circle, draw=black] at (2,-2) {$X_4$}; 

        \node at (1, -3) {(a)};
    \end{scope}

    \begin{scope}[xshift=4cm]
        \node (xi1) [circle, draw=black] at (0,0) {$X_1$};
        \node (xi2) [circle, draw=black] at (2,0) {$X_2$}; 
        \node (xi3) [circle, draw=black, fill=gray!25] at (0,-2) {$X_3$};
        \node (xi4) [circle, draw=black] at (2,-2) {$X_4$}; 

        \node at (1, -3) {(b)};
    \end{scope}

    \draw [thick, ->] (x1) -- (x3);
    \draw [thick, ->] (x3) to[out=-30, in=-150] (x4);
    \draw [thick, ->] (x4) to[out=150, in=30] (x3);
    \draw [thick, ->] (x2) -- (x4);
    \draw [thick, ->] (x2) -- (x3);

    \draw [thick, <->, red] (x1) to[out=-145, in=135] node[left] {\color{red}$\sigma_{13}$} (x3);
    \draw [thick, <->, red] (x1) to[out=45, in=135] node[above] {\color{red}$\sigma_{12}$} (x2);

    \draw [thick, ->] (xi3) to[out=-30, in=-150] (xi4);
    \draw [thick, ->] (xi2) -- (xi4);

    \draw [thick, <->, red] (xi1) to[out=45, in=135] node[above] {\color{red}$\sigma_{12}$} (xi2);
    \end{tikzpicture}}
    \caption{(a) Example of a directed mixed graph $\gG$, where the bidirectional edges represent hidden confounders, with $\sigma_{ij}$ indicating their corresponding strengths; (b) Mutilated graph, $\tdo(I_k)(\gG)$, resulting from the interventional experiment $I_k = \{X_3\}$, where all incoming edges (including bidirectional edges) to $X_3$ are removed.}
    \label{fig:causal-graph+intervention}
    \vspace{-0.5cm}
\end{wrapfigure}
While these assumptions simplify the search space, they are often unrealistic, as real-world systems—especially in biology—frequently exhibit feedback loops and hidden confounders \citep{freimer_systematic_2022}. Enforcing these constraints can also increase computational complexity, particularly in ensuring acyclicity, which often requires solving challenging combinatorial or constrained optimization problems. These limitations hinder the practical applicability of existing methods in settings where such violations are unavoidable.

Several approaches have been developed to address the challenge of feedback loops within causal graphs. Early work by \citet{richardson1996discovery} extended constraint-based approaches for Directed Acyclic Graphs (DAGs) to accommodate directed cycles. Another key contribution came from \citet{cyclic-ica}, who generalized Independent Component Analysis (ICA)-based causal discovery to handle linear non-Gaussian cyclic graphs. More recently, a growing body of research has focused on score-based methods for learning cyclic causal graphs \citep{pmlr-v124-huetter20a, amendola2020structure, cyclic_equil, 10.1214/17-AOS1602}. Additionally, some approaches leverage interventional data to improve structure recovery in cyclic systems. For instance, \citet{llc} and \citet{pmlr-v124-huetter20a} introduced frameworks that explicitly incorporate interventions to refine cyclic graph estimation. \citet{sethuraman2023nodags} further advanced this line of research by introducing a differentiable framework for learning nonlinear cyclic graphs. Unlike differentiable DAG learners that enforce acyclicity through augmented Lagrangian-based solvers, their approach sidesteps these constraints by directly modeling the data likelihood, enabling more efficient and flexible learning of cyclic causal structures. However, their method assumes the absence of unmeasured confounders, which limits its applicability in real-world settings where hidden confounders are often present. 

Causal discovery in the presence of latent confounders has seen limited development, with most existing approaches grounded in constraint-based methodologies. Extensions of the PC algorithm, such as the Fast Causal Inference (FCI) algorithm \citep{pmlr-vR3-spirtes01a}, construct a \emph{Partial Ancestral Graph} (PAG) to represent the equivalence class of DAGs in the presence of unmeasured confounders, and can accommodate nonlinear dependencies depending on the chosen conditional independence tests. However, standard FCI does not incorporate interventional data, prompting extensions such as JCI-FCI \citep{mooij2020joint} and related approaches \citep{kocaoglu2019characterization} that combine observational and interventional settings. \citet{jaber2020causal} further advanced this line of work by allowing for unknown interventional targets. Additionally, \citet{forre2018constraint} introduced $\sigma$-separation, a generalization of $d$-separation, enabling constraint-based causal discovery in the presence of both cycles and latent confounders. \citet{lingam-mmi} introduce LiNGAM-MMI, a generalization of the ICA-based LiNGAM~\cite{shimizu2006linear} that quantifies and mitigates confounding via a KL divergence minimization. Integer-programming–based formulations have also been proposed for causal discovery under latent confounding, such as \citep{ryvsavy2025exmag, chen2021integer}. A few recent approaches, such as \citet{bhattacharya2021differentiable}, have explored continuous optimization frameworks using differentiable constraints, though these methods are currently limited to linear settings. In parallel, several works exist on causal inference under latent confounding---most notably \citet{abadie2024doubly, chernozhukov2024double}---propose doubly robust estimators that integrate outcome modeling, weighting, and cross-fitting for reliable effect estimation. Overall, a unified framework capable of handling nonlinearity, cycles, latent confounders, and interventions remains largely absent.

\paragraph{Contributions.} In this work, we tackle three key challenges in causal discovery: \emph{directed cycles}, \emph{nonlinearity}, and \emph{unmeasured confounders}. Our main contributions are:
\begin{itemize}[leftmargin=15pt]
    \item We introduce DCCD-CONF, a novel differentiable causal discovery framework for learning nonlinear cyclic relationships under Gaussian exogenous noise, with confounders modeled as correlations in the noise term.
    \item We show that exact maximization of the proposed score function results in identification of the interventional equivalence class of the ground truth graph.
    \item We conduct extensive evaluations, comparing DCCD-CONF with state-of-the-art causal discovery methods on both synthetic and real-world datasets.
\end{itemize}

\paragraph{Organization.} The paper is structured as follows: Section~\ref{sec:problem-setup} introduces the problem setup. In Section~\ref{sec:methodology}, we present DCCD-CONF, our differentiable framework for nonlinear cyclic causal discovery with unmeasured confounders. We then evaluate its effectiveness on synthetic and real-world datasets in Section~\ref{sec:experiments}. Finally, Section~\ref{sec:discussion} concludes the paper.

\section{Problem Setup}\label{sec:problem-setup}

\subsection{Structural Equations for Cyclic Causal Graphs}

Let $\gG = (\sv, \se, \sbe)$ represent a possibly cyclic \emph{directed mixed graph} (DMG) that encodes the causal dependencies between the variables in the vertex set $\sv = [d]$, where $[d] = \{1, \ldots, d\}$. $\se$ denotes the set of directional edges of the form $i\to j$ in $\gG$, and $\sbe$ denotes the set of bidirectional edges of the form $i \lra j$ in $\gG$. Each node $i$ is associated with a random variable $X_i$ with the directed edge $i\to j \in \se$ representing a causal relation between $X_i$ and $X_j$, and the bidirectional edge $i\lra j \in \sbe$ indicates the presence of a hidden confounder between $X_i$ and $X_j$.  Following the framework proposed by \citet{sem1} and \citet{sem2}, we use \emph{structural equations model} (SEM) to algebraically describe the system: 
\begin{equation}
    X_i = F_i(\rvx_{\text{pa}_\gG(i)}, Z_i), \quad i = 1, \ldots, d,
    \label{eq:sem-indiv}
\end{equation}
where $\text{pa}_\gG(i) := \{j \in [d] : j \to i \in \se\}$ represents the parent set of $X_i$ in $\gG$, and $\rvx_{\text{pa}_\gG(i)}$ denotes the components of $\rvx = (X_1, \ldots, X_d)$ indexed by the parent set $\text{pa}_\gG(i)$. We exclude \emph{self-loops} (edges of the form $X_i \to X_i$) from $\gG$, as their presence can lead to identifiability challenges \citep{bongers2021foundations}. The function $F_i$, referred to as \emph{causal mechanism}, encodes the functional relationship between $X_i$ and its parents $\xpa{i}$, and the exogenous noise variable $Z_i$. 

The collection of exogenous noise variables $\rvz = (Z_1, \ldots, Z_d)$ account for the stochastic nature as well as the confounding observed in the system. We make the assumption that the exogenous noise vector follows a Gaussian distributions: $\rvz\sim \mathcal{N}(\bm{0},\mathbf{\Sigma}_Z)$. Notably, if $(\mathbf{\Sigma}_Z)_{ij} \neq 0$, then variables $X_i$ and $X_j$ are confounded, i.e., $i \lra j \in \sbe$. In other words, confounding is modeled through correlations in the exogenous noise variables. Intuitively, if $X_i$ and $X_j$ share a hidden cause, their unexplained variation (the part not accounted for by their observed parents) will tend to move together. By allowing the noise terms $Z_i$ and $Z_j$ to be correlated, this shared influence can be effectively captured. This formulation generalizes prior work by allowing cycles, extending both nonlinear cyclic models that assume independent noise terms \citep{sethuraman2023nodags}, and acyclic models without confounders \citep{peters2014identifiability}. 

By collecting all the causal mechanisms into the joint function $\vffc = (F_1, \ldots, F_d)$, we can then combine \eqref{eq:sem-indiv} over $i = 1, \ldots, d$ to obtain the equation
\begin{equation}
    \rvx = \vffc(\rvx, \rvz).
    \label{eq:sem-combined}
\end{equation}
We will use \eqref{eq:sem-combined} to represent the causal system due to its simplicity for subsequent discussion. 
The observed data represents a snapshot of a dynamical process where the recursive equations in \eqref{eq:sem-combined} define the system's state at its equilibrium. Thus, in our experiments we assume that the system has reached the equilibrium state. For a given random draw of $\rvz$, the value of $\rvx$ is defined as the solution to \eqref{eq:sem-combined}. To that end, we assume that \eqref{eq:sem-combined} admits a unique fixed point for any given $\rvz$. We refer to the map $\vff_x: \rvx \mapsto \rvz$ as the forward map, and $\vff_z: \rvz \mapsto \rvx$ as the reverse map. In Section~\ref{ssec:causal-mech}, we show that the chosen parametric family of functions indeed guarantees the existence of a unique fixed point. Under these restrictions, the probability density of $\rvx$ is well defined and is given by
\begin{equation}\label{eq:probability-observational}
    p_\gG(\rvx) = p_Z\big(\vff_x(\rvx)\big)\big|\det\big(\mJ_{\vff_x}(\rvx)\big)\big|,
\end{equation}
where $\mJ_{\vff_x}(\rvx)$ denotes the Jacobian matrix of the function $\vff_x$ at $\rvx$.

\subsection{Interventions}

In our work, we consider \emph{surgical interventions} \citep{sem2}, also known as \emph{hard interventions}, where all the incoming edges to the intervened nodes are removed from $\gG$. Given a set of intervened upon nodes (also known as \emph{interventional targets}), denoted as $I \subseteq \sv$, the structural equations in \eqref{eq:sem-indiv} are modified as follows
\begin{equation}
    X_i = 
    \begin{cases}
    C_i, & \text{if } X_i \in I,\\
    F_i(\xpa{i}, Z_i), & \text{if } X_i \notin I,
    \end{cases}
\label{eq:sem-interventions-indiv}
\end{equation}
where $C_i$ is a random variable sampled from a known distribution, i.e., $C_i \sim p_I(C_i)$. We denote $\tdo(I)(\gG)$ to be the mutilated graph under the intervention $I$ (see Figure~\ref{fig:causal-graph+intervention}). Note that $X_i$ is no longer confounded if it is intervened on. 

We consider a family of $K$ interventional experiments $\mathcal{I} = \{I_k\}_{k\in [K]}$, where $I_k$ represents the interventional targets for the $k$-th experiment. Let $\mU_k \in \{0,1\}^{d\times d}$ denote a diagonal matrix with $(\mU_k)_{ii} = 1$ if $i \notin I_k$, and $(\mU_k)_{ii} = 0$ if $i \in I_k$. Similar to the observational setting, \eqref{eq:sem-interventions-indiv} can be vectorized to obtain the following form
\begin{equation}
    \rvx = \mU_k \vffc(\rvx, \rvz) + \rvc,
    \label{eq:sem-interventions-comb}
\end{equation}
where $\rvc = (C_1, \ldots, C_d)$ is a vector with $C_i \sim p_I(C_i)$ if $i \in I_k$, and $C_i = 0$ otherwise. For the interventional targets $I_k \in \si$, let $\vff_x^{(I_k)}$ denote the forward map. Similar to the observational setting, we make the following assumption on the set of interventions.

\begin{assume}[Interventional stability] \label{assume:int-stability}
    Let $\si= \{I_k\}_{k\in [K]}$ be a family of interventional targets. For each $I_k \in \si$, the structural equations in \eqref{eq:sem-interventions-comb} admits a unique fixed point given the exogenous noise vector $\rvz$. 
\end{assume}

\noindent
Thus, the probability distribution of $\rvx$ for the interventional targets $I_k$ is given by
\begin{equation}
    p_{\tdo(I_k)(\gG)}(\rvx) = p_I(\rvc)p_{Z}\Big(\big[\vff_x^{(I_k)}(\rvx)\big]_{\su_k}\Big)\big|\det\big(\mJ_{\vff_x^{(I_k)}}(\rvx)\big)\big|,
    \label{eq:prob-distribution-interventions}
\end{equation}

where $\su_k = \{i : i \in \sv\setminus I\}$ denotes the index of purely observed nodes, and $p_{Z}\Big(\big[\vff_x^{(I_k)}(\rvx)\big]_{\su_k}\Big)$ is the marginal distribution of the combined vector $Z$, restricted to the components indexed by $\su_k$.

Given a family of interventions $\mathcal{I}$, our goal is to learn the structure of the DMG by maximizing the log-likelihood of the data, in addition to identifying the variables that are being confounded by the unmeasured confounders $\rvz$. The next section presents our approach to addressing this problem.

\section{DCCD-CONF: Differentiable Cyclic Causal Discovery with Confounders}\label{sec:methodology}

In this section, we present our framework for differentiable learning of cyclic causal structures in the presence of unmeasured confounders. We start by modeling the causal mechanisms, then define the score function used for learning, followed by a theorem that validates its correctness. Finally, we outline the algorithm for estimating the model parameters.

\subsection{Modeling Causal Mechanism}\label{ssec:causal-mech}

We model the structural equations in \eqref{eq:sem-combined} using \emph{implicit flows} \citep{lu2021implicit}, which define an invertible mapping between $\vx$ and $\vz$ by solving the root of a function $\mathbf{G}(\vx, \vz) = \bm{0}$, where $\mathbf{G}: \R^{2d} \to \R^d$. Specifically, we take $\mathbf{G}(\vx, \vz) = \vx - \vffc(\vx, \vz)$. General implicit mappings, however, do not guarantee invertibility or permit efficient computation of the log-determinant required for evaluating \eqref{eq:prob-distribution-interventions}. To balance expressiveness with tractability, we adopt the structured form proposed by \citet{lu2021implicit} for the causal mechanism:
\begin{equation}
\vffc(\vx, \vz) = -\vfg_x(\vx) + \vfg_z(\vz) + \vz,
\end{equation}
where $\vfg_x$ and $\vfg_z$ are restricted to be contractive functions. A function $\vfg: \R^d \to \R^d$ is contractive if there exists a constant $L < 1$ such that $\|\vfg(\vx) - \vfg(\vy)\| \leq L \|\vx - \vy\|$ for all $\vx, \vy \in \R^d$. This contractiveness ensures that the associated implicit map is uniquely solvable and invertible (see Theorem 1 in \citep{lu2021implicit}). In other words, contractivity ensures that the process defined by the SEM converges to an equilibrium state.

Under this formulation, the forward map takes the form $\vff_x(\vx) = (\vfi + \vfg_z)^{-1} \circ (\vfi + \vfg_x)(\vx)$, where $\vfi$ denotes the identity map. Given $\vx$ (or $\vz$), the corresponding value of $\vz$ (or $\vx$) can be computed via a root-finding procedure, i.e., $\vz = \text{RootFind}(\vx - \vffc(\vx, \cdot))$, specifically, we employ a quasi-Newton method (i.e., Broyden's method \citep{broyden1965class}) to find the root. To capture more complex nonlinear interactions between the observed variables $\rvx$ and latent confounders $\rvz$, multiple such implicit blocks can be stacked. This is true since $\vff_x$ is highly nonlinear and by suitably parameterizing $\vfg_x$ and $\vfg_z$ any nonlinear interaction between $\vx$ and $\vz$ can be modeled. For simplicity, we focus on a single implicit flow block for subsequent discussion.

We parameterize the functions $\vfg_x$ and $\vfg_z$ using neural networks. The adjacency matrix of the causal graph $\gG$ is encoded as a binary matrix $\mM^\gG \in \{0, 1\}^{d \times d}$, representing the presence of directed edges and serving as a mask on the inputs to $\vfg_x$. The diagonal entries of $\mM^\gG$ are explicitly enforced to be zero to prevent self-loops. Similarly, the identity matrix is used to mask the inputs to $\vfg_z$. Consequently, the causal mechanism is defined as:
\begin{equation}\label{eq:nn-causal-mech}
[\vffc_{\vtheta}(\rvx, \rvz)]_i = \big[-\text{NN}(\mM^\gG_{\ast, i} \odot \rvx \mid \vtheta_x) + \text{NN}(Z_i \mid \vtheta_z) + Z_i\big]_i,
\end{equation}
where $\text{NN}(\cdot \mid \vtheta)$ denotes a fully connected neural network parameterized by $\vtheta$, $\odot$ denotes the Hadamard product, and $\mM^\gG_{\ast, i}$ is the $i$-th column of $\mM^\gG$. The contractivity of $\vfg_x$ and $\vfg_z$ can be enforced by rescaling their weights using spectral normalization \citep{iresnet}. Moreover, the contractive nature of the causal mechanism facilitates efficient computation of the score function used for learning causal graphs, as discussed in Section~\ref{ssec:score-function}.

While the contractivity assumption may seem restrictive, it ensures stability and well-posedness in the presence of directed cycles. If the causal graph is known to be acyclic, this assumption can be relaxed (see Appendix~\ref{app:non-contractive}).

\subsection{Score function} \label{ssec:score-function}

Given a family of interventions $\si = \{I_k\}_{k\in [K]}$, we would like to learn the parameters of the structural equation model, i.e., causal graph structure, causal mechanism, and confounder distribution. To that end, similar to prior work \citep{sethuraman2023nodags, dcdfg, dcdi} in this domain we employ regularized log-likelihood of the observed nodes as the score function to be maximized. That is, 
\begin{equation}
\ses_\si(\gG) := \sup_{\vtheta, \mSigma_Z} \sum_{k=1}^K \mathop{\pE}_{\rvx \sim p^{(k)}} \log p_{\tdo(I_k)(\gG)}(\rvx) - \lambda |\gG|
\label{eq:ideal-score}
\end{equation}
where $p^{(k)}$ is the data generating distribution for the $k$-th interventional experiment $I_k$, $\mSigma_Z$ is the parameter (covariance matrix) governing the confounder distribution $p_Z$, $\vtheta = (\vtheta_x, \vtheta_z)$ is the combined causal mechanism parameters, and $|\gG|$ denotes a sparsity enforcing regularizer on the edges of $\gG$, and $p_{\tdo(I_k)(\gG)}(\rvx)$ is given by \eqref{eq:prob-distribution-interventions}.

We now present the main theoretical result of this paper. The following theorem establishes that, under appropriate assumptions, the graph $\hat \gG$ estimated by maximizing \eqref{eq:ideal-score} belongs to the same general directed Markov equivalence class (introduced by \citep{bongers2021foundations}) as the ground truth graph $\gG^\ast$ for each interventional setting $I_k \in \si$, denoted as $\hat{\gG} \equiv_\si \gG^\ast$, see Appendix~\ref{app:prelim}. Due to space constraints we provide the proof sketch below, see Appendix~\ref{app:proof} for complete proof of Theorem~\ref{thm:main-theorem}. 

\begin{theorem}
    Let $\si = \{I_k\}_{k=1}^K$ be a family of interventional targets, let $\gG^\ast$ denote the ground truth directed mixed graph, let $p^{(k)}$ denote the data generating distribution for $I_k$, and $\hat{\gG} := \arg\max_\gG \ses(\gG)$. Then, under the Assumptions \ref{assume:int-stability}, \ref{assume:sufficient-capacity}, \ref{assume:faithfulness}, and \ref{assume:finite-entropy}, and for a suitably chosen $\lambda > 0$, we have that $\hat{\gG} \equiv_\si \gG^\ast$. That is, $\hat{\gG}$ is $\si$-Markov equivalent to $\gG^\ast$. 
    \label{thm:main-theorem}
\end{theorem}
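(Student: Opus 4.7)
The plan is to split the analysis into a likelihood-matching part and a sparsity-resolution part. First I would rewrite the inner expectation as
\[
\pE_{\rvx \sim p^{(k)}} \log p_{\tdo(I_k)(\gG)}(\rvx) = -H(p^{(k)}) - \mathrm{KL}\bigl(p^{(k)} \,\big\|\, p_{\tdo(I_k)(\gG)}\bigr),
\]
so that, using Assumption~\ref{assume:finite-entropy},
\[
\sup_{\vtheta, \mSigma_Z} \sum_{k=1}^K \pE_{p^{(k)}} \log p_{\tdo(I_k)(\gG)}(\rvx) \;\le\; -\sum_{k=1}^K H(p^{(k)}),
\]
with equality iff there exist $(\vtheta, \mSigma_Z)$ under which $p_{\tdo(I_k)(\gG)} = p^{(k)}$ for every $k$. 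Invoking Assumption~\ref{assume:sufficient-capacity} on the expressivity of the implicit-flow parameterization \eqref{eq:nn-causal-mech} (together with Assumption~\ref{assume:int-stability}, which guarantees that the density \eqref{eq:prob-distribution-interventions} is well-defined), the true graph $\gG^\ast$ paired with the true mechanism and true $\mSigma_Z$ attains this upper bound, pinning down the supremum value.

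Next I would characterize the set $\sm^\ast := \{\gG : \gG \text{ attains the likelihood bound}\}$. By the generalized directed global Markov property for cyclic directed mixed graphs via $\sigma$-separation \citep{forre2018constraint, bongers2021foundations}, any distribution realized by a model with graph $\gG$ under intervention $I_k$ obeys every conditional independence implied by $\sigma$-separation in $\tdo(I_k)(\gG)$. Consequently, for $\gG \in \sm^\ast$ every $\sigma$-separation of $\tdo(I_k)(\gG)$ is a conditional independence of $p^{(k)}$; by faithfulness of $p^{(k)}$ to $\tdo(I_k)(\gG^\ast)$ (Assumption~\ref{assume:faithfulness}), these are exactly the $\sigma$-separations of $\tdo(I_k)(\gG^\ast)$. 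This yields the inclusion $\sigma\text{-sep}(\tdo(I_k)(\gG)) \subseteq \sigma\text{-sep}(\tdo(I_k)(\gG^\ast))$ for all $k$, for every $\gG \in \sm^\ast$.

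The regularizer $-\lambda |\gG|$ then supplies the reverse inclusion. Since adding an edge can only destroy $\sigma$-separations and never create them, any $\gG \in \sm^\ast$ whose $\sigma$-separation structure is strictly smaller than that of $\gG^\ast$ under some $\tdo(I_k)$ must carry at least one extra edge compared with a $\si$-equivalent representative of $\gG^\ast$. Thus minimizing $|\gG|$ within $\sm^\ast$ selects exactly the $\si$-Markov equivalence class of $\gG^\ast$. To fuse these two facts into a single argmax claim, I would define $L(\gG)$ as the unregularized supremum and exhibit a strictly positive gap $\delta = \min_{\gG \notin \sm^\ast}\bigl[-\sum_k H(p^{(k)}) - L(\gG)\bigr]$, using the finiteness of the DMG search space on $d$ nodes and the strict positivity of KL whenever $\sigma$-separation structures disagree. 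Picking $\lambda < \delta/d^2$ guarantees that penalty savings can never outweigh the likelihood gap, while $\lambda>0$ still breaks ties within $\sm^\ast$ in favor of edge-minimal graphs, delivering $\hat\gG \equiv_\si \gG^\ast$.

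The main obstacle is the step converting \emph{``matches every interventional distribution''} into \emph{``same $\sigma$-separation structure under every $\tdo(I_k)$.''} In the acyclic, confounder-free setting this is classical, but here one must work with generalized $\sigma$-separation, account for bidirectional edges arising from off-diagonal entries of $\mSigma_Z$, and handle the way a surgical intervention mutilates both directed and bidirectional incoming edges at the targets. A secondary subtlety is reconciling contractivity of the mechanism with sufficient capacity: one must argue that the class of spectrally-normalized residual networks is still rich enough to realize the unique equilibrium distributions entailed by $\gG^\ast$ under any $I_k \in \si$, so that Assumptions~\ref{assume:sufficient-capacity} and \ref{assume:int-stability} are simultaneously satisfiable. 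Finally, justifying strict positivity of $\delta$ demands a uniform KL lower bound across the finite family of non-equivalent DMGs, which I would obtain by contradiction from compactness of the realized parameters that achieve the supremum.
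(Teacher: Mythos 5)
Your overall strategy is the same as the paper's: rewrite the expected log-likelihood as a negative entropy minus a KL divergence (this is Lemma~\ref{lemma:scores}), use faithfulness to show that a graph imposing a $\sigma$-separation absent from $\gG^\ast$ cannot reproduce $p_\si^\ast$, and then pick $\lambda$ below a positive likelihood gap so the sparsity penalty cannot overturn it. The first genuine gap is in how you certify that the gap $\delta$ is strictly positive. Pointwise $D_{KL}(p^\ast\|p)>0$ for each $p\neq p^\ast$ is trivial; what is needed is $\inf_{p\in\sm_\si(\gG)}D_{KL}(p^\ast\|p)>0$, and $\sm_\si(\gG)$ is an infinite-dimensional family of densities, so ``compactness of the realized parameters'' does not deliver this --- neither the network parameter space nor the set of Markov densities is compact, and a priori $p^\ast$ could be a limit point of $\sm_\si(\gG)$ without belonging to it. The paper closes exactly this hole with Lemma~\ref{lemma:strictly-pos}: using the head--tail factorization of Theorem~\ref{thm:factorization} applied to $\acy(\gG^\si)$, it constructs the projection $\hat p(\rvv)=\prod_{H}p^\ast(\rvv_H\mid\rvv_T)$, shows $\hat p\in\sm_\si(\gG)$ via Proposition~\ref{prop:factor-implies-markov}, and obtains the uniform Pythagorean bound $D_{KL}(p^\ast\|p)\ge D_{KL}(p^\ast\|\hat p)>0$. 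Some version of that I-projection argument is indispensable; without it your $\delta$ may be zero.

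The second gap is your treatment of graphs that \emph{do} attain the likelihood bound, i.e.\ strict I-maps with $\sigma\text{-sep}(\tdo(I_k)(\gG))\subsetneq\sigma\text{-sep}(\tdo(I_k)(\gG^\ast))$ (the complete DMG being the extreme case). You assert that any such graph must carry strictly more edges than some member of $\si$-MEC$(\gG^\ast)$, so that the $\ell_0$ penalty resolves the tie. That is a Meek-conjecture-type statement: it is a theorem for DAGs under $d$-separation (Chickering), but for cyclic directed mixed graphs under $\sigma$-separation it is not established, and it is further complicated by the fact that the regularizer counts edges of $\gG$ while the relevant independence structure lives in $\acy(\gG^\si)$, whose edge count is not monotone in $|\gG|$ (acyclification inflates edges when collapsing strongly connected components). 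You would need to prove this structural claim rather than cite the intuition that ``adding edges only destroys separations.'' (To be fair, the paper's own proof also rushes this case --- it asserts that if $\gG$ has no separation missing from $\gG^\ast$ then the two are equivalent, which overlooks strict I-maps --- but your write-up makes the unproven edge-minimality claim an explicit, load-bearing step.)
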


Theorem~\ref{thm:main-theorem} rests on three key assumptions. Assumption~\ref{assume:sufficient-capacity} ensures that the data-generating distribution lies within the model class, while Assumption~\ref{assume:faithfulness} guarantees that every statistical independence in the data corresponds to a $\sigma$-separation in the ground-truth graph. Finally, Assumption~\ref{assume:int-stability} prevents the score function from diverging to infinity.

\begin{proof}[Proof (Sketch)]
Building on the characterization of general directed Markov equivalence class by \citet{bongers2021foundations}, extended to the interventional setting, we show that any graph outside this equivalence class has a strictly lower score than the ground truth graph \( \gG^\ast \). This follows from the fact that certain independencies present in the data are not captured by graphs outside the equivalence class. Combined with the expressiveness of the model class, this prevents such graphs from fitting the data properly.
\end{proof}

If the intervention set consists of all single-node interventions, \( \si = \{I_k\}_{k=1}^d \) with  $I_k = \{k\}$, \citet{llc} showed that the ground truth DMG can be uniquely recovered in the linear setting. Moreover, in the absence of cycles and confounders, this result extends to the nonlinear case, as demonstrated by \citet{dcdi}. However, determining the necessary conditions on interventional targets for perfect recovery in general DMGs with cycles and confounders remains an open problem. Nonetheless, in practice, we find that observational distribution in combination with single-node interventions across all nodes lead to perfect recovery of the ground truth, even in the nonlinear case, as shown in Section~\ref{sec:experiments}.


\subsection{Updating model parameters}\label{ssec:model-parameter-update}

In practice, we use gradient based stochastic optimization to maximize \eqref{eq:ideal-score}. For this purpose, following \citet{sethuraman2023nodags} and \citet{dcdi}, the entries of adjacency matrix $M_{ij}$ are modeled as Bernoulli random variable with parameters $b_{ij}$, grouped into the matrix $\sigma(\mB)$. We denote $\mM \sim \sigma(\mB)$ to indicate that $M_{ij} \sim \text{Bern}(b_{ij})$ for all $i,j \in [d]$. In this formulation, the sparsity regularizer is $\|\mM\|_0$, which is computationally intractable and thus we use the $\ell_1$-norm, $\|\mM\|_1$ as a proxy. Consequently, the score function in \eqref{eq:ideal-score} is replaced by the following relaxation:  

\begin{equation}
\hat{\ses}_\si(\mB) := \sup_{\vtheta, \mSigma_Z} \mathop{\pE}_{\mM\sim \sigma(\mB)} \Bigg[\sum_{k=1}^K \sum_{i = 1}^{N_k}\log p_{\tdo(I_k)(\gG)}(\vx^{(i,k)})
- \lambda \|\mM\|_1\Bigg],
\label{eq:score-relaxed}
\end{equation}
where we replace the expectation with respect to data distribution in \eqref{eq:ideal-score} with sum over the finite samples, $\vx^{(i,k)}$ represents the $i$-th data sample in the $k$-the interventional setting. We note that, since $p_Z = \mathcal{N}(\bm{0},\mSigma_Z)$, the covariance of the exogenous confounder vector, $\mSigma_Z$, is implicitly embedded within $p_{\tdo(I_k)(\gG)}(\vx^{(i,k)})$ in the score function. 

The optimization of the score function is carried in two steps. First, we optimize $\hat{\ses}(\mB)$ with respect to the neural network parameters $\vtheta$ and the graph structure parameters $\mB$. Next, we optimize $\hat{\ses}(\mB)$ with respect to the parameters of the exogenous noise distribution, $\mSigma_Z$. However, maximizing $\hat{\ses}_\si(\mB)$ presents two main challenges: (i) computing $\log p_X(\rvx)$ is computationally expensive due to the presence of $|\det(\mJ_{\vff_x^{(I_k)}}(\rvx))|$, which requires $\mathcal{O}(d^2)$ gradient calls, and (ii) updating $\mSigma_Z$ via stochastic gradients could lead to stability issues as $\mSigma_Z$ may loose its positive definiteness.  

We now describe how these challenges are addressed, along with the specific procedures for updating the individual model parameters. 


\subsubsection{Computing log determinant of the Jacobian} 

As discussed earlier, computing $\log |\mJ_{\vff_x^{(I_k)}}(\rvx)|$ is a significant challenge in maximizing the score function $\hat{\ses}(\sbe)$. To address this, we utilize the unbiased estimator of the log-determinant of the Jacobian introduced by \citet{iresnet}, which is based on the power series expansion of $\log (1 + x)$. Since $\vff_x^{(I_k)}(\vx) = (\vfi + \mU_k\vfg_z)^{-1} \circ (\vfi + \mU_k\vfg_x)(\vx)$
\begin{align}
    \log \big|\det\big(\mJ_{\vff_x^{(I_k)}}(\rvx)\big)\big| &= \log \big|\det\big(\mI + \mJ_{\mU_k \vfg_x}(\rvx)\big)\big| - \log \big|\det\big(\mI + \mJ_{\mU_k\vfg_z}(\rvz)\big)\big| \nonumber\\
    &= \sum_{m=1}^\infty \frac{(-1)^{m+1}}{m}\bigg[\mathrm{Tr}\big\{\mJ_{\mU_k \vfg_x}^m(\rvx)\big\} - \mathrm{Tr}\big\{\mJ_{\mU_k \vfg_z}^m(\rvz)\big\}\bigg],
    \label{eq:power-series}
\end{align}
where $\mI \in \R^{d\times d}$ denotes the identity matrix, $\mJ_{\mU_k \vfg_x}^m$ represents the Jacobian matrix raised to the $m$-th power, and $\mathrm{Tr}$ denotes the trace of matrix. The series in \eqref{eq:power-series} is guaranteed to converge if the causal functions $\vfg_x$ and $\vfg_z$ are contractive \citep{Hall2013}. 

In practice, the power series is truncated to a finite number of terms, which may introduce bias into the estimator. To mitigate this issue, we follow the stochastic approach of \citet{russianroulette}. Specifically, we sample a random cut-off point $n \sim  p_\mathbb{N}(n)$ for truncating the power series and weight the $i$-term in the finite series by the inverse probability of the series not ending at $i$. This yields the following unbiased estimator
\begin{equation}
\log\big|\det \big(\mJ_{\vff_x^{(I_k)}}(\rvx)\big)\big| = \mathop{\mathbb{E}}_{n \sim p_{\mathbb{N}}(N)}\Bigg[\sum_{m=1}^n \frac{(-1)^{m+1}}{m}\cdot\frac{\mathrm{Tr}\big\{\mJ_{\mU_k \vfg_x}^m(\rvx)\big\} - \mathrm{Tr}\big\{\mJ_{\mU_k \vfg_z}^m(\rvz)\big\}}{p_\mathbb{N}(\ell \geq m)}\Bigg].
    \label{eq:logdet-russian}
\end{equation}
The gradient calls can be reduced even further using the \emph{Hutchinson trace estimator} \citep{hutchtraceestimator}, see Appendix~\ref{app:implementation-details} for more details.

\subsubsection{Updating neural network and graph parameters.} 

In the first step of the parameter update, keeping $\mSigma_Z$ fixed, the parameters of the neural network $\vtheta$ and the graph structure $\mB$ are updated using the backpropagation algorithm with stochastic gradients. The gradient of the score function $\hat{\ses}_\si(\mB)$ with respect to $\mB$ is computed using the Straight-Through Gumbel estimator. This involves using Bernoulli samples in the forward pass while computing score, and using samples from Gumbel-Softmax distribution in the backward pass to compute the gradient, which can be differentiated using the reparameterization trick \citep{jang2017categorical}.

\subsubsection{Updating the confounder-noise distribution parameters}

In second parameter update step, we fix the value of $\vtheta$ and $\mB$ and focus on the confounder-noise distribution parameter $\mSigma_Z$. First, consider the case where no interventions are applied, i.e, $I_k = \emptyset$. Note that the dependence of $\hat{\ses}(\mB)$ on $\mSigma_Z$ arises solely from $p_Z$, which is embedded within $p_{\tdo(I_k)(\gG)}(\rvx)$. Therefore, we can thus ignore the remaining terms in $\hat{\ses}(\mB)$ and focus exclusively on $p_Z$. Let $\{\vx^{(i)}\}_{i=1}^N$ denote the observational data. From the forward map, we have $\vz^{(i)} = \vff_x(\vx^{(i)})$. Given that $p_Z = \mathcal{N}(\bm{0},\mSigma_Z)$, the relevant parts of $\hat{\ses}(\mB)$ with respect to $\mSigma_Z$, denoted as $\tilde{\mathcal{L}}(I_k)$, are expressed as:
\begin{equation}
    \tilde{\mathcal{L}}(I_k) = \sup_{\mSigma_Z} \sum_{i=1}^N -\frac{1}{2} \log |\mSigma_Z| -\frac{(\vz^{(i)})^\top\mSigma_Z^{-1}\vz^{(i)}}{2}.
    \label{eq:likelihood-sigma-z}
\end{equation}
Simplifying \eqref{eq:likelihood-sigma-z} yields a more convenient form: 
\begin{equation}
    \tilde{\mathcal{L}}(I_k) = \sup_{\mSigma_Z} \, -\text{Tr}(\mathbf{S}\mSigma_Z^{-1}) - \log |\mSigma_Z|,
    \label{eq:glasso-obj-observational}
\end{equation}
where $\mathbf{S} = \frac{1}{n}\sum_{i=1}^N \vz^{(i)}(\vz^{(i)})^{\top}$ is the sample covariance of $\rvz$. 

Maximizing \eqref{eq:glasso-obj-observational} directly using backpropagation and stochastic gradients results in stability issues as $\mSigma_Z$ may lose its positive definiteness. However, \citet{friedman2008sparse} demonstrated that the sparsity-regularized version of \eqref{eq:glasso-obj-observational} is a concave optimization problem in $\mSigma_Z^{-1}$ that can be efficiently solved by optimizing the columns of $\mSigma_Z$ individually. This is achieved by formulating the column recovery as a lasso regression problem. We adopt this strategy while updating the $\mSigma_Z$ during the maximization of $\hat{\ses}(\mB)$. 

Let $\mW = \mSigma_Z$ be the estimate of the covariance matrix. We reorder $\mW$ such that the column and row being updated can be placed at the end, resulting in the following partition
\begin{equation}
    \mW = 
    \begin{pmatrix}
        \mW_{11} & \vw_{12}\\
        \vw_{12}^\top & w_{22}
     \end{pmatrix}, 
     \quad \mS = 
     \begin{pmatrix}
        \mS_{11} & \vs_{12}\\
        \vs_{12}^\top & s_{22}         
     \end{pmatrix}.
\end{equation}
Then, as shown by \citet{friedman2008sparse}, $\vw_{12} = \mW_{11}\vbeta$, where $\vbeta$ is the solution to the following lasso regression problem, denoted as $\texttt{lasso}(\mW_{11}, \vs_{12}, \rho)$:
\begin{equation}
    \min_{\vbeta} \frac{1}{2} \|\mW_{11}^{1/2}\vbeta - \vy\|^2 + \rho\|\vbeta\|_1,
\end{equation}
where $\vy = \mW_{11}^{-1/2}\vs_{12}$, and $\rho$ is the regularization constant that promotes sparsity in $\mSigma_Z^{-1}$. 

In an interventional setting $I_k$, the dependence of $\score$ on $\mSigma_Z$ arrises from the marginal distribution of $\rvz$ restricted to components indexed by $\su_k$, i.e., purely observed nodes. Since $\rvz$ follows a Gaussian distribution, $\rvz_{\su_k}$ also follows a Gaussian distribution with $\rvz_{\su_k} \sim \mathcal{N}(\bm{0},\tilde{\mSigma}_{I_k})$. From the properties of Gaussian distribution \citep{jordan2003introduction}, we have $\tilde{\mSigma}_{I_k} = (\mSigma_Z)_{\su_k, \su_k}$. 
Consequently, for the interventional setting $I_k$, \eqref{eq:glasso-obj-observational} becomes
\begin{equation}
    \mathcal{L}(I_k) = \sup_{\mSigma_Z} \, -\text{Tr}\big(\mathbf{S}_{I_k}(\mSigma_Z)_{\su_k, \su_k}^{-1}\big) - \log \big|(\mSigma_Z)_{\su_k, \su_k}\big|,
    \label{eq:glasso-obj-interventional}
\end{equation}
where $\mS_{I_k} = \frac{1}{n}\sum_{i=1}^N \vz_{\su_k}^{(i)}(\vz_{\su_k}^{(i)})^{\top}$ is the sample covariance of $\rvz$ corresponding to the purely observed nodes. In this case, we set $\mW = (\mSigma_Z)_{\su_k, \su_k}$ and the rest of the update procedure remains the same. The overall parameter update procedure is summarized in Algorithm~\ref{alg:parameter-update} in Appendix~\ref{app:implementation-details}.

\begin{figure}[t]
    \centering
    \begin{subfigure}{0.49\linewidth}
    \includegraphics[width=\linewidth]{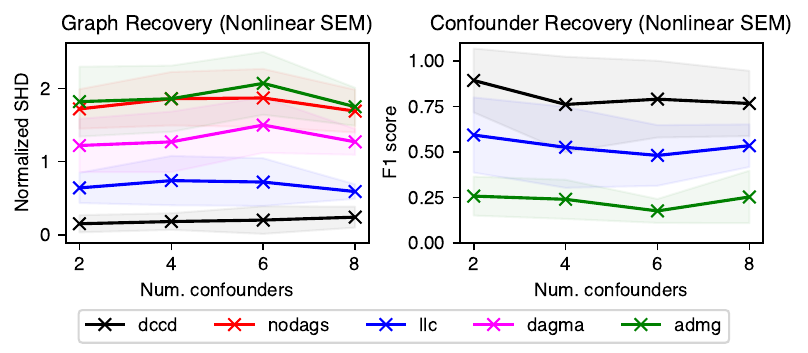}
    \caption{Varying number of confounders}
    \label{fig:num-confounder}
    \end{subfigure}
    \begin{subfigure}{0.49\linewidth}
    \includegraphics[width=\linewidth]{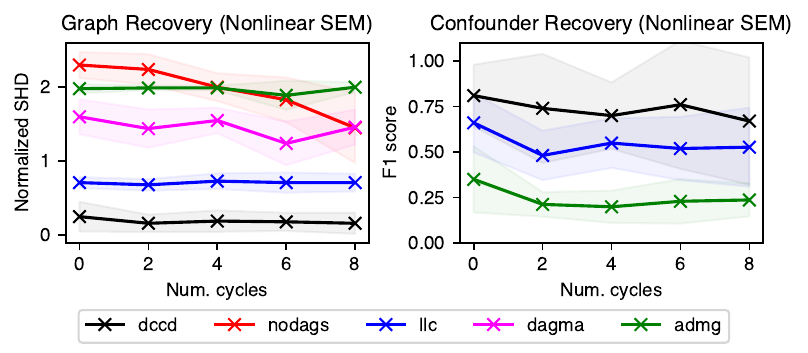}
    \caption{Varying number of cycles}
    \label{fig:num-cycles}
    \end{subfigure}
    
    \begin{subfigure}{0.49\linewidth}
    \includegraphics[width=\linewidth]{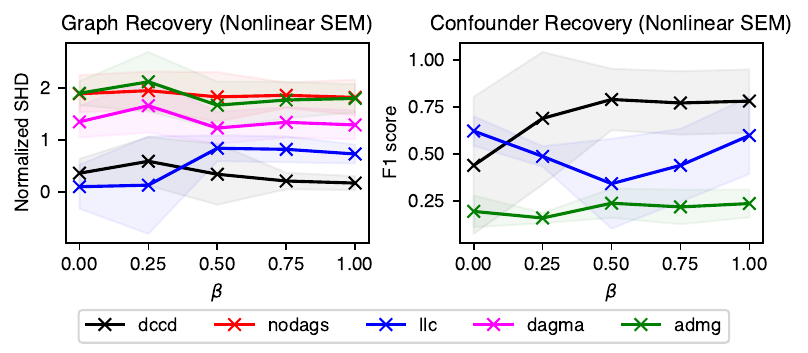}
    \caption{Varying number of degree of nonlinearity}
    \label{fig:var-nonlinearity}
    \end{subfigure}
    \begin{subfigure}{0.49\linewidth}
    \includegraphics[width=\linewidth]{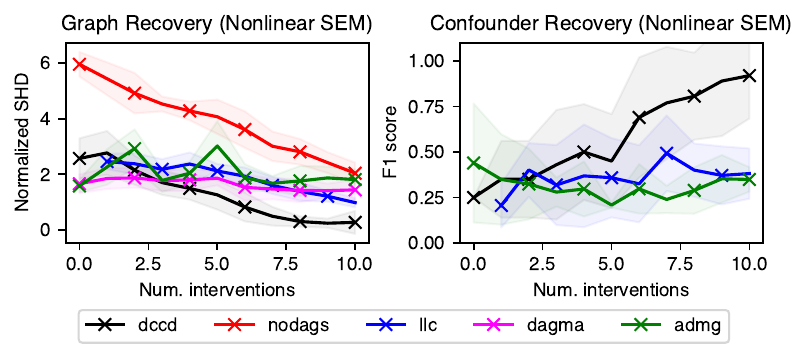}
    \caption{Varying number of interventions}
    \label{fig:num-interventions}
    \end{subfigure}
    
    \caption{
    Performance of causal graph and confounder recovery under varying problem dimensions. In all the cases the number of observed variables is fixed at $d=10$.  
    (Top row, left column) number of latent confounders ranges from 2 to 8, (top row, right column) number of cycles ranges from 0 to 8.  
    (Bottom row, left column) the degree of nonlinearity $\beta$ is varied between 0 and 1,  (bottom row, right column) the number of training interventions is varied between 0 and 10.
    }
    \label{fig:num-confounder-exp}
\end{figure}

\section{Experiments}\label{sec:experiments}
The code for DCCD-CONF is available at the repository: \url{https://github.com/muralikgs/dccd_conf}.

We evaluated DCCD-CONF on both synthetic and real-world datasets, comparing its performance against several state-of-the-art baselines: NODAGS-Flow~\citep{sethuraman2023nodags}, LLC~\citep{llc}, DAGMA~\citep{bello2022dagma}, and the linear ADMG recovery method proposed by \citet{bhattacharya2021differentiable} (which we refer to as ADMG). NODAGS-Flow learns nonlinear cyclic causal graphs but does not model unmeasured confounders. LLC accounts for confounders but is limited to linear cyclic SEMs. DAGMA handles nonlinearity under causal sufficiency while being limited to acyclic graphs. ADMG handles confounding but is limited to acyclic graphs and linear SEMs. Note that both DAGMA and ADMG do not natively support interventional data and hence we use these models in combination with the Joint Causal Inference (JCI) framework~\citep{mooij2020joint} and treat interventions as multiple contexts. We also include a comparison between DCCD-CONF and two constraint based models LiNGAM-MMI~\citep{lingam-mmi} and JCI-FCI \citep{mooij2020joint} in the Appendix (see Appendix~\ref{app:additional-experiments}).

\subsection{Synthetic data}

In all synthetic experiments, the cyclic graphs were generated using Erd\H{o}s-R\'enyi (ER) random graph model with the outgoing edge density set to 2. We evaluated DCCD-CONF and the baselines on both linear as well as nonlinear SEMs described in Section~\ref{sec:problem-setup}. Our training data set consists of observational data and single-node interventional over all the nodes in the graph, i.e, $\mathcal{I} = \big\{\emptyset, \{1\}, \ldots, \{d\}\big\}$ (unless stated otherwise), with $N_k = 500$ samples per intervention. Furthermore, in all the experiments presented here, the SEM was constrained to be contractive. However, we also compare the performance of DCCD-CONF to the baselines on non-contractive SEMs in the appendix. For causal graph recovery (directed edges), we use the \emph{normalized structural Hamming distance} (SHD) as the error metric. SHD counts the number of operations (addition, deletion, and reversal) needed to match the estimated causal graph to the ground truth, and normalization is done with respect to the number of nodes in the graph (lower the better). For confounder identification (bidirectional edges), we compare the non-diagonal entries of the estimated confounder-noise covariance matrix to those of the ground truth. We use F1 score as the error metric (higher the better). More details regarding the experimental setup is provided in Appendix~\ref{app:implementation-details}. 

\paragraph{Impact of confounder count. } We evaluate the performance of DCCD-CONF and the baselines using the previously defined error metrics, varying the confounder ratio (number of confounders divided by the number of nodes) from 0.2 to 0.8. In this case, the number of nodes in the graph is set to $d=10$. The results, summarized in Figure~\ref{fig:num-confounder}, show that DCCD-CONF consistently achieves lower SHD across all confounder ratios in both linear and nonlinear SEMs. Notably, in nonlinear SEMs, DCCD-CONF outperforms all baselines in causal graph recovery. Additionally, it demonstrates competitive results in confounder identification, highlighting its robustness in both tasks.

\paragraph{Impact of number of cycles. } With $d = 10$ nodes and a confounder ratio of 0.3, we vary the number of cycles in the graph from 0 to 8. Figure~\ref{fig:num-cycles} compares the performance of DCCD-CONF with the baselines under this setting. As shown, increasing the number of cycles does not lead to any noticeable degradation in performance for either directed or bidirected edge recovery.

\paragraph{Impact of degree of nonlinearity.} In this experiment, we vary the degree of nonlinearity in the SEM by adjusting $\beta$ between 0 and 1, where
\[
\vx = (1 - \beta)(\mathbf{W}^\top \vx + \vz) + \beta \tanh(\mW^\top \vx + \vz).
\]
The SEM is fully linear when $\beta = 0$ and fully nonlinear when $\beta = 1$. Figure~\ref{fig:var-nonlinearity} summarizes the results. As shown, DCCD-CONF attains the highest performance as $\beta$ approaches one, for both directed and bidirected edge recovery. When $\beta$ is small (i.e., the system is more linear), LLC slightly outperforms DCCD-CONF, with both models performing comparably around $\beta = 0.25$.

\paragraph{Impact of number of interventions. } In this section, we evaluate graph recovery performance as the number of training interventions $K$ varies from $0$ to $d$, with $d = 10$ fixed. The case $K = 0$ corresponds to the observational dataset. Results for the nonlinear SEM setting are presented in Figure~\ref{fig:num-interventions}. As illustrated, with fewer interventions all DCCD-CONF and the baselines tend to exhibit similar performance (less then 3 interventions). As the numbre of interventions increase, the performance gap widens with DCCD-CONF dominating all of the baselines. It is also worth noting that LLC cannot operate in the purely observational setting ($K = 0$).

\begin{wrapfigure}{r}{0.5\textwidth}
    \vspace{-0.5cm}
    \centering
    \includegraphics[width=\linewidth]{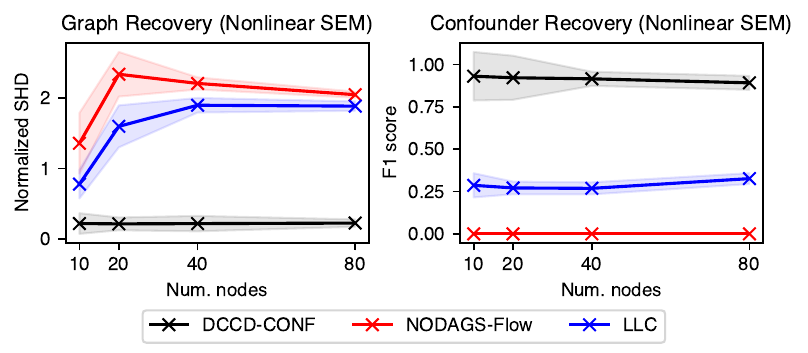}
    \caption{Performance comparison between DCCD-CONF and the baselines and $d$ is varied between 10 and 80.}
    \vspace{-0.5cm}
    \label{fig:num-nodes}
\end{wrapfigure}

\paragraph{Scaling with nodes. } We compare the performance of DCCD-CONF and the baselines as the number of nodes ($d$) varies from 10 to 80, with results summarized in Figure~\ref{fig:num-nodes}. The number of confounders is set to $0.3d$. As the number of nodes increases, SHD rises across all methods, reflecting the increased difficulty of causal graph recovery in larger graphs. However, DCCD-CONF consistently outperforms the baselines in many cases, achieving lower SHD and higher F1 score, suggesting superior scalability with increasing graph size.

\subsection{Real World data}

\begin{wraptable}{r}{0.68\textwidth}
\vspace{-0.4cm}
    \centering
    \caption{Results on Perturb-CITE-seq~\cite{frangieh2021multimodal} gene perturbation dataset. The table presents the average Negative Log-Likelihood (NLL) on the test set, averaged over multiple trials (standard deviation is reported within parentheses).}
    \begin{tabular}{l|c|c|c}
    \hline
        \textbf{Method} & \textbf{Control} & \textbf{Co-Culture} & \textbf{IFN}-$\gamma$ \\ \hline\hline
        DCCD-CONF & \textbf{1.375} (0.103) & \textbf{1.245} (0.039) & \textbf{1.235} (0.338) \\ 
        NODAGS & 1.465 (0.015) & 1.406 (0.012) & 1.504 (0.009) \\ 
        LLC & 1.385 (0.039) & 1.325 (0.029) & 1.430 (0.048) \\ 
        DCDI  & 1.523 (0.036) & 1.367 (0.018) & 1.517 (0.041) \\ \hline
    \end{tabular}
    \vspace{-0.3cm}
    \label{tab:perturb-cite-seq}
\end{wraptable}

We evaluate DCCD-CONF on learning the causal graph structure of a gene regulatory network from real-world gene expression data with genetic interventions. Specifically, we use the Perturb-CITE-seq dataset \citep{frangieh2021multimodal}, which contains gene expression data from 218,331 melanoma cells across three conditions: (i) control, (ii) co-culture, and (iii) IFN-$\gamma$. Due to computational constraints, we restrict our analysis to a subset of 61 genes from the ~20,000 genes in the genome, following the experimental setup of \citet{sethuraman2023nodags} (see Appendix~\ref{app:implementation-details} for details). Each cell condition is treated as a separate dataset consisting of single-node interventions on the selected 61 genes.

Since the dataset does not provide a ground truth causal graph, SHD cannot be used for direct performance comparison. Instead, we assess DCCD-CONF and the baselines based on predictive performance over unseen interventions. To evaluate performance, we split each dataset 90-10, using the smaller portion as the test set, and measure performance using negative log-likelihood (NLL) on the test data after model training (lower the better). The results are presented in Table~\ref{tab:perturb-cite-seq}. From Table~\ref{tab:perturb-cite-seq}, we can see that DCCD-CONF outperforms all the baselines across all the three cell conditions, showcasing the efficacy of the model and prevalence of confounders in real-world systems. Additionally, we also report the performance of DCCD-CONF and the baselines with respect to MAE on the test data error metric in Table~\ref{tab:perturb-cite-seq-mae} in Appendix~\ref{app:additional-experiments} with two additional baselines: DCDFG~\cite{dcdfg} and Bicycle~\cite{rohbeck2024bicycle}.

\paragraph{Additional experiments. } Additionally, we also provide results in Appendix~\ref{app:additional-experiments} for the following settings: (i) performance comparison on non-contractive SEMs when the underlying graph is restricted to DAGs, (ii) performance comparison as a function of training data size, (iii) performance comparison as a function of noise variance, (iv) performance comparison as a function outgoing edge density, and (v) performance comparison between DCCD-CONF and additional baselines: JCI-FCI and LiNGAM-MMI. 

\section{Discussion}\label{sec:discussion}

In this work, we introduced DCCD-CONF, a novel differentiable causal discovery framework that handles directed cycles and unmeasured confounders, assuming Gaussian exogenous noise. It models causal mechanisms via neural networks and learns the causal graph structure by maximizing penalized data likelihood.
We provide consistency guarantees in the large-sample regime and demonstrate, through extensive synthetic and real-world experiments, that DCCD-CONF outperforms state-of-the-art methods, maintaining robustness with increasing confounders and graph size. On the Perturb-CITE-seq dataset, our model achieves superior predictive accuracy.

While the focus of this work is limited to Gaussian exogenous noise, we plan to investigate other noise distributions for future research. Other future directions include supporting missing data, and relaxing interventional assumptions by incorporating soft interventions and unknown interventional targets.

\section*{Acknowledgment}

This material is based upon work supported by the National Science Foundation under Grant No. CCF-2007807 and 2502298. 

\bibliography{references}

@article{bello2022dagma,
  title={Dagma: Learning dags via m-matrices and a log-determinant acyclicity characterization},
  author={Bello, Kevin and Aragam, Bryon and Ravikumar, Pradeep},
  journal={Advances in Neural Information Processing Systems},
  volume={35},
  pages={8226--8239},
  year={2022}
}

@article{shimizu2006linear,
  title={A linear non-Gaussian acyclic model for causal discovery.},
  author={Shimizu, Shohei and Hoyer, Patrik O and Hyv{\"a}rinen, Aapo and Kerminen, Antti and Jordan, Michael},
  journal={Journal of Machine Learning Research},
  volume={7},
  number={10},
  year={2006}
}

@techreport{chernozhukov2024double,
  title={Double/debiased machine learning for treatment and causal parameters},
  author={Chernozhukov, Victor and Chetverikov, Denis and Demirer, Mert and Duflo, Esther and Hansen, Christian and Newey, Whitney and Robins, James},
  year={2024}
}

@article{abadie2024doubly,
  title={Doubly robust inference in causal latent factor models},
  author={Abadie, Alberto and Agarwal, Anish and Dwivedi, Raaz and Shah, Abhin},
  journal={arXiv preprint arXiv:2402.11652},
  year={2024}
}

@article{ryvsavy2025exmag,
  title={ExMAG: Learning of Maximally Ancestral Graphs},
  author={Ry{\v{s}}av{\`y}, Petr and Ryt{\'\i}{\v{r}}, Pavel and He, Xiaoyu and Korpas, Georgios and Mare{\v{c}}ek, Jakub},
  journal={arXiv preprint arXiv:2503.08245},
  year={2025}
}

@inproceedings{chen2021integer,
  title={Integer programming for causal structure learning in the presence of latent variables},
  author={Chen, Rui and Dash, Sanjeeb and Gao, Tian},
  booktitle={International Conference on Machine Learning},
  pages={1550--1560},
  year={2021},
  organization={PMLR}
}

@inproceedings{rohbeck2024bicycle,
  title={Bicycle: Intervention-based causal discovery with cycles},
  author={Rohbeck, Martin and Clarke, Brian and Mikulik, Katharina and Pettet, Alexandra and Stegle, Oliver and Ueltzh{\"o}ffer, Kai},
  booktitle={Causal Learning and Reasoning},
  pages={209--242},
  year={2024},
  organization={PMLR}
}

@INPROCEEDINGS{lingam-mmi,
  author={Suzuki, Joe and Yang, Tian-Le},
  booktitle={2024 IEEE International Symposium on Information Theory (ISIT)}, 
  title={Generalization of LiNGAM that Allows Confounding}, 
  year={2024},
  volume={},
  number={},
  pages={3540-3545},
  keywords={Shortest path problem;Additive noise;Computational modeling;Information theory},
  doi={10.1109/ISIT57864.2024.10619691}}

@InProceedings{pmlr-vR1-spirtes97b,
  title = 	 {A Polynomial Time Algorithm for Determining DAG Equivalence in the Presence of Latent Variables and Selection Bias},
  author =       {Spirtes, Peter and Richardson, Thomas S.},
  booktitle = 	 {Proceedings of the Sixth International Workshop on Artificial Intelligence and Statistics},
  pages = 	 {489--500},
  year = 	 {1997},
  editor = 	 {Madigan, David and Smyth, Padhraic},
  volume = 	 {R1},
  series = 	 {Proceedings of Machine Learning Research},
  month = 	 {04--07 Jan},
  publisher =    {PMLR},
  pdf = 	 {http://proceedings.mlr.press/r1/spirtes97b/spirtes97b.pdf},
  url = 	 {https://proceedings.mlr.press/r1/spirtes97b.html},
  note =         {Reissued by PMLR on 30 March 2021.}
}

@article{zhang2008causal,
  title={Causal Reasoning with Ancestral Graphs.},
  author={Zhang, Jiji},
  journal={Journal of Machine Learning Research},
  volume={9},
  number={7},
  year={2008}
}

@article{Ali_2009,
   title={Markov equivalence for ancestral graphs},
   volume={37},
   ISSN={0090-5364},
   url={http://dx.doi.org/10.1214/08-AOS626},
   DOI={10.1214/08-aos626},
   number={5B},
   journal={The Annals of Statistics},
   publisher={Institute of Mathematical Statistics},
   author={Ali, R. Ayesha and Richardson, Thomas S. and Spirtes, Peter},
   year={2009},
   month=oct }

@article{broyden1965class,
  title={A class of methods for solving nonlinear simultaneous equations},
  author={Broyden, Charles G},
  journal={Mathematics of computation},
  volume={19},
  number={92},
  pages={577--593},
  year={1965},
  publisher={JSTOR}
}

@article{mooij2020joint,
  title={Joint causal inference from multiple contexts},
  author={Mooij, Joris M and Magliacane, Sara and Claassen, Tom},
  journal={Journal of machine learning research},
  volume={21},
  number={99},
  pages={1--108},
  year={2020}
}

@article{kocaoglu2019characterization,
  title={Characterization and learning of causal graphs with latent variables from soft interventions},
  author={Kocaoglu, Murat and Jaber, Amin and Shanmugam, Karthikeyan and Bareinboim, Elias},
  journal={Advances in Neural Information Processing Systems},
  volume={32},
  year={2019}
}

@article{jaber2020causal,
  title={Causal discovery from soft interventions with unknown targets: Characterization and learning},
  author={Jaber, Amin and Kocaoglu, Murat and Shanmugam, Karthikeyan and Bareinboim, Elias},
  journal={Advances in neural information processing systems},
  volume={33},
  pages={9551--9561},
  year={2020}
}

@inproceedings{
lu2021implicit,
title={Implicit Normalizing Flows},
author={Cheng Lu and Jianfei Chen and Chongxuan Li and Qiuhao Wang and Jun Zhu},
booktitle={International Conference on Learning Representations},
year={2021},
url={https://openreview.net/forum?id=8PS8m9oYtNy}
}

@article{forre2017markov,
  title={Markov properties for graphical models with cycles and latent variables},
  author={Forr{\'e}, Patrick and Mooij, Joris M},
  journal={arXiv preprint arXiv:1710.08775},
  year={2017}
}

@article{bongers2021foundations,
  title={Foundations of structural causal models with cycles and latent variables},
  author={Bongers, Stephan and Forr{\'e}, Patrick and Peters, Jonas and Mooij, Joris M},
  journal={The Annals of Statistics},
  volume={49},
  number={5},
  pages={2885--2915},
  year={2021},
  publisher={Institute of Mathematical Statistics}
}

@article{spirtes2013directed,
  title={Directed cyclic graphical representations of feedback models},
  author={Spirtes, Peter L},
  journal={arXiv preprint arXiv:1302.4982},
  year={2013}
}

@inproceedings{richardson2009factorization, author = {Richardson, Thomas S.}, title = {A factorization criterion for acyclic directed mixed graphs}, year = {2009}, isbn = {9780974903958}, publisher = {AUAI Press}, address = {Arlington, Virginia, USA}, abstract = {Acyclic directed mixed graphs, also known as semi-Markov models represent the conditional independence structure induced on an observed margin by a DAG model with latent variables. In this paper we present a factorization criterion for these models that is equivalent to the global Markov property given by (the natural extension of) dseparation.}, booktitle = {Proceedings of the Twenty-Fifth Conference on Uncertainty in Artificial Intelligence}, pages = {462–470}, numpages = {9}, location = {Montreal, Quebec, Canada}, series = {UAI '09} }

@article{lauritzen1997local,
  title={Local computation with valuations from a commutative semigroup},
  author={Lauritzen, Steffen L and Jensen, Finn Verner},
  journal={Annals of Mathematics and Artificial Intelligence},
  volume={21},
  pages={51--69},
  year={1997},
  publisher={Springer}
}

@inproceedings{kingma2015adam,
  author       = {Diederik P. Kingma and
                  Jimmy Ba},
  editor       = {Yoshua Bengio and
                  Yann LeCun},
  title        = {Adam: {A} Method for Stochastic Optimization},
  booktitle    = {3rd International Conference on Learning Representations, {ICLR} 2015,
                  San Diego, CA, USA, May 7-9, 2015, Conference Track Proceedings},
  year         = {2015},
  url          = {http://arxiv.org/abs/1412.6980},
  timestamp    = {Thu, 25 Jul 2019 14:25:37 +0200},
  biburl       = {https://dblp.org/rec/journals/corr/KingmaB14.bib},
  bibsource    = {dblp computer science bibliography, https://dblp.org}
}

@book{sem1,
  title={Structural equations with latent variables},
  author={Bollen, Kenneth A},
  volume={210},
  year={1989},
  publisher={John Wiley \& Sons}
}

@book{sem2, place={Cambridge}, edition={2}, title={Causality}, DOI={10.1017/CBO9780511803161}, publisher={Cambridge University Press}, author={Pearl, Judea}, year={2009}}

@article{llc,
  title={Learning linear cyclic causal models with latent variables},
  author={Hyttinen, Antti and Eberhardt, Frederick and Hoyer, Patrik O},
  journal={The Journal of Machine Learning Research},
  volume={13},
  number={1},
  pages={3387--3439},
  year={2012},
  publisher={JMLR. org}
}

@inproceedings{sethuraman2023nodags,
  title={NODAGS-Flow: Nonlinear cyclic causal structure learning},
  author={Sethuraman, Muralikrishnna G and Lopez, Romain and Mohan, Rahul and Fekri, Faramarz and Biancalani, Tommaso and H{\"u}tter, Jan-Christian},
  booktitle={International Conference on Artificial Intelligence and Statistics},
  pages={6371--6387},
  year={2023},
  organization={PMLR}
}

@article{peters2014identifiability,
  title={Identifiability of Gaussian structural equation models with equal error variances},
  author={Peters, Jonas and B{\"u}hlmann, Peter},
  journal={Biometrika},
  volume={101},
  number={1},
  pages={219--228},
  year={2014},
  publisher={Oxford University Press}
}

@inproceedings{cyclic_equil,
  title={Cyclic Causal Discovery from Continuous Equilibrium Data},
  author={Mooij, Joris M and Heskes, Tom},
  booktitle={Uncertainty in Artificial Intelligence},
  year={2013},
}

@inproceedings{dcdfg,
  title={Large-Scale Differentiable Causal Discovery of Factor Graphs},
  author={Lopez, Romain and H{\"u}tter, Jan-Christian and Pritchard, Jonathan K and Regev, Aviv},
  booktitle={Advances in Neural Information Processing Systems},
  year={2022}
}

@article{dcdi,
  title={Differentiable causal discovery from interventional data},
  author={Brouillard, Philippe and Lachapelle, S{\'e}bastien and Lacoste, Alexandre and Lacoste-Julien, Simon and Drouin, Alexandre},
  journal={Advances in Neural Information Processing Systems},
  volume={33},
  pages={21865--21877},
  year={2020}
}

@inproceedings{iresnet,
  title={Invertible residual networks},
  author={Behrmann, Jens and Grathwohl, Will and Chen, Ricky TQ and Duvenaud, David and Jacobsen, J{\"o}rn-Henrik},
  booktitle={International Conference on Machine Learning},
  pages={573--582},
  year={2019},
  organization={PMLR}
}

@Inbook{Hall2013,
author="Hall, Brian C.",
title="Lie Groups, Lie Algebras, and Representations",
bookTitle="Quantum Theory for Mathematicians",
year="2013",
publisher="Springer New York",
address="New York, NY",
pages="333--366",
abstract="An important concept in physics is that of symmetry, whether it be rotational symmetry for many physical systems or Lorentz symmetry in relativistic systems. In many cases, the group of symmetries of a system is a continuous group, that is, a group that is parameterized by one or more real parameters. More precisely, the symmetry group is often a Lie group, that is, a smooth manifold endowed with a group structure in such a way that operations of inversion and group multiplication are smooth. The tangent space at the identity in a Lie group has a natural ``bracket'' operation that makes the tangent space into a Lie algebra. The Lie algebra of a Lie group encodes many of the properties of the Lie group, and yet the Lie algebra is easier to work with because it is a linear space.",
isbn="978-1-4614-7116-5",
doi="10.1007/978-1-4614-7116-5_16",
url="https://doi.org/10.1007/978-1-4614-7116-5_16"
}

@misc{jordan2003introduction,
  title={An introduction to probabilistic graphical models},
  author={Jordan, Michael I},
  year={2003},
  publisher={preparation}
}

@article{russianroulette,
  title={Residual flows for invertible generative modeling},
  author={Chen, Ricky TQ and Behrmann, Jens and Duvenaud, David K and Jacobsen, J{\"o}rn-Henrik},
  journal={Advances in Neural Information Processing Systems},
  volume={32},
  year={2019}
}

@article{hutchtraceestimator,
  title={A stochastic estimator of the trace of the influence matrix for {L}aplacian smoothing splines},
  author={Hutchinson, Michael F},
  journal={Communications in Statistics-Simulation and Computation},
  volume={18},
  number={3},
  pages={1059--1076},
  year={1989},
  publisher={Taylor \& Francis}
}

@inproceedings{
jang2017categorical,
title={Categorical Reparameterization with Gumbel-Softmax},
author={Eric Jang and Shixiang Gu and Ben Poole},
booktitle={International Conference on Learning Representations},
year={2017},
url={https://openreview.net/forum?id=rkE3y85ee}
}

@article{friedman2008sparse,
  title={Sparse inverse covariance estimation with the graphical lasso},
  author={Friedman, Jerome and Hastie, Trevor and Tibshirani, Robert},
  journal={Biostatistics},
  volume={9},
  number={3},
  pages={432--441},
  year={2008},
  publisher={Oxford University Press}
}

@InProceedings{pmlr-vR3-spirtes01a,
  title = 	 {An Anytime Algorithm for Causal Inference},
  author =       {Spirtes, Peter},
  booktitle = 	 {Proceedings of the Eighth International Workshop on Artificial Intelligence and Statistics},
  pages = 	 {278--285},
  year = 	 {2001},
  editor = 	 {Richardson, Thomas S. and Jaakkola, Tommi S.},
  volume = 	 {R3},
  series = 	 {Proceedings of Machine Learning Research},
  month = 	 {04--07 Jan},
  publisher =    {PMLR},
  pdf = 	 {http://proceedings.mlr.press/r3/spirtes01a/spirtes01a.pdf},
  url = 	 {https://proceedings.mlr.press/r3/spirtes01a.html},
  abstract = 	 {The Fast Casual Inference (FCI) algorithm searches for features common to observationally equivalent sets of causal directed acyclic graphs. It is correct in the large sample limit with probability one even if there is a possibility of hidden variables and selection bias. In the worst case, the number of conditional independence tests performed by the algorithm grows exponentially with the number of variables in the data set. This affects both the speed of the algorithm and the accuracy of the algorithm on small samples, because tests of independence conditional on large numbers of variables have very low power. In this paper, I prove that the FCI algorithm can be interrupted at any stage and asked for output. The output from the interrupted algorithm is still correct with probability one in the large sample limit, although possibly less informative (in the sense that it answers "Can’t tell" for a larger number of questions) than if the FCI algorithm had been allowed to continue uninterrupted.},
  note =         {Reissued by PMLR on 31 March 2021.}
}

@article{frangieh2021multimodal,
  title={Multimodal pooled {Perturb}-{CITE}-{seq} screens in patient models define mechanisms of cancer immune evasion},
  author={Frangieh, Chris J and Melms, Johannes C and Thakore, Pratiksha I and Geiger-Schuller, Kathryn R and Ho, Patricia and Luoma, Adrienne M and Cleary, Brian and Jerby-Arnon, Livnat and Malu, Shruti and Cuoco, Michael S and others},
  journal={Nature genetics},
  volume={53},
  number={3},
  pages={332--341},
  year={2021},
  publisher={Nature Publishing Group}
}

@article{sachs_causal_2005,
	title = {Causal protein-signaling networks derived from multiparameter single-cell data},
	volume = {308},
	number = {5721},
	journal = {Science},
	author = {Sachs, Karen and Perez, Omar and Pe'er, Dana and Lauffenburger, Douglas A. and Nolan, Garry P.},
	year = {2005},
	pages = {523--529},
}

@article{zhang_integrated_2013,
	title = {Integrated systems approach identifies genetic nodes and networks in late-onset {Alzheimer}’s disease},
	volume = {153},
	number = {3},
	journal = {Cell},
	author = {Zhang, Bin and Gaiteri, Chris and Bodea, Liviu-Gabriel and Wang, Zhi and McElwee, Joshua and Podtelezhnikov, Alexei A. and Zhang, Chunsheng and Xie, Tao and Tran, Linh and Dobrin, Radu},
	year = {2013},
	pages = {707--720},
	file = {Full Text:/Users/huettej1/Zotero/storage/DUUX4G3G/S0092867413003875.html:text/html},
}

@article{segal_learning_2005,
	title = {Learning module networks.},
	volume = {6},
	number = {4},
	journal = {Journal of Machine Learning Research},
	author = {Segal, Eran and Pe'er, Dana and Regev, Aviv and Koller, Daphne and Friedman, Nir and Jaakkola, Tommi},
	year = {2005},
}

@book{sprites,
  title={Causation, prediction, and search},
  author={Spirtes, Peter and Glymour, Clark N and Scheines, Richard and Heckerman, David},
  year={2000},
  publisher={MIT press}
}

@article{triantafillou2015constraint,
  title={Constraint-based causal discovery from multiple interventions over overlapping variable sets},
  author={Triantafillou, Sofia and Tsamardinos, Ioannis},
  journal={The Journal of Machine Learning Research},
  volume={16},
  number={1},
  pages={2147--2205},
  year={2015},
  publisher={JMLR. org}
}

@article{heinze2018invariant,
  title={Invariant causal prediction for nonlinear models},
  author={Heinze-Deml, Christina and Peters, Jonas and Meinshausen, Nicolai},
  journal={Journal of Causal Inference},
  volume={6},
  number={2},
  year={2018},
  publisher={De Gruyter}
}

@phdthesis{Meek1997GraphicalMS,
  title={Graphical Models: Selecting causal and statistical models},
  author={Meek, Christopher},
  year={1997},
  school={Carnegie Mellon University}
}

@article{hauser2012characterization,
  title={Characterization and greedy learning of interventional Markov equivalence classes of directed acyclic graphs},
  author={Hauser, Alain and B{\"u}hlmann, Peter},
  journal={The Journal of Machine Learning Research},
  volume={13},
  number={1},
  pages={2409--2464},
  year={2012},
  publisher={JMLR. org}
}

@inproceedings{notears,
 author = {Zheng, Xun and Aragam, Bryon and Ravikumar, Pradeep K and Xing, Eric P},
 booktitle = {Advances in Neural Information Processing Systems},
 editor = {S. Bengio and H. Wallach and H. Larochelle and K. Grauman and N. Cesa-Bianchi and R. Garnett},
 pages = {},
 title = {{DAG}s with {NO} {TEARS}: Continuous Optimization for Structure Learning},
 url = {https://proceedings.neurips.cc/paper/2018/file/e347c51419ffb23ca3fd5050202f9c3d-Paper.pdf},
 volume = {31},
 year = {2018}
}

@inproceedings{ng2022masked,
  title={Masked gradient-based causal structure learning},
  author={Ng, Ignavier and Zhu, Shengyu and Fang, Zhuangyan and Li, Haoyang and Chen, Zhitang and Wang, Jun},
  booktitle={Proceedings of the 2022 SIAM International Conference on Data Mining (SDM)},
  pages={424--432},
  year={2022},
  organization={SIAM}
}

@inproceedings{yu2019dag,
  title={{DAG-GNN}: {DAG} structure learning with graph neural networks},
  author={Yu, Yue and Chen, Jie and Gao, Tian and Yu, Mo},
  booktitle={International Conference on Machine Learning},
  pages={7154--7163},
  year={2019},
  organization={PMLR}
}

@article{ng2020role,
  title={On the role of sparsity and {DAG} constraints for learning linear dags},
  author={Ng, Ignavier and Ghassami, AmirEmad and Zhang, Kun},
  journal={Advances in Neural Information Processing Systems},
  volume={33},
  pages={17943--17954},
  year={2020}
}

@InProceedings{zheng20learning,
  title = 	 {Learning Sparse Nonparametric {DAG}s},
  author =       {Zheng, Xun and Dan, Chen and Aragam, Bryon and Ravikumar, Pradeep and Xing, Eric},
  booktitle = 	 {Proceedings of the Twenty Third International Conference on Artificial Intelligence and Statistics},
  pages = 	 {3414--3425},
  year = 	 {2020},
  editor = 	 {Chiappa, Silvia and Calandra, Roberto},
  volume = 	 {108},
  month = 	 {26--28 Aug},

}

@inproceedings{lee2019scaling,
  title={Scaling structural learning with {NO-BEARS} to infer causal transcriptome networks},
  author={Lee, Hao-Chih and Danieletto, Matteo and Miotto, Riccardo and Cherng, Sarah T and Dudley, Joel T},
  booktitle={Pacific Symposium on Biocomputing 2020},
  pages={391--402},
  year={2019},
  organization={World Scientific}
}

@article{tsamardinos2006max,
  title={The max-min hill-climbing Bayesian network structure learning algorithm},
  author={Tsamardinos, Ioannis and Brown, Laura E and Aliferis, Constantin F},
  journal={Machine learning},
  volume={65},
  number={1},
  pages={31--78},
  year={2006},
  publisher={Springer}
}

@article{solus2017consistency,
  title={Consistency guarantees for permutation-based causal inference algorithms},
  author={Solus, Liam and Wang, Yuhao and Matejovicova, Lenka and Uhler, Caroline},
  journal={arXiv preprint arXiv:1702.03530},
  year={2017},
  publisher={Tech. rep., available at}
}

@article{wang2017permutation,
  title={Permutation-based causal inference algorithms with interventions},
  author={Wang, Yuhao and Solus, Liam and Yang, Karren and Uhler, Caroline},
  journal={Advances in Neural Information Processing Systems},
  volume={30},
  year={2017}
}

@article{freimer_systematic_2022,
	title = {Systematic discovery and perturbation of regulatory genes in human {T} cells reveals the architecture of immune networks},
	copyright = {2022 The Author(s), under exclusive licence to Springer Nature America, Inc.},
	issn = {1546-1718},
	url = {https://www.nature.com/articles/s41588-022-01106-y},
	doi = {10.1038/s41588-022-01106-y},
	abstract = {Gene regulatory networks ensure that important genes are expressed at precise levels. When gene expression is sufficiently perturbed, it can lead to disease. To understand how gene expression disruptions percolate through a network, we must first map connections between regulatory genes and their downstream targets. However, we lack comprehensive knowledge of the upstream regulators of most genes. Here, we developed an approach for systematic discovery of upstream regulators of critical immune factors—IL2RA, IL-2 and CTLA4—in primary human T cells. Then, we mapped the network of the target genes of these regulators and putative cis-regulatory elements using CRISPR perturbations, RNA-seq and ATAC-seq. These regulators form densely interconnected networks with extensive feedback loops. Furthermore, this network is enriched for immune-associated disease variants and genes. These results provide insight into how immune-associated disease genes are regulated in T cells and broader principles about the structure of human gene regulatory networks.},
	language = {en},
	urldate = {2022-07-28},
	journal = {Nature Genetics},
	author = {Freimer, Jacob W. and Shaked, Oren and Naqvi, Sahin and Sinnott-Armstrong, Nasa and Kathiria, Arwa and Garrido, Christian M. and Chen, Amy F. and Cortez, Jessica T. and Greenleaf, William J. and Pritchard, Jonathan K. and Marson, Alexander},
	month = jul,
	year = {2022},
	keywords = {Functional genomics, Gene expression, Gene regulation, Immunogenetics},
	pages = {1--12},
	file = {Full Text PDF:/Users/huettej1/Zotero/storage/LDU3J5NS/Freimer et al. - 2022 - Systematic discovery and perturbation of regulator.pdf:application/pdf},
}

@inproceedings{richardson1996discovery,
  title={A discovery algorithm for directed cyclic graphs},
  author={Richardson, Thomas},
  booktitle={Proceedings of the Twelfth international conference on Uncertainty in artificial intelligence},
  pages={454--461},
  year={1996}
}

@inproceedings{cyclic-ica, author = {Lacerda, Gustavo and Spirtes, Peter and Ramsey, Joseph and Hoyer, Patrik O.}, title = {Discovering cyclic causal models by independent components analysis}, year = {2008}, isbn = {0974903949}, publisher = {AUAI Press}, address = {Arlington, Virginia, USA}, abstract = {We generalize Shimizu et al's (2006) ICA-based approach for discovering linear non-Gaussian acyclic (LiNGAM) Structural Equation Models (SEMs) from causally sufficient, continuous-valued observational data. By relaxing the assumption that the generating SEM's graph is acyclic, we solve the more general problem of linear non-Gaussian (LiNG) SEM discovery. LiNG discovery algorithms output the distribution equivalence class of SEMs which, in the large sample limit, represents the population distribution. We apply a LiNG discovery algorithm to simulated data. Finally, we give sufficient conditions under which only one of the SEMs in the output class is "stable".}, booktitle = {Proceedings of the Twenty-Fourth Conference on Uncertainty in Artificial Intelligence}, pages = {366–374}, numpages = {9}, location = {Helsinki, Finland}, series = {UAI'08} }

@InProceedings{pmlr-v124-huetter20a,
  title = 	 {Estimation Rates for Sparse Linear Cyclic Causal Models},
  author =       {Huetter, Jan-Christian and Rigollet, Philippe},
  booktitle = 	 {Proceedings of the 36th Conference on Uncertainty in Artificial Intelligence (UAI)},
  pages = 	 {1169--1178},
  year = 	 {2020},
  editor = 	 {Peters, Jonas and Sontag, David},
  volume = 	 {124},
  series = 	 {Proceedings of Machine Learning Research},
  month = 	 {03--06 Aug},
  publisher =    {PMLR},
  pdf = 	 {http://proceedings.mlr.press/v124/huetter20a/huetter20a.pdf},
  url = 	 {https://proceedings.mlr.press/v124/huetter20a.html},
  abstract = 	 {Causal models are fundamental tools to understand complex systems and predict the effect of interventions on such systems. However, despite an extensive literature in the population (or infinite-sample) case, where distributions are assumed to be known, little is known about the statistical rates of convergence of various methods, even for the simplest models. In this work, allowing for cycles, we study linear structural equations models with homoscedastic Gaussian noise and in the presence of interventions that make the model identifiable. More specifically, we present statistical rates of estimation for both the LLC estimator introduced by Hyttinen, Eberhardt and Hoyer and a novel two-step penalized maximum likelihood estimator. We establish asymptotic near minimax optimality for the maximum likelihood estimator over a class of sparse causal graphs in the case of near-optimally chosen interventions. Moreover, we find evidence for practical advantages of this estimator compared to LLC in synthetic numerical experiments.}
}

@inproceedings{amendola2020structure,
  title={Structure learning for cyclic linear causal models},
  author={Am{\'e}ndola, Carlos and Dettling, Philipp and Drton, Mathias and Onori, Federica and Wu, Jun},
  booktitle={Conference on Uncertainty in Artificial Intelligence},
  pages={999--1008},
  year={2020},
  organization={PMLR}
}

@article{10.1214/17-AOS1602,
author = {Mathias Drton and Christopher Fox and Y. Samuel Wang},
title = {{Computation of maximum likelihood estimates in cyclic structural equation models}},
volume = {47},
journal = {The Annals of Statistics},
number = {2},
publisher = {Institute of Mathematical Statistics},
pages = {663 -- 690},
keywords = {Cyclic graph, Feedback, Graphical model, linear structural equation model, maximum likelihood estimation},
year = {2019},
doi = {10.1214/17-AOS1602},
URL = {https://doi.org/10.1214/17-AOS1602}
}

@inproceedings{bhattacharya2021differentiable,
  title={Differentiable causal discovery under unmeasured confounding},
  author={Bhattacharya, Rohit and Nagarajan, Tushar and Malinsky, Daniel and Shpitser, Ilya},
  booktitle={International Conference on Artificial Intelligence and Statistics},
  pages={2314--2322},
  year={2021},
  organization={PMLR}
}

@article{forre2018constraint,
  title={Constraint-based causal discovery for non-linear structural causal models with cycles and latent confounders},
  author={Forr{\'e}, Patrick and Mooij, Joris M},
  journal={arXiv preprint arXiv:1807.03024},
  year={2018}
}

\section*{NeurIPS Paper Checklist}

\begin{enumerate}

\item {\bf Claims}
    \item[] Question: Do the main claims made in the abstract and introduction accurately reflect the paper's contributions and scope?
    \item[] Answer: \answerYes{} 
    \item[] Justification: The main claim and contribution of the paper is clearly stated in the abstract and at the end of introduction. 
    \item[] Guidelines:
    \begin{itemize}
        \item The answer NA means that the abstract and introduction do not include the claims made in the paper.
        \item The abstract and/or introduction should clearly state the claims made, including the contributions made in the paper and important assumptions and limitations. A No or NA answer to this question will not be perceived well by the reviewers. 
        \item The claims made should match theoretical and experimental results, and reflect how much the results can be expected to generalize to other settings. 
        \item It is fine to include aspirational goals as motivation as long as it is clear that these goals are not attained by the paper. 
    \end{itemize}

\item {\bf Limitations}
    \item[] Question: Does the paper discuss the limitations of the work performed by the authors?
    \item[] Answer: \answerYes{} 
    \item[] Justification: We discuss the limitations and potential directions for addressing them in the discussions section.
    \item[] Guidelines:
    \begin{itemize}
        \item The answer NA means that the paper has no limitation while the answer No means that the paper has limitations, but those are not discussed in the paper. 
        \item The authors are encouraged to create a separate "Limitations" section in their paper.
        \item The paper should point out any strong assumptions and how robust the results are to violations of these assumptions (e.g., independence assumptions, noiseless settings, model well-specification, asymptotic approximations only holding locally). The authors should reflect on how these assumptions might be violated in practice and what the implications would be.
        \item The authors should reflect on the scope of the claims made, e.g., if the approach was only tested on a few datasets or with a few runs. In general, empirical results often depend on implicit assumptions, which should be articulated.
        \item The authors should reflect on the factors that influence the performance of the approach. For example, a facial recognition algorithm may perform poorly when image resolution is low or images are taken in low lighting. Or a speech-to-text system might not be used reliably to provide closed captions for online lectures because it fails to handle technical jargon.
        \item The authors should discuss the computational efficiency of the proposed algorithms and how they scale with dataset size.
        \item If applicable, the authors should discuss possible limitations of their approach to address problems of privacy and fairness.
        \item While the authors might fear that complete honesty about limitations might be used by reviewers as grounds for rejection, a worse outcome might be that reviewers discover limitations that aren't acknowledged in the paper. The authors should use their best judgment and recognize that individual actions in favor of transparency play an important role in developing norms that preserve the integrity of the community. Reviewers will be specifically instructed to not penalize honesty concerning limitations.
    \end{itemize}

\item {\bf Theory assumptions and proofs}
    \item[] Question: For each theoretical result, does the paper provide the full set of assumptions and a complete (and correct) proof?
    \item[] Answer: \answerYes{} 
    \item[] Justification: The full set of assumptions are provided in Appendix A and assumption 1 is stated in Section 2. All the proofs are provided in Appendix A. 
    \item[] Guidelines:
    \begin{itemize}
        \item The answer NA means that the paper does not include theoretical results. 
        \item All the theorems, formulas, and proofs in the paper should be numbered and cross-referenced.
        \item All assumptions should be clearly stated or referenced in the statement of any theorems.
        \item The proofs can either appear in the main paper or the supplemental material, but if they appear in the supplemental material, the authors are encouraged to provide a short proof sketch to provide intuition. 
        \item Inversely, any informal proof provided in the core of the paper should be complemented by formal proofs provided in appendix or supplemental material.
        \item Theorems and Lemmas that the proof relies upon should be properly referenced. 
    \end{itemize}

    \item {\bf Experimental result reproducibility}
    \item[] Question: Does the paper fully disclose all the information needed to reproduce the main experimental results of the paper to the extent that it affects the main claims and/or conclusions of the paper (regardless of whether the code and data are provided or not)?
    \item[] Answer: \answerYes{} 
    \item[] Justification: All the details regarding the experimental setup and the model configurations are provided in Appendix~\ref{app:implementation-details}.
    \item[] Guidelines:
    \begin{itemize}
        \item The answer NA means that the paper does not include experiments.
        \item If the paper includes experiments, a No answer to this question will not be perceived well by the reviewers: Making the paper reproducible is important, regardless of whether the code and data are provided or not.
        \item If the contribution is a dataset and/or model, the authors should describe the steps taken to make their results reproducible or verifiable. 
        \item Depending on the contribution, reproducibility can be accomplished in various ways. For example, if the contribution is a novel architecture, describing the architecture fully might suffice, or if the contribution is a specific model and empirical evaluation, it may be necessary to either make it possible for others to replicate the model with the same dataset, or provide access to the model. In general. releasing code and data is often one good way to accomplish this, but reproducibility can also be provided via detailed instructions for how to replicate the results, access to a hosted model (e.g., in the case of a large language model), releasing of a model checkpoint, or other means that are appropriate to the research performed.
        \item While NeurIPS does not require releasing code, the conference does require all submissions to provide some reasonable avenue for reproducibility, which may depend on the nature of the contribution. For example
        \begin{enumerate}
            \item If the contribution is primarily a new algorithm, the paper should make it clear how to reproduce that algorithm.
            \item If the contribution is primarily a new model architecture, the paper should describe the architecture clearly and fully.
            \item If the contribution is a new model (e.g., a large language model), then there should either be a way to access this model for reproducing the results or a way to reproduce the model (e.g., with an open-source dataset or instructions for how to construct the dataset).
            \item We recognize that reproducibility may be tricky in some cases, in which case authors are welcome to describe the particular way they provide for reproducibility. In the case of closed-source models, it may be that access to the model is limited in some way (e.g., to registered users), but it should be possible for other researchers to have some path to reproducing or verifying the results.
        \end{enumerate}
    \end{itemize}

\item {\bf Open access to data and code}
    \item[] Question: Does the paper provide open access to the data and code, with sufficient instructions to faithfully reproduce the main experimental results, as described in supplemental material?
    \item[] Answer: \answerYes{} 
    \item[] Justification: The code is provided in the supplementary materials and will be made public upon publication of the paper. 
    \item[] Guidelines:
    \begin{itemize}
        \item The answer NA means that paper does not include experiments requiring code.
        \item Please see the NeurIPS code and data submission guidelines (\url{https://nips.cc/public/guides/CodeSubmissionPolicy}) for more details.
        \item While we encourage the release of code and data, we understand that this might not be possible, so “No” is an acceptable answer. Papers cannot be rejected simply for not including code, unless this is central to the contribution (e.g., for a new open-source benchmark).
        \item The instructions should contain the exact command and environment needed to run to reproduce the results. See the NeurIPS code and data submission guidelines (\url{https://nips.cc/public/guides/CodeSubmissionPolicy}) for more details.
        \item The authors should provide instructions on data access and preparation, including how to access the raw data, preprocessed data, intermediate data, and generated data, etc.
        \item The authors should provide scripts to reproduce all experimental results for the new proposed method and baselines. If only a subset of experiments are reproducible, they should state which ones are omitted from the script and why.
        \item At submission time, to preserve anonymity, the authors should release anonymized versions (if applicable).
        \item Providing as much information as possible in supplemental material (appended to the paper) is recommended, but including URLs to data and code is permitted.
    \end{itemize}

\item {\bf Experimental setting/details}
    \item[] Question: Does the paper specify all the training and test details (e.g., data splits, hyperparameters, how they were chosen, type of optimizer, etc.) necessary to understand the results?
    \item[] Answer: \answerYes{} 
    \item[] Justification: All the details regarding the experimental setup and model configurations are provided in Appendix~\ref{app:implementation-details}.
    \item[] Guidelines:
    \begin{itemize}
        \item The answer NA means that the paper does not include experiments.
        \item The experimental setting should be presented in the core of the paper to a level of detail that is necessary to appreciate the results and make sense of them.
        \item The full details can be provided either with the code, in appendix, or as supplemental material.
    \end{itemize}

\item {\bf Experiment statistical significance}
    \item[] Question: Does the paper report error bars suitably and correctly defined or other appropriate information about the statistical significance of the experiments?
    \item[] Answer: \answerYes{} 
    \item[] Justification: The standard deviation over repeated trials are reported in each plot and table. 
    \item[] Guidelines:
    \begin{itemize}
        \item The answer NA means that the paper does not include experiments.
        \item The authors should answer "Yes" if the results are accompanied by error bars, confidence intervals, or statistical significance tests, at least for the experiments that support the main claims of the paper.
        \item The factors of variability that the error bars are capturing should be clearly stated (for example, train/test split, initialization, random drawing of some parameter, or overall run with given experimental conditions).
        \item The method for calculating the error bars should be explained (closed form formula, call to a library function, bootstrap, etc.)
        \item The assumptions made should be given (e.g., Normally distributed errors).
        \item It should be clear whether the error bar is the standard deviation or the standard error of the mean.
        \item It is OK to report 1-sigma error bars, but one should state it. The authors should preferably report a 2-sigma error bar than state that they have a 96\% CI, if the hypothesis of Normality of errors is not verified.
        \item For asymmetric distributions, the authors should be careful not to show in tables or figures symmetric error bars that would yield results that are out of range (e.g. negative error rates).
        \item If error bars are reported in tables or plots, The authors should explain in the text how they were calculated and reference the corresponding figures or tables in the text.
    \end{itemize}

\item {\bf Experiments compute resources}
    \item[] Question: For each experiment, does the paper provide sufficient information on the computer resources (type of compute workers, memory, time of execution) needed to reproduce the experiments?
    \item[] Answer: \answerYes{} 
    \item[] Justification: All the compute resource for the synthetic and real-world data experiments are included in Appendix~\ref{app:implementation-details}.
    \item[] Guidelines:
    \begin{itemize}
        \item The answer NA means that the paper does not include experiments.
        \item The paper should indicate the type of compute workers CPU or GPU, internal cluster, or cloud provider, including relevant memory and storage.
        \item The paper should provide the amount of compute required for each of the individual experimental runs as well as estimate the total compute. 
        \item The paper should disclose whether the full research project required more compute than the experiments reported in the paper (e.g., preliminary or failed experiments that didn't make it into the paper). 
    \end{itemize}
    
\item {\bf Code of ethics}
    \item[] Question: Does the research conducted in the paper conform, in every respect, with the NeurIPS Code of Ethics \url{https://neurips.cc/public/EthicsGuidelines}?
    \item[] Answer: \answerYes{} 
    \item[] Justification: This paper conforms with every aspect of NeurIPS Code of Ethics. 
    \item[] Guidelines:
    \begin{itemize}
        \item The answer NA means that the authors have not reviewed the NeurIPS Code of Ethics.
        \item If the authors answer No, they should explain the special circumstances that require a deviation from the Code of Ethics.
        \item The authors should make sure to preserve anonymity (e.g., if there is a special consideration due to laws or regulations in their jurisdiction).
    \end{itemize}

\item {\bf Broader impacts}
    \item[] Question: Does the paper discuss both potential positive societal impacts and negative societal impacts of the work performed?
    \item[] Answer: \answerNA{} 
    \item[] Justification: We do not expect the paper to have any negative social impact. 
    \item[] Guidelines:
    \begin{itemize}
        \item The answer NA means that there is no societal impact of the work performed.
        \item If the authors answer NA or No, they should explain why their work has no societal impact or why the paper does not address societal impact.
        \item Examples of negative societal impacts include potential malicious or unintended uses (e.g., disinformation, generating fake profiles, surveillance), fairness considerations (e.g., deployment of technologies that could make decisions that unfairly impact specific groups), privacy considerations, and security considerations.
        \item The conference expects that many papers will be foundational research and not tied to particular applications, let alone deployments. However, if there is a direct path to any negative applications, the authors should point it out. For example, it is legitimate to point out that an improvement in the quality of generative models could be used to generate deepfakes for disinformation. On the other hand, it is not needed to point out that a generic algorithm for optimizing neural networks could enable people to train models that generate Deepfakes faster.
        \item The authors should consider possible harms that could arise when the technology is being used as intended and functioning correctly, harms that could arise when the technology is being used as intended but gives incorrect results, and harms following from (intentional or unintentional) misuse of the technology.
        \item If there are negative societal impacts, the authors could also discuss possible mitigation strategies (e.g., gated release of models, providing defenses in addition to attacks, mechanisms for monitoring misuse, mechanisms to monitor how a system learns from feedback over time, improving the efficiency and accessibility of ML).
    \end{itemize}
    
\item {\bf Safeguards}
    \item[] Question: Does the paper describe safeguards that have been put in place for responsible release of data or models that have a high risk for misuse (e.g., pretrained language models, image generators, or scraped datasets)?
    \item[] Answer: \answerNA{} 
    \item[] Justification: This work poses no such risk.
    \item[] Guidelines:
    \begin{itemize}
        \item The answer NA means that the paper poses no such risks.
        \item Released models that have a high risk for misuse or dual-use should be released with necessary safeguards to allow for controlled use of the model, for example by requiring that users adhere to usage guidelines or restrictions to access the model or implementing safety filters. 
        \item Datasets that have been scraped from the Internet could pose safety risks. The authors should describe how they avoided releasing unsafe images.
        \item We recognize that providing effective safeguards is challenging, and many papers do not require this, but we encourage authors to take this into account and make a best faith effort.
    \end{itemize}

\item {\bf Licenses for existing assets}
    \item[] Question: Are the creators or original owners of assets (e.g., code, data, models), used in the paper, properly credited and are the license and terms of use explicitly mentioned and properly respected?
    \item[] Answer: \answerYes{} 
    \item[] Justification: See Appendix~\ref{app:implementation-details} for details on the baselines. 
    \item[] Guidelines:
    \begin{itemize}
        \item The answer NA means that the paper does not use existing assets.
        \item The authors should cite the original paper that produced the code package or dataset.
        \item The authors should state which version of the asset is used and, if possible, include a URL.
        \item The name of the license (e.g., CC-BY 4.0) should be included for each asset.
        \item For scraped data from a particular source (e.g., website), the copyright and terms of service of that source should be provided.
        \item If assets are released, the license, copyright information, and terms of use in the package should be provided. For popular datasets, \url{paperswithcode.com/datasets} has curated licenses for some datasets. Their licensing guide can help determine the license of a dataset.
        \item For existing datasets that are re-packaged, both the original license and the license of the derived asset (if it has changed) should be provided.
        \item If this information is not available online, the authors are encouraged to reach out to the asset's creators.
    \end{itemize}

\item {\bf New assets}
    \item[] Question: Are new assets introduced in the paper well documented and is the documentation provided alongside the assets?
    \item[] Answer: \answerNA{} 
    \item[] Justification: No new assets are released in this work.
    \item[] Guidelines:
    \begin{itemize}
        \item The answer NA means that the paper does not release new assets.
        \item Researchers should communicate the details of the dataset/code/model as part of their submissions via structured templates. This includes details about training, license, limitations, etc. 
        \item The paper should discuss whether and how consent was obtained from people whose asset is used.
        \item At submission time, remember to anonymize your assets (if applicable). You can either create an anonymized URL or include an anonymized zip file.
    \end{itemize}

\item {\bf Crowdsourcing and research with human subjects}
    \item[] Question: For crowdsourcing experiments and research with human subjects, does the paper include the full text of instructions given to participants and screenshots, if applicable, as well as details about compensation (if any)? 
    \item[] Answer: \answerNA{} 
    \item[] Justification: Human subjects were not involved in this work. 
    \item[] Guidelines:
    \begin{itemize}
        \item The answer NA means that the paper does not involve crowdsourcing nor research with human subjects.
        \item Including this information in the supplemental material is fine, but if the main contribution of the paper involves human subjects, then as much detail as possible should be included in the main paper. 
        \item According to the NeurIPS Code of Ethics, workers involved in data collection, curation, or other labor should be paid at least the minimum wage in the country of the data collector. 
    \end{itemize}

\item {\bf Institutional review board (IRB) approvals or equivalent for research with human subjects}
    \item[] Question: Does the paper describe potential risks incurred by study participants, whether such risks were disclosed to the subjects, and whether Institutional Review Board (IRB) approvals (or an equivalent approval/review based on the requirements of your country or institution) were obtained?
    \item[] Answer: \answerNA{} 
    \item[] Justification: Human subjects were not involved in this work. 
    \item[] Guidelines:
    \begin{itemize}
        \item The answer NA means that the paper does not involve crowdsourcing nor research with human subjects.
        \item Depending on the country in which research is conducted, IRB approval (or equivalent) may be required for any human subjects research. If you obtained IRB approval, you should clearly state this in the paper. 
        \item We recognize that the procedures for this may vary significantly between institutions and locations, and we expect authors to adhere to the NeurIPS Code of Ethics and the guidelines for their institution. 
        \item For initial submissions, do not include any information that would break anonymity (if applicable), such as the institution conducting the review.
    \end{itemize}

\item {\bf Declaration of LLM usage}
    \item[] Question: Does the paper describe the usage of LLMs if it is an important, original, or non-standard component of the core methods in this research? Note that if the LLM is used only for writing, editing, or formatting purposes and does not impact the core methodology, scientific rigorousness, or originality of the research, declaration is not required.
    \item[] Answer: \answerNA{} 
    \item[] Justification: LLMs are not a part of this work.
    \item[] Guidelines:
    \begin{itemize}
        \item The answer NA means that the core method development in this research does not involve LLMs as any important, original, or non-standard components.
        \item Please refer to our LLM policy (\url{https://neurips.cc/Conferences/2025/LLM}) for what should or should not be described.
    \end{itemize}

\end{enumerate}

\appendix 

{\Large \bfseries Appendices}

\vspace{0.3cm}

The appendices are structured as follows: Appendix~\ref{app:theory} presents the theoretical foundations of differentiable cyclic causal discovery in the presence of unmeasured confounders, including the proof of Theorem~\ref{thm:main-theorem}, and a characterization of the equivalence class of DMGs that maximize the score function. Appendix~\ref{app:implementation-details} provides implementation details of DCCD-CONF and the baselines. Finally, Appendix~\ref{app:additional-experiments} provides additional experimental results comparing DCCD-CONF with the baselines.
\section{Theory} \label{app:theory}

In this section, we establish the theoretical foundations of differentiable cyclic causal discovery in the presence of unmeasured confounders. We begin by reviewing key definitions and results from prior work that are essential for proving Theorem~\ref{thm:main-theorem}, starting with fundamental graph terminology.

\subsection{Preliminaries} \label{app:prelim}

Consider a directed mixed graph $\gG = (\sv, \se, \sbe)$. A \emph{path} $\pi$ between nodes $i$ and $j$ is a sequence $(i_0, \varepsilon_1, i_1, \dots, \varepsilon_n, i_n )$, where $\{i_0, \dots, i_n\} \subseteq \sv$ and $\{\varepsilon_1, \dots, \varepsilon_n\} \subseteq \se \cup \sbe$, with $i_0 = i$ and $i_n = j$. A path is \emph{directed} if each edge $\varepsilon_k$ follows the form $i_{k-1} \to i_k$ for all $k \in [n]$. A cycle through node $i$ consists of a directed path from $i$ to some node $j$ and an additional edge $j \to i$. For any node \(i \in \sv\), the \emph{ancestor set} is defined as \(\an_\gG(i) := \{j \in \sv \mid \text{a directed path from } j \text{ to } i \text{ exists in } \gG\}\), while the \emph{descendant set} is given by \(\de_\gG(i) := \{j \in \sv \mid \text{a directed path from } i \text{ to } j \text{ exists in } \gG\}\). The \emph{spouse set} of a node $i$ is defined as $\mathrm{sp}_\gG(i) := \{j \in \sv \mid j \lra i \in \sbe\}$. If $i$ is both a spouse and an ancestor of $j$, this creates a \emph{almost directed cycle}. A mixed graph is called \emph{ancestral} if it contains neither a directed or an almost directed cycle. The \emph{strongly connected component} of i, denoted \(\scn_\gG(i)\), is the intersection of its ancestors and descendants: \(\scn_\gG(i) = \an_\gG(i) \cap \de_\gG(i)\). The \emph{district} of a node $i \in \sv$ is defined as $\dis_\gG(i) = \{j \mid j \lra \cdots \lra i \in \gG \text{ or } i = j\}$. We can apply these definitions to subsets $\su \subseteq \sv$ by taking union of over the items of the subset, for instance, $\an_\gG(\su) = \cup_{i \in \su} \an_\gG(i)$. A vertex set $\sa \subseteq \sv$ is said to be \emph{barren} if $i \in \sa$ has no descendants in $\gG$ that are in $\sa$, however, $i$ may have descendants in $\gG$ not in $\sa$, that is, $\barr_\gG(\sa) = \{i \mid i \in \sa; \de_\gG(i) \cap \sa = \{i\}\}$. A subset $\sa \subseteq \sv$ is \emph{ancestrally closed} if $\sa$ contains all of its ancestors. We define $\sa(\gG) := \{\sa \mid \an_\gG(\sa) = \sa\}$ as the set of ancestrally closed sets in $\gG$. 

\begin{define}[Collider]
    For a directed mixed graph $\gG = (\sv, \se, \sbe)$, a node $i_k \in \sv$ in a path $\pi = (i_0, \varepsilon_1, i_1, \varepsilon_2, \ldots, i_{n-1}, \varepsilon_n, i_n)$ is called a \emph{collider} if $k \neq 0,n$ (non-endpoint) and the two edges $\varepsilon_{k}, \varepsilon_{k+1}$ have their heads pointed at $i$, i.e., the subpath $(i_{k-1}, \varepsilon_k, i_k, \varepsilon_{k+1}, i_{k+1})$ is of the form $i_{k-1} \to i_k \gets i_{k+1}$, $i_{k-1} \lra i_k \gets i_{k+1}$, $i_{k-1} \to i_k \lra i_{k+1}$, $i_{k-1} \lra i_k \lra i_{k+1}$. The node $i_k$ is called a \emph{non-collider} if $i_k$ is not a collider. 
\end{define}

Note that the end points of a walk are always non-colliders. We now define the notion of $d$-separation extended to DMGs. 

\begin{define}[$d$-separation]
    Let $\gG = (\sv, \se, \sbe)$ be a directed mixed graph and let $C \subseteq \sv$ be a subset of nodes. A path $\pi = (i_0, \varepsilon_1, i_1, \varepsilon_2, \ldots, i_{n-1}, \varepsilon_n, i_n)$ is said to be $d$-\emph{blocked} given $\sce$ if
    \begin{enumerate}
        \item $\pi$ contains a collider $i_k \notin \an_\gG(C)$
        \item $\pi$ contains a non-collider $i_k \in C$.
    \end{enumerate}
    The path $\pi$ is said to be $d$-\emph{open} given $C$ if it is not $d$-blocked. Two subsets of nodes $A, B \subseteq \sv$ is said to be $d$-separated given $C$ if all paths between $a$ and $b$, where $a\in A$ and $b \in B$, is $d$-blocked given $C$, and is denoted by $$A \mperp_{\gG}^d B \mid C.$$ 
\end{define}

If the underlying graph is acyclic, $d$-separation implies conditional independence. That is, for subsets of nodes $A, B, C \subseteq \sv$,
$$A \mperp_{\gG}^d B \mid C \implies \rvx_A \mperp_{p_\gG} \rvx_B \mid \rvx_C,$$
where $\mperp_{p_\gG}$ denotes conditional independence, and $p_\gG$ denotes the observational distribution. This is known as the \emph{directed global Markov property} of $\gG$ \citep{forre2017markov}. However, in general, cyclic graphs do not obey the directed global Markov property as shown by the counterexample below taken from \citep{bongers2021foundations, spirtes2013directed}.

\begin{figure}[t]
    \centering
    \scalebox{0.8}{
    \begin{tikzpicture}
    \begin{scope}[xshift=0cm]
        \node (x1) [circle, draw=black] at (0,0) {$X_1$};
        \node (x2) [circle, draw=black] at (2,0) {$X_2$}; 
        \node (x3) [circle, draw=black] at (0,-2) {$X_3$};
        \node (x4) [circle, draw=black] at (2,-2) {$X_4$}; 

        \node at (1, -3) {$\gG$};
    \end{scope}

    \begin{scope}[xshift=6cm]
        \node (xa1) [circle, draw=black] at (0,0) {$X_1$};
        \node (xa2) [circle, draw=black] at (2,0) {$X_2$}; 
        \node (xa3) [circle, draw=black] at (0,-2) {$X_3$};
        \node (xa4) [circle, draw=black] at (2,-2) {$X_4$}; 

        \node at (1, -3) {$\acy(\gG)$};
    \end{scope}

    \draw [thick, ->] (x1) -- (x3);
    \draw [thick, ->] (x3) to[out=-30, in=-150] (x4);
    \draw [thick, ->] (x4) to[out=150, in=30] (x3);
    \draw [thick, ->] (x2) -- (x4);

    \draw [thick, ->] (xa1) -- (xa3);
    \draw [thick, ->] (xa1) -- (xa4);
    \draw [thick, ->] (xa2) -- (xa3);
    \draw [thick, ->] (xa2) -- (xa4);
    \draw [thick, <->, red] (xa3) -- (xa4);

    \end{tikzpicture}}
    \caption{(Left) Illustration of a directed mixed graph that disobeys directed global Markov property. (Right) The graph on the right represents the graph $\gG$ after the acyclification process. }
    \label{fig:counter-example-acyclification}
\end{figure}

\begin{example} \label{example:counter-example}
    Consider the SEM given by:
    \begin{gather*}
    f_1(X, Z) = Z_1, \quad f_1(X, Z) = Z_2, \quad f_3(X, Z) = X_1X_4 + Z_3 \quad 
    f_4(X, Z) = X_2X_3 + Z_4,
    \end{gather*}
    and $p_Z$ is the standard normal distribution. One can check that $X_1$ is not independent of $X_2$ given $\{X_3, X_4\}$. However, the $X_1$ and $X_2$ are $d$-separated given $\{X_3, X_4\}$ in the graph corresponding to the SEM (see Figure~\ref{fig:counter-example-acyclification}). 
\end{example}

\citet{forre2017markov} introduced $\sigma$-separation as a generalization of d-separation to extend the directed global Markov property to cyclic graphs. This concept was motivated by applying $d$-separation to the acyclified version of the DMG. Before delving into $\sigma$-separation, we first define the acyclification procedure of a directed mixed graph, following \citep{bongers2021foundations}.


\begin{define}[Acyclification of a directed mixed graph]
Let $\gG = (\sv, \se, \sbe)$ denote a directed mixed graph, the \emph{acyclification} of $\gG$ maps $\gG$ to the acyclified graph $\acy(\gG) = (\sv, \hat{\se}, \hat{\sbe})$, where $j \to i \in \hat{\se}$ if and only if $j \in \pa_\gG(\scn_\gG(i))\setminus \scn_\gG(i)$, and $i\lra j \in \hat{\sbe}$ if and only if there exists $i^\prime \in \scn_\gG(i)$ and $j^\prime \in \scn_\gG(j)$ such that $i^\prime = j^\prime$ or $i^\prime \lra j^\prime \in \sbe$. 
\end{define}

It is important to note that the existence of a acylified graph for an SEM relies on the solvability of the SEM over all the strongly connected components of the DMG corresponding to the SEM. This is to say that we have a solution for $\rvx_{\scn_\gG(i)}$ given $\rvx_{\pa(\scn_\gG(i))\setminus\scn_\gG(i)}$ and $\rvz_{\scn_\gG(i)}$. This is indeed the case as we assume that the forward map $\vff_x$ is invertible for the all the SEMs under consideration, see \citet{bongers2021foundations} for more details. Figure~\ref{fig:counter-example-acyclification} illustrates the acyclification process for the graph corresponding to Example~\ref{example:counter-example}.

\begin{define}[$\sigma$-separation]
    Let $\gG = (\sv, \se, \sbe)$ be a directed mixed graph and let $C \subseteq \sv$ be a subset of nodes. A path $\pi = (i_0, \varepsilon_1, i_1, \varepsilon_2, \ldots, i_{n-1}, \varepsilon_n, i_n)$ is said to be $\sigma$-\emph{blocked} given $\sce$ if
    \vspace{-0.2cm}
    \begin{enumerate}
    \setlength\itemsep{0em}
        \item the first node of $\pi$, $i_0 \in C$ or its last node $i_n \in C$, or
        \item $\pi$ contains a collider $i_k \notin \an_\gG(C)$
        \item $\pi$ contains a non-collider $i_k \in C$ that points towards a neighbor that is not in the same strongly connected component as $i_k$ in $\gG$, i.e, such that $i_{k-1} \gets i_k$ in $\pi$ and $i_{k-1} \notin \scn_\gG(i_k)$, or $i_k \to i_{k+1}$ in $\pi$ and $i_{k+1} \notin \scn_\gG(i_k)$. 
    \end{enumerate}
    The path $\pi$ is said to be $\sigma$-\emph{open} given $C$ if it is not $\sigma$-blocked. Two subsets of nodes $A, B \subseteq \sv$ is said to be $\sigma$-separated given $C$ if all paths between $a$ and $b$, where $a\in A$ and $b \in B$, is $\sigma$-blocked given $C$, and is denoted by $$A \mperp_{\gG}^\sigma B \mid C.$$ 
\end{define}
Note that $\sigma$-separation reduces to $d$-separation for acyclic graphs, that is, when $\scn_\gG(i) = \{i\}$ for all $i \in \sv$. The following result in \citep{forre2017markov} relates $\sigma$-separation and $d$-separation.

\begin{prop}[\cite{forre2017markov}]
    Let $\gG = (\sv, \se, \sbe)$ be a directed mixed graph, then for $A, B, C \subseteq \sv$,
    $$A\mperp_\gG^\sigma B \mid C \iff A \mperp_{\text{\normalfont acy}(\gG)}^d B \mid C. $$
    \label{prop:sigma-to-d}
\end{prop}

\noindent
Using $\sigma$-separation we can now define the general directed global Markov property. 

\begin{define}[General directed global Markov property \citep{forre2017markov}]
    Let $\gG = (\sv, \se, \sbe)$ be a directed mixed graph and $p_\gG$ denote the probability density of the observations $\rvx$. The probability density $p_\gG$ satisfies the \emph{general directed global Markov property} if for $A, B, C \subseteq \sv$
    $$A \mperp_\gG^\sigma B \mid C \implies \rvx_A \mperp_{p_\gG} \rvx_B \mid \rvx_C,$$
    that is, $\rvx_A$ and $\rvx_B$ are conditionally independent given $\rvx_C$. 
\end{define}

\subsection{Joint Causal Modelling and Markov properties}

\begin{figure}[t]
    \centering
    \scalebox{0.8}{
    \begin{tikzpicture}
    \begin{scope}[xshift=0cm]
        \node (x1) [circle, draw=black] at (0,0) {$X_1$};
        \node (x2) [circle, draw=black] at (2,0) {$X_2$}; 
        \node (x3) [circle, draw=black] at (0,-2) {$X_3$};
        \node (x4) [circle, draw=black] at (2,-2) {$X_4$}; 

        \node at (1, -3) {$\gG$};
    \end{scope}

    \begin{scope}[xshift=4cm]
        \node (xa1) [circle, draw=black] at (0,0) {$X_1$};
        \node (xa2) [circle, draw=black] at (2,0) {$X_2$}; 
        \node (xa3) [circle, draw=black] at (0,-2) {$X_3$};
        \node (xa4) [circle, draw=black] at (2,-2) {$X_4$}; 

        \node at (1, -3) {$\tdo(\{X_3\})(\gG)$};
    \end{scope}

    \begin{scope}[xshift=8cm]
        \node (xb1) [circle, draw=black] at (0,0) {$X_1$};
        \node (xb2) [circle, draw=black] at (2,0) {$X_2$}; 
        \node (xb3) [circle, draw=black] at (0,-2) {$X_3$};
        \node (xb4) [circle, draw=black] at (2,-2) {$X_4$}; 

        \node at (1, -3) {$\tdo(\{X_4\})(\gG)$};
    \end{scope}

    \begin{scope}[xshift=12.8cm]
        \node (xc1) [circle, draw=black] at (0,0) {$X_1$};
        \node (xc2) [circle, draw=black] at (2,0) {$X_2$}; 
        \node (xc3) [circle, draw=black] at (0,-2) {$X_3$};
        \node (xc4) [circle, draw=black] at (2,-2) {$X_4$}; 
        \node (c1) [circle, draw=black] at (-1, -1) {$C_1$};
        \node (c2) [circle, draw=black] at (3, -1) {$C_2$};
        \node at (1, -3) {$\gG^\si$};
    \end{scope}

    \draw [thick, ->] (x1) -- (x3);
    \draw [thick, ->] (x3) to[out=-30, in=-150] (x4);
    \draw [thick, ->] (x4) to[out=150, in=30] (x3);
    \draw [thick, ->] (x2) -- (x4);

    \draw [thick, ->] (xa2) -- (xa4);
    \draw [thick, ->] (xa3) -- (xa4);

    \draw [thick, ->] (xb1) -- (xb3);
    \draw [thick, ->] (xb4) -- (xb3);

    \draw [thick, ->] (xc1) -- (xc3);
    \draw [thick, ->] (xc3) to[out=-30, in=-150] (xc4);
    \draw [thick, ->] (xc4) to[out=150, in=30] (xc3);
    \draw [thick, ->] (xc2) -- (xc4);
    \draw [thick, red, ->] (c1) -- (xc3);
    \draw [thick, red, ->] (c2) -- (xc4);
    \end{tikzpicture}}
    \caption{Illustration of the augmented graph $\gG^\si$ corresponding to the set of interventional targets $\si = \{\emptyset, \{X_3\}, \{X_4\}\}$. $\tdo(\{X_3\})$ and $\tdo(\{X_3\})$ corresponds to the graph obtained after hard interventions on $X_3$ and $X_4$ respectively. The augmented graph here is the union of the graphs $\gG$, $\tdo(\{X_3\})$, $\tdo(\{X_4\})$ along with the context variables.}
    \label{fig:augmented-graph}
\end{figure}


In order to incorporate multiple interventional settings into a single causal modeling framework, we follow the \emph{joint causal model} introduced by \cite{mooij2020joint}, where we augment the system with a set of context variables $\rvc^\si = (\rvc_1, \ldots, \rvc_K)$ each corresponding to a non-empty interventional setting. In this case, $\rvc_k = \emptyset$ for all $k = 1, \dots, K$ corresponds to the observational setting. We construct an augmented graph, denoted by $\gG^\si$ consisting of both the system variables $\rvx$ and the context variables $\rvc^\si$, such that the $\ch_\gG(\rvc_k) = I_k$, and no context variable has any parent or a spouse. Figure~\ref{fig:augmented-graph} illustrates the augmented graph for the graph from Example~\ref{example:counter-example} and the intervention sets $\si = \{\emptyset, \{X_3\}, \{X_4\}\}$. The new system containing both the observed variables and the context variables is called the \emph{meta system}. Finally, given a family of interventional targets $\si = \{I_k\}_{k=1}^K$ and the corresponding context variable $\rvc_k$, the structural equations of the meta systems governing the observations $\rvx$ and $\rvc^\si$ has the following form: 
$$
\tilde{F}_i(\rvx_{\pa_\gG(i)}, \rvc^\si_{\pa_\gG^\si(i)}, Z_i) = \begin{cases}
    (\rvc_k)_i, & \text{if } \exists \, k \in [K] \text{ s.t. } \rvc_K \neq \emptyset,  \text{ and } X_i \in I_k,\\
    F_i(\xpa{i}, Z_i), & \text{otherwise}.
\end{cases}
$$
We call the distribution over the context variables $p(\rvc^\si)$ the \emph{context distributions} and as noted by \citet{mooij2020joint}, the behavior of the system is usually invariant to the context distribution. We assume access to the context distribution as the interventional settings are known apriori. Note that, the observational distribution corresponds to $p_{\gG^\si}(\rvx\mid \rvc_1 = \cdots = \rvc_K = \emptyset)$. Similarly, the interventional distribution for the interventional setting $I_k$ corresponds to $p_{\gG^\si}(\rvx \mid \rvc_k = \bm{\xi}_{I_k}, \rvc_{-k} = \emptyset)$, i.e., 
$$
p_{\gG^\si}(\rvx \mid \rvc_k = \bm{\xi}_{I_k}, \rvc_{-k} = \emptyset) = p_{\tdo(I_k)(\gG)}(\rvx)
$$
Furthermore, 
\begin{equation}
p_{\gG^\si}(\rvc^\si, \rvx) = p_{\gG^\si}(\rvc^\si)p_{\gG^\si}(\rvx \mid \rvc^\si).
\label{eq:prob-meta-system}
\end{equation}


Recall that for the interventional setting $I_k$, the probability density function governing the observations $\rvx$ is given by \eqref{eq:prob-distribution-interventions}, which we repeat here for convenience
\begin{equation*}
    p_{\tdo(I_k)(\gG)}(\rvx) = p_I(\rvc)p_{Z}\Big(\big[\vff_x^{(I_k)}(\rvx)\big]_{\su_k}\Big)\big|\det\big(\mJ_{\vff_x^{(I_k)}}(\rvx)\big)\big|,
\end{equation*}

\begin{define}
    Let $\gG = (\sv, \se, \sbe)$ be a directed mixed graph, and $\si = \{I_k\}_{k=0}^K$ with $I_0 = \emptyset$ be a family of interventional targets. Let $\sm_\si(\gG)$ denote the set of positive densities $p_{\gG^\si} : \R^{2d} \to \R$ such that $p_{\gG^\si}$ is given by \eqref{eq:prob-meta-system} for all $\vffc:\R^{2d} \to \R^d$, with $F_i(\rvx, \rvz) = F_i(\rvx_{\pa_\gG}(i), Z_i)$, such that the resulting forward map $\vff_x$ is unique and invertible, and $\mSigma_Z \succ 0$ and $(\mSigma_Z)_{ij} \neq 0$ if and only if $i \lra j \in \sbe$.
\end{define}


\begin{prop}\label{prop:general-markov-property}
    For a directed mixed graph $\gG = (\sv, \se, \sbe)$ and a family of interventional targets $\si = \{I_k\}_{k=0}^K$ such that $I_0 = \emptyset$, let $p \in \sm_\si(\gG)$, then $p$ satisfies the general directed global Markov property relative to $\gG^\si$.
\end{prop}

\begin{proof}
    For an DMG $\gG$ and a choice of $\vffc : \R^{2d} \to \R^d$ such that $\vff_x$ is unique and invertible and $\mathbf{\Sigma} \succ 0$, the structural equations are uniquely solvable with respect to each strongly connected component of $\gG$. Morover, the addition of context variables in the augmented graph does not introduce any new cycles. Therefore the meta system forms a simple SCM. Thus, from Theorem A.21 in \citep{bongers2021foundations}, the distribution $p_{\gG^\si}$ is unique and it satisfies the general directed global Markov property. 
\end{proof}

We now define the notion of interventional Markov equivalence class for DMGs based on the set of distribution induced by them. 

\begin{define}[$\si$-Markov Equivalence Class]
    Two directed mixed graphs $\gG_1$ and $\gG_2$ are $\si$-Markov equivalent if and only if $\sm_\si(\gG_1) = \sm_\si(\gG_2)$, denoted as $\gG_1 \equiv_\si \gG_2$. The set of all directed mixed graphs that are $\si$-Markov equivalent to $\gG_1$ is the $\si$-Markov equivalence class of $\gG_1$, denoted as $\si$-MEC$(\gG_1)$. 
\end{define}

From Proposition~\ref{prop:sigma-to-d}, for a DMG $\gG$ and a family of interventional targets $\si = \{I_k\}_{k=0}^K$, any $\sigma$-separation statement in $\gG^\si$ translates to a $d$-separation statement in the acyclified graph $\acy(\gG^\si)$. Consequently, the acyclified graph $\acy(\gG^\si)$ is equivalent to the augmented graph $\gG^\si$. Furthermore, by the results of \citep{richardson2009factorization}, $p_{\gG^\si}$ admits a factorization, as formalized in the theorem below.

\begin{theorem}[\cite{richardson2009factorization}]
    A probability distribution $p$ obeys the directed Markov property for an acyclic directed mixed graph $\gG$ if and only if for every $A \in \mathcal{A}(\gG)$,
    \begin{equation}
        p(\rvx_A) = \prod_{H\in [A]_\gG} p(\rvx_H \mid \rvx_{\text{tail}(H)})
        \label{eq:factorization}
    \end{equation}
    where $[A]_\gG$ denotes a partition of $A$ into sets $\{H_1, \ldots, H_k\}$. 
    \label{thm:factorization}
\end{theorem}

Each term in the factorization above is of the form $p(\rvx_H \mid \rvx_T)$, $H, T \subseteq \sv$, and $H\cap T = \emptyset$. Following \citet{richardson2009factorization, lauritzen1997local} we refer to $H$ as the \emph{head} of the term $p(\rvx_H \mid \rvx_T)$, and $T$ as the \emph{tail}. An ordered pair of sets $(H, T)$ form the head and tail of the factor associated with $\gG$ if and only if all of the following conditions hold: 
\begin{enumerate}
    \item $H = \text{barren}_\gG(\text{an}_\gG(H))$,
    \item If every nodes $h \in H$ is connected via a path in the graph obtained by removing all the directed edges in the graph $\gG$ when restricted to the nodes $\text{an}_\gG(H)$, and 
    \item $T = (\text{dis}_{\text{an}_\gG(H)} \setminus H) \cup \text{pa}_\gG(\text{dis}_{\text{an}_\gG(H)})$. 
\end{enumerate}

\begin{prop}
    Let $\gG = (\sv, \se, \sbe)$ be a directed mixed graph and $\si = \{I_k\}_{k=0}^K$ be a family of interventional targets. The set of interventional distributions $p_{\gG^\si} \in \sm_\si(\gG)$ if and only if $p_{\gG^\si}$ admits a factorization of the form given by \eqref{eq:factorization}.
    \label{prop:factor-implies-markov}
\end{prop}

\begin{proof}
Since any $p_{\gG^\si} \in \sm_\si(\gG)$ is also Markov to $\acy(\gG^\si)$, the proposition above is a direct implication of applying of Theorem~\ref{eq:factorization} on $\acy(\gG^\si)$.
\end{proof}

\subsection{Proof of Theorem~\ref{thm:main-theorem}} \label{app:proof}

We now present the main result of this paper. Recall the score function introduced in Section~\ref{ssec:score-function},
\[
\ses_\si(\gG) := \sup_{\vphi} \sum_{k=1}^K \mathop{\pE}_{\rvx \sim p^{(k)}} \log p_{\tdo(I_k)(\gG)}(\rvx) - \lambda |\gG|,
\]
where  $p^{(k)}$  is the data-generating distribution for \( I_k \in \si \), and  $\vphi = \{\vtheta, \sz\}$  represents the set of all model parameters. In the context of the meta system, since we assume access to the context distribution, the score function above is equivalent to the following score: 
\[
\ses_\si(\gG) := \sup_{\phi} \mathop{\pE}_{(\rvx, \rvc) \sim p_\si^\ast} \log p_{\gG^\si}(\rvx, \rvc \mid \vphi) - \lambda |\gG|,
\]
where $p_{\gG^\si}(\rvx, \rvc \mid \vphi)$ is given by \eqref{eq:prob-meta-system} for a specific choice of $\vphi$, and $p_\si^\ast$ denotes the joint ground-truth distribution for the observed and the context variables. We define \( \spe_\si(\gG) \) as the set of all distributions \( p_{\gG^\si}(\rvx, \rvc \mid \vphi)\) that can be expressed by the model specified by equations \eqref{eq:sem-interventions-comb} and \eqref{eq:nn-causal-mech}. That is,
\begin{equation}
    \spe_\si(\gG) := \{p \mid \exists\,\vphi \text{ s.t } p = p_{\gG^\si}(\cdot \mid \vphi)\}.
\end{equation}
From the above definition it is clear that $\spe_\si(\gG) \subseteq \sm_\si(\gG)$. Theorem~\ref{thm:main-theorem} relies on the following set of assumptions. The first one ensures that the model is capable of representing the ground truth distribution. 
\begin{assume}[Sufficient Capacity] The joint ground truth distribution $p_\si^\ast$ is such that $p_\si^\ast \in \spe_\si(\gG^\ast)$, where $\gG^\ast$ is the ground truth graph. 
    \label{assume:sufficient-capacity}
\end{assume}
In other words, there exists a $\vphi$ such that $p_\si^\ast = p_{\gG^\si}(\cdot \mid \vphi)$. The second assumption generalizes the notion of faithfulness assumption to the interventional setting. 
\begin{assume}[$\si$-$\sigma$-faithfulness]
Let $\rvv = (\rvx, \rvc^\si)$, for any subset of nodes $A, B, C \subseteq \sv \cup \rvc^\si$, and $I_k \in \si$
$$A \mathop{\not\perp}_{\gG^\si}^\sigma B \mid C \implies \rvv_A \mathop{\not\perp}_{p_{\gG^\si}} \rvv_B \mid \rvv_C. $$
\label{assume:faithfulness}
\end{assume}
The above assumption implies that any conditional independency observed in the data must imply a $\sigma$-separation in the corresponding interventional ground truth graph. 
\begin{assume}[Finite differential entropy]
    For $\si = \{I_k\}_{k=0}^K$,
    $$ | \pE_{p_\si^\ast}\log p_\si^\ast(\rvx, \rvc) | < \infty.$$
    \label{assume:finite-entropy}
\end{assume}
The above assumption ensures that the hypothetical scenario where $\ses(\gG^\ast)$ and $\ses(\gG)$ are both infinity is avoided. This is formalized in the lemma below taken from \citep{dcdi}. 
\begin{lemma}[Finiteness of the score function \citep{dcdi}]
    Under assumptions \ref{assume:sufficient-capacity} and \ref{assume:finite-entropy}, $|\ses_\si(\gG)| < \infty$. 
\end{lemma}
From the results of \citep{dcdi}, we can now express the difference in score function between $\gG^\ast$ and $\gG$ as the minimization of KL diverengence plus the difference in the regularization terms. 
\begin{lemma}[Rewritting the score function \citep{dcdi}]\label{lemma:scores}
    Under assumptions \ref{assume:sufficient-capacity} and \ref{assume:finite-entropy}, we have 
    $$\ses(\gG^\ast) - \ses(\gG) = \inf_\phi D_{KL}(p_\si^\ast\| p_{\gG^\si}(\cdot \mid \vphi)) + \lambda (|\gG| - |\gG^\ast|).$$
\end{lemma}
We will now prove the following technical lemma (adapted from \citep{dcdi}) which we will be used in proving Theorem~\ref{thm:main-theorem}. 
\begin{lemma}\label{lemma:strictly-pos}
    Let $\gG = (\sv, \se, \sbe)$ be a directed mixed graph, for a set of interventional targets $\si = \{I_k\}_{k=0}^K$, and $p^\ast \notin \sm_\si(\gG))$, then
    $$\inf_{p\in \sm_\si(\gG))} D(p^\ast\|p) > 0.$$
\end{lemma}

\begin{proof}
    Let $\rvv = (\rvx, \rvc^\si)$, from theorem~\ref{thm:factorization}, any $p \in \sm_\si(\gG))$ admits a factorization of the form 
    $$p(\rvv) = \prod_{H \in [\sv]_\gG} p(\rvv_H \mid \rvv_T).$$
    Let us define a new distribution $\hat{p}$ as follows:
    $$\hat{p}(\rvv) = \prod_{H \in [\sv]_\gG} \hat{p}(\rvv_H \mid \rvv_T),$$
    where 
    $$\hat{p}(\rvv_H\mid \rvv_T) = \frac{p^\ast(\rvv_H, \rvv_T)}{p^\ast(\rvv_T)}. $$ 
    From proposition~\ref{prop:factor-implies-markov}, we see that $\hat{p} \in \sm_\si(\gG))$ and hence $p \neq \hat{p}$. We will show that $$\hat{p} = \arg\min_{p \in \sm_\si(\gG))} D_{KL}(p^\ast \| p).$$ For an arbitrary $p \in \sm_\si(\gG))$, consider the following: 
    \begin{align}
        \pE_{p^\ast} \log \frac{\hat{p}(\rvv)}{p(\rvv)} &= \pE_{p^\ast}\sum_{H \in [\sv]_\gG} \log \frac{p^\ast(\rvv_H \mid \rvv_T)}{p(\rvv_H\mid \rvv_T)}\\
        &= \sum_{H\in[\sv]_\gG}\pE_{p^\ast} \log \frac{p^\ast(\rvv_H \mid \rvv_T)}{p(\rvv_H\mid \rvv_T)}. 
    \end{align}
    In the equation above, we leverage the linearity of expectation, which holds under Assumption~\ref{assume:finite-entropy}, ensuring that we don't sum infinities of opposite signs. We now show that each term in the right hand side of the above equality is an expectation of KL divergence which is always in $[0,\infty)$. 
    \begin{align}
        \pE_{p^\ast} \log \frac{p^\ast(\rvv_H \mid \rvv_T)}{p(\rvv_H\mid \rvv_T)} &= \int p^\ast(\rvv_T)\int p^\ast(\rvv_H\mid \rvv_T) \log \frac{p^\ast(\rvv_H \mid \rvv_T)}{p(\rvv_H\mid \rvv_T)} d\rvv_Hd\rvv_T\\
        &= \int p^\ast(\rvv_T) D_{KL}\big(p^\ast(\cdot\mid \rvv_T)\|p(\cdot\|\rvv_T)\big) d\rvv_T. 
    \end{align}
    Thus, $\pE_{p^\ast} \log \frac{\hat{p}(\rvv)}{p(\rvv)} \in [0, \infty)$. 

    \noindent
    We now show that $\hat{p} = \arg\min_{p \in \sm(\tdo(I_k)(\gG))} D_{KL}(p^\ast \| p)$:
    \begin{align}
        D_{KL}(p^\ast \| p) &= \pE_{p^\ast} \log \frac{p^\ast(\rvv)}{\hat{p}(\rvv)}\frac{\hat{p}(\rvv)}{p(\rvv)}\\
        &= \pE_{p^\ast} \log \frac{p^\ast(\rvv)}{\hat{p}(\rvv)} + \pE_{p^\ast}\log\frac{\hat{p}(\rvv)}{p(\rvv)} 
        \label{eq:exp-split}\\
        &= D_{KL}(p^\ast \| \hat{p}) + \pE_{p^\ast}\log\frac{\hat{p}(\rvv)}{p(\rvv)}\\
        &\geq D_{KL}(p^\ast \| \hat{p}) > 0.
    \end{align}
    Since the expectations in \eqref{eq:exp-split} are both in $[0,\infty)$, splitting the expectation is valid. The very last inequality holds since $p^\ast \neq \hat{p}$. Thus, $$\inf_{p\in \sm_\si(\gG))} D(p^\ast\|p) \geq D_{KL}(p^\ast \| \hat{p}) > 0.$$
    This proves the lemma. 
\end{proof}

\noindent
We are now ready to prove Theorem~\ref{thm:main-theorem}. Recall,

\begin{customthm}{\ref{thm:main-theorem}}
    Let $\si = \{I_k\}_{k=0}^K$ be a family of interventional targets, let $\gG^\ast$ denote the ground truth directed mixed graph, $p^{(k)}$ denote the data generating distribution for $I_k$, and $\hat{\gG} := \arg\max_\gG \ses(\gG)$. Then, under the Assumptions \ref{assume:int-stability}, \ref{assume:sufficient-capacity}, \ref{assume:faithfulness}, and \ref{assume:finite-entropy}, and for a suitably chosen $\lambda > 0$, we have that $\hat{\gG} \equiv_\si \gG^\ast$. That is, $\hat{\gG}$ is $\si$-Markov equivalent to $\gG^\ast$.   
\end{customthm}

\begin{proof}
It is sufficient to show that for $\gG \notin \si\text{-MEC}(\gG^\ast)$, the score function of $\hat{\gG}$ is strictly lower than the score function of $\gG^\ast$, i.e., $\ses(\gG^\ast) > \ses(\gG)$. Since \( \gG \notin \si\text{-MEC}(\gG^\ast) \) and \( p_\si^\ast\in \sm_\si(\gG^\ast) \) (by Assumption~\ref{assume:sufficient-capacity}), there must exist subsets of nodes \( A, B, C \subseteq \sv \cup \rvc^\si \) such that either:
\begin{equation}\label{eq:condition-1}
A \mperp_{\gG}^\sigma B \mid C \quad \text{and} \quad A \mathop{\not\perp}_{\gG^\ast}^\sigma B \mid C, \tag{C1}
\end{equation}
or
\begin{equation}\label{eq:condition-2}
A \mathop{\not\perp}_{\gG}^\sigma B \mid C \quad \text{and} \quad A \mperp_{\gG^\ast}^\sigma B \mid C, \tag{C2}   
\end{equation}

If no such subsets exist, then \( \gG \) and \( \gG^\ast \) impose the same  $\sigma$ -separation constraints and thus induce the same set of distributions. This would imply that \( \gG \in \si\text{-MEC}(\gG^\ast) \), contradicting our assumption. Since $p_\si^\ast \in \sm_\si(\gG^\ast)$, in the case of \eqref{eq:condition-1}, it must be true that $\rvv_A \not\perp_{p_\si^\ast} \rvv_B \mid \rvv_C$ (Assumption~\ref{assume:faithfulness}). Therefore $p_\si^\ast$ doesn't satisfy the general directed Markov property with respect to $\gG^\si$ and hence $p_\si^\ast \notin \sm_\si(\gG)$. For \eqref{eq:condition-2}, if $p_\si^\ast \in \sm_\si(\gG)$, then from Assumption~\ref{assume:faithfulness}, it must be true that $\rvv_A \not\perp_{p_\si^\ast} \rvv_B \mid \rvv_C$. However, $p_\si^\ast \in \sm_\si(\gG^\ast)$, and Proposition~\ref{prop:general-markov-property} implies that $\rvv_A \perp_{p_\si^\ast} \rvv_B \mid \rvv_C$, resulting in a contradiction. Therefore, $p_\si^\ast \notin \sm_\si(\gG)$. 

\noindent
For convenience, let $$\eta(\gG) := \inf_\phi D_{KL}(p_\si^\ast\|p_{\gG^\si}(\cdot \mid \vphi)).$$
Note that
$$\eta(\gG) = \inf_\phi D_{KL}(p_\si^\ast\|p_{\gG^\si}(\cdot \mid \vphi)) \geq \inf_{p \in \sm_\si(\gG)} D_{KL}(p^{(k)}\|p) > 0,$$
where we use Lemma~\ref{lemma:strictly-pos} for the final inequality. Thus, from Lemma~\ref{lemma:scores}
\begin{equation}
    \ses(\gG^\ast) - \ses(\gG) = \eta(\gG) + \lambda(|\gG| - |\gG^\ast|)
\end{equation}
Following \citep{dcdi}, we now show that by choosing $\lambda$ sufficiently small, the above equation is stictly positive. Note that if $|\gG| \geq |\gG^\ast|$ then $\ses(\gG^\ast) - \ses(\gG) > 0$. Let $\mathbb{G}^+ := \{\gG \mid |\gG| < |\gG^\ast|\}$. Choosing $\lambda$ such that $0 < \lambda < \min_{\gG \in \mathbb{G}^+} \frac{\eta(\gG)}{|\gG^\ast| - |\gG|}$ we see that: 
\begin{align}
    &\lambda < \min_{\gG \in \mathbb{G}^+} \frac{\eta(\gG)}{|\gG^\ast| - |\gG|}\\
    \iff &\lambda < \frac{\eta(\gG)}{|\gG^\ast| - |\gG|} \quad \forall \gG \in \mathbb{G}^+\\
    \iff &\lambda (|\gG^\ast| - |\gG|) < \eta(\gG) \quad \forall \gG \in \mathbb{G}^+\\
    \iff & 0 < \eta(\gG) + \lambda(|\gG| - |\gG^\ast|) = \ses(\gG^\ast) - \ses(\gG) \quad \forall \gG \in \mathbb{G}^+.
\end{align}
Thus, every graph outside of the general directed Markov equivalence class of $(\gG^\ast)^\si$ has a strictly lower score. 
\end{proof}

\subsection{Characterization of Equivalence Class}\label{app:equivalence-class}

Let $\gG = (\sv, \se, \sbe)$ be a directed mixed graph, and consider a family of interventional targets $\si = {I_k}_{k=0}^K$ with $I_0 = \emptyset$. From Proposition~\ref{prop:sigma-to-d}, for any graph $\gG_1 \in \si\text{-MEC}(\gG)$, the corresponding augmented graph $\gG_1^\si$ is equivalent to the acyclification $\acy(\gG^\si)$ of $\gG^\si$. Several prior works have studied the characterization of equivalence classes of acyclic directed mixed graphs (ADMGs), including \cite{pmlr-vR1-spirtes97b, kocaoglu2019characterization, Ali_2009}. We now provide a graphical notion of the $\si$-Markov equivalence class of a DMG $\gG$. A graph $\gG$ is said to be \emph{maximal} if there exists no inducing path (relative to the empty set) between any two non-adjacent nodes. An \emph{inducing path} relative to a subset $L$ is a path on which every non-endpoint node $i \notin L$ is a collider on the path and every collider is an ancestor of an endpoint of the path. A \emph{Maximal Ancestral Graph} (MAG) is one that is both ancestral and maximal. Given an ADMG $\acy(\gG^\si)$, it is possible to construct a MAG over the variable set $\rvv = (\rvx, \rvc^\si)$ that preserves both the independence structure and ancestral relationships encoded in $\acy(\gG^\si)$; see \cite{zhang2008causal} for details. We denote $MAG(\acy(\gG^\si))$ to mean MAG that is constructed from $\acy(\gG^\si)$. Therefore all the independencies encoded in $\acy(\gG^\si)$ is also present in $MAG(\acy(\gG^\si))$. Before present the condition for MAG equivalence, we introduce the notion of discriminating path. A path $\pi=(i_0, \varepsilon_1, \ldots, i_{n-1}, \varepsilon_n, i_n)$ in $\acy(\gG^\si)$ is called a \emph{discriminating path} for $i_{n-1}$ if (1) $\pi$ includes at lest three edges; (2) $i_{n-1}$ is a non-endpoint node on $\pi$, and is adjacent to $i_n$ on $\pi$; and (3) $i_0$ and $i_n$ are not adjacent, and every node in between $i_0$ and $i_{n-1}$ is a collider on $\pi$ and is a parent of $i_n$. The following theorem from \citet{pmlr-vR1-spirtes97b} characterizes the equivalence of MAGs. 

\begin{theorem}[\citet{pmlr-vR1-spirtes97b}]\label{thm:mag-equivalence}
    Two MAGs $\gG_1$ and $\gG_2$ are Markov equivalent if and only if: 
    \begin{enumerate}
        \item $\gG_1$ and $\gG_2$ have the same skeleton;
        \item $\gG_1$ and $\gG_2$ have the same unshielded colliders; and
        \item if $\pi$ forms a discriminating path for $i$ in $\gG_1$ and $\gG_2$, then $i$ is a collider on $\pi$ if and only it is a collider on $\pi$ in $\gG_2$.
    \end{enumerate}
\end{theorem}

Therefore, from Proposition~\ref{prop:sigma-to-d} and Theorem~\ref{thm:mag-equivalence}, two DMGs $\gG_1$ and $\gG_2$ are equivalent if and only if $MAG(\acy(\gG_1^\si))$ and $MAG(\acy(\gG_2^\si))$ satisfying the conditions of Theorem~\ref{thm:main-theorem}, i.e., $MAG(\acy(\gG_1^\si))$ and $MAG(\acy(\gG_2^\si))$: (i) have the same skeleton, (ii) same unshielded colliders, and (iii) same discriminating paths with consistent colliders.

\section{Implementation Details} \label{app:implementation-details}

In this section we provide the implementation details of DCCD-CONF and the baseline models along with the details of the experimental setup. 

\subsection{Hutchinson trace estimator for computing log determinant of the Jacobian}\label{app:hutchinson}

Computing the log determinant of the Jacobian matrix present in \eqref{eq:prob-distribution-interventions} poses a significant challenge. However, following \citet{iresnet}, in Section~3.3.1
showed that the log-determinant of the Jacobian can be estimated using the following estimator
\begin{equation*}
\log\big|\det \big(\mJ_{\vff_x^{(I_k)}}(\rvx)\big)\big| = \mathop{\mathbb{E}}_{n \sim p_{\mathbb{N}}(N)}\Bigg[\sum_{m=1}^n \frac{(-1)^{m+1}}{m}\cdot\frac{\mathrm{Tr}\big\{\mJ_{\mU_k \vfg_x}^m(\rvx)\big\} - \mathrm{Tr}\big\{\mJ_{\mU_k \vfg_z}^m(\rvz)\big\}}{p_\mathbb{N}(\ell \geq m)}\Bigg].
\end{equation*}
The estimator above still has a major drawback: computing the $\mathrm{Tr}(\mJ_{\mU_k \vfg})$ still requires $\mathcal{O}(d^2)$ gradient calls to compute exactly. Fortunately, \emph{Hutchinson trace estimator} \citep{hutchtraceestimator} can be used to stochastically approximate the trace of the Jacobian matrix. This then results in the following estimator that can be computed efficiently via reverse-mode automatic differentiation
\begin{multline}
\log\big|\det \big(\mJ_{\vff_x^{(I_k)}}(\rvx)\big)\big| = \mathop{\mathbb{E}}_{n \sim p_{\mathbb{N}}(N), \rvv \sim \mathcal{N}(\bm{0}, \mathbf{I})}\Bigg[\sum_{m=1}^n \frac{(-1)^{m+1}}{m}\\\times\frac{\rvv^\top\big\{\mJ_{\mU_k \vfg_x}^m(\rvx)\big\}\rvv - \rvv^\top\big\{\mJ_{\mU_k \vfg_z}^m(\rvz)\big\}\rvv}{p_\mathbb{N}(\ell \geq m)}\Bigg].
\label{eq:log-det-jac-final}
\end{multline}

\subsection{Parameter update via score maximization}

As described in Section~\ref{sec:methodology}, the model parameters are updated in two stages. In the first stage, the parameters of the neural networks and the Gumbel-Softmax distribution, used to sample adjacency matrices, are updated via backpropagation using stochastic gradient descent. Since \eqref{eq:sem-interventions-comb} forms an implicit block of an implicit normalizing flow, following~\cite{lu2021implicit}, we directly estimate the gradients of $\hat{\ses}(\mB)$ with respect to $\vx$ and $\vphi = (\vtheta, \mB)$. The gradient computation involves two terms: $\frac{\partial}{\partial (\cdot)}\log \det (\mI + \mJ_{\vfg_x}(\vx, \vphi))$ and $\frac{\partial \hat{\ses}}{\partial \vz}\frac{\partial \vz}{\partial (\cdot)}$, where $(\cdot)$ is a placeholder for $\vx$ and $\vphi$. From \citep{lu2021implicit, iresnet}, we use the following unbiased estimators for the gradients: 
\begin{equation}
\frac{\partial \log \det (\mI + \mJ_{\vfg_x}(\vx, \vphi))}{\partial (\cdot)} = \mathop{\pE}_{n\sim p(N), \rvv \sim \mathcal{N}(\bm{0}, \mI)}\Bigg[\bigg(\sum_{k=0}^n \frac{(-1)^k}{P(N\geq k)}\rvv^\top \mJ^k\bigg)\frac{\partial \mJ_{\vfg_x}(\vx, \vphi)}{\partial (\cdot)}\rvv\Bigg].
\label{eq:grad-1}
\end{equation}
On the other hand, $\frac{\partial \hat{\ses}(\mB)}{\partial \vz}\frac{\partial \vz}{\partial (\cdot)}$ can be computed according to the \emph{implicit function theorem} as follows: 
\begin{equation}
    \frac{\partial \hat{\ses}(\mB)}{\partial \vz}\frac{\partial \vz}{\partial (\cdot)} = \frac{\hat{\ses}(\mB)}{\partial \vz}\mJ^{-1}_{\mathbf{G}_z}(\vz)\frac{\mathbf{G}(\vx, \vz, \vphi)}{\partial (\cdot)},
    \label{eq:grad-2}
\end{equation}
where $\mathbf{G}_z(\vz) = \vfg_z(\vz, \vphi) + \vz$, and recall that $\mathbf{G}(\vx, \vz, \vphi) = \vfg_x(\vx, \vphi) + \vx + \vfg_z(\vz, \vphi) + \vz$. See \cite{lu2021implicit} for more details. The procedure \textsc{SGUPDATE} shown in Algorithm~\ref{alg:parameter-update} performs the gradient computation in \eqref{eq:grad-1} and \eqref{eq:grad-2}.

In the second stage, the entries of the covariance matrix of the endogenous noise distribution are updated column-wise by solving a sequence of Lasso optimization problems. The complete parameter update procedure is summarized in Algorithm~\ref{alg:parameter-update}.

\begin{algorithm}[t] 
\caption{\textsc{Parameter Update}}
\label{alg:parameter-update}
\begin{algorithmic}[1]
\Require{Family of interventional targets $\si = \{I_k\}_{k=1}^K$, interventional dataset $\{\vx^{(i,k)}\}_{i=1, k=1}^{N_k, K}$, regularization coefficients $\lambda$ and $\rho$}. 
\Ensure{Learned  neural network parameters $\hat{\vtheta}$, graph structure parameters $\hat{\sbe}$, confounder-noise distribution parameters $\hat{\mSigma}_Z$}.

\State{Initialize the parameters: $\vtheta^{(0)} \sim p_\theta(\vtheta)$, $\sbe^{(0)} \sim p_\sbe(\sbe)$, and $\mSigma_Z = \mI$}

\State{Iteration counter: $t = 0$}
\While{NOT \textsc{Converged}}
    \For{$k = 1$ to $K$} 
    \State{$t \gets t + 1$}
    \State{$\mW \gets (\mSigma_Z^{(t)})_{\su_k, \su_k}$}
    \State{Compute score function $\tilde{\mathcal{L}}(\sbe^{(t)}, \vtheta^{(t)}, \mW, I_k)$}

    \State{$\sbe^{(t+1)}, \vtheta^{(t+1)} \gets \textsc{Sgupdate}(\tilde{\mathcal{L}}, \sbe^{(t)}, \vtheta^{(t)})$} 

    \For {$j = 1$ to $d$}
    \State{Push $j$-th row and column in $\mW$ to the end}
    \State{$\vbeta \gets \texttt{lasso}(\mW_{11}, \vs_{12}, \rho)$}
    \State{$\vw_{12} \gets \mW_{11}\vbeta$}
    \EndFor
    \State{$(\mSigma_Z^{(t+1)})_{\su_k, \su_k} \gets \mW$}
    \EndFor
\EndWhile
\State{$\hat{\vtheta}, \hat{\sbe}, \hat{\mSigma}_Z \gets \vtheta^{(t)}, \sbe^{(t)}, \mSigma_Z^{(t)}$}

\Return {$\hat{\vtheta}, \hat{\sbe}, \hat{\mSigma}_Z$}
\end{algorithmic}
\end{algorithm}

\subsection{DCCD-CONF and the baselines code details}

\paragraph{DCCD-CONF. } We implemented our framework using the libraries \texttt{Pytorch} and \texttt{Scikit-learn} in Python and the code used in running the experiments can be found in the following Github repository: \url{https://github.com/muralikgs/dccd_conf}.

Starting with an initialization of the model parameters \((\vtheta^{(0)}, \mB^{(0)}, \sz^{(0)})\), we iteratively alternate between maximizing the score function with respect to \((\vtheta^{(t)}, \mB^{(t)})\) and \(\sz^{(t)}\), as described in Algorithm~\ref{alg:parameter-update}. Standard stochastic gradient updates are used for \((\vtheta^{(t)}, \mB^{(t)})\), while coordinate gradient descent, implemented via the \texttt{Scikit-learn} library, is applied to \(\sz^{(t)}\). For modeling the causal function $\vfg_x$, we follow the setup of \citet{sethuraman2023nodags}, employing neural networks (NNs) with dependency masks parameterized by a Gumbel-softmax distribution. The log-determinant of the Jacobian is computed using a power series expansion combined with the Hutchinson trace estimator. To mitigate bias from truncating the power series expansion, the number of terms is sampled from a Poisson distribution, as detailed in Section~\ref{ssec:model-parameter-update} and Appendix~\ref{app:hutchinson}. The final objective is optimized using the Adam optimizer \citep{kingma2015adam}.

The learning rate in all our experiments was set to $10^{-2}$. The neural network models used in our experiments contained one multi-layer perceptron layer. No nonlinearities were added to the neural networks for the linear SEM experiments. We used \texttt{tanh} activation for the nonlinear SEM experiments and for the experiments on the perturb-CITE-seq data set. The graph sparsity regularization constant $\lambda$ was set to $10^{-2}$ for all the experiments. The sparsity inducing regularization constant for the inverse covariance matrix of the confounder distribution, $\rho$, was set to $10^{-1}$ in all the experiments. The models were trained and evaluated on NVIDIA RTX6000 GPUs. 

\paragraph{Baselines. } For NODAGS-Flow, we used the code provided by authors \citep{sethuraman2023nodags} available at \url{https://github.com/Genentech/nodags-flows}. The default values were set for the hyperparameters. We implemented the LLC algorithm based on the details provided in \citep{llc}. The implementation can be found within the \texttt{codes/baselines} folder in the supplementary materials. For FCI, we used the implementation that is available in the \texttt{causallearn} python library (\url{https://github.com/py-why/causal-learn}). For DCDI, we used the codebase provided by the authors \citep{dcdi}, available at \url{https://github.com/slachapelle/dcdi}. The default hyperparameters were used while training and evaluating the model. For DAGMA and ADMG, we used the codebase provided by the authors~\cite{bello2022dagma} and ~\cite{bhattacharya2021differentiable}, available at \url{https://github.com/kevinsbello/dagma} and \url{https://gitlab.com/rbhatta8/dcd} respectively. We implemented LiNGAM-MMI based on the details provided by the authors~\citep{lingam-mmi}.

\subsection{Experimental setup}

In this section, we describe how the data sets were generated for the various experiments conducted. 

\subsubsection{Synthetic Experiments} \label{app:exp-settings}

We begin by sampling a directed graph using the Erdős-Rényi (ER) random graph model with an edge density of 2 unless specified otherwise, which determines the directed edges in the DMG \(\gG\). Next, we generate a random matrix and project it onto the space of positive definite matrices to obtain the confounder covariance matrix \(\sz\), setting the maximum exogenous noise standard deviation to 0.5 unless specified otherwise. The nonzero off-diagonal entries of \(\sz\) correspond to the bidirectional edges in \(\gG\). For the linear SEM, edge weights are sampled uniformly from $\text{Unif}((-0.9, -0.2) \cup (0.2, 0.9))$. In all experiments except those on non-contractive SEMs, the edge weight matrix is rescaled to ensure a Lipschitz constant of less than one. For nonlinear SEMs, we apply a \texttt{tanh} nonlinearity to the linear system defined by the edge weights, i.e., $$\vx = \tanh(\mW^\top \vx + \vz),$$ where $\mW$ is the weighted adjacency matrix. In all the experiments, the training data consisted of 500 samples per interventional setting (unless specified otherwise). 

\paragraph{Impact of Confounder Count}

In this experiment, the number of observed nodes in the graph is fixed at  d = 10 . Training data consists of combination of observational data and single-node interventions over all nodes, i.e., \( \si = \emptyset \cup \{\{i\} \mid i \in [d]\} \). The confounder ratio (number of confounders divided by the number of nodes) is varied from 0.2 to 0.8.

\paragraph{Impact of Cycles} We fix the number of nodes in the graph $d = 10$, The number of cycles in the graph is varied between 0 and 8, with the confounder ratio set to 0.3. The training data consists of observational data as well as single node interventions over all the nodes in the graph. 

\paragraph{Impact of Nonlinearity} In this setting, the degree of nonlinearity (controlled by $\beta$) is varied between 0 and 1. That is, the data is generated from the following SEM: 
$$\vx = (1 - \beta)(\mW^\top \vx + \vz) + \beta\tanh(\mW^\top \vx + \vz). $$
Here, $
\beta= 0$ implies the SEM is purely linear and $\beta=1.0$ implies the data is purely nonlinear. The confounder ratio is set to 0.3. The number of cycles is randomly set. 

\paragraph{Scaling with Number of Nodes}

We fix the confounder ratio at 0.4. The total number of nodes in the graph is varied from 10 to 80. As in the previous setup, training data consists of combination of observational data and single-node interventions across all nodes, with 500 samples per interventional setting.

\paragraph{Scaling with Interventions}

We fix  $d = 10$  and set the confounder ratio to 0.4. The number of interventions during training varies from $0$ to  $d$. Zero interventions corresponds to observational data. When fewer than  $d$  interventions are provided, the intervened nodes are selected arbitrarily. Each interventional setting consists of 500 samples.

\paragraph{Non-Contractive SEM}

In this case, we explicitly enforce a non-contractive causal mechanism  $\vffc$  by rescaling edge weights to ensure that the Lipschitz constant of the edge weight matrix exceeds one. We set  $d = 10$  and provide observational data and single-node interventions across all nodes, with 500 samples per intervention. The confounder ratio varies between 0.2 and 0.8.

\paragraph{Scaling with Training Samples}

To examine the sample requirements of DCCD-CONF, we set the confounder ratio to 0.4. Training data consists of observational data and single-node interventions over all nodes, while the number of samples per intervention is varied from 500 to 2500.

\paragraph{Scaling with outgoing edge density}

In this case, the outgoing edge density of the ER random graphs is varied from 1 to 4. The confounder ratio is set to 0.4 and the number of nodes $d=10$. The training data consists of observational data and single node experiments over all the nodes in the graph. 

\paragraph{Scaling with noise standard deviation}

In this setting, we vary the maximum noise standard deviation between 0.2 and 0.8. The confounder ratio is set to 0.4 and the number of nodes $d=10$. The training data consists of observational data and single node experiments over all the nodes in the graph. 

\paragraph{Evaluation Metrics}

Across all experiments, we use Structural Hamming Distance (SHD) to evaluate the accuracy of the estimated directed edges relative to the ground truth. SHD measures the number of modifications (edge additions, reversals, and deletions) required to match the estimated graph to the ground truth. For DCCD-CONF and NODAGS-Flow we fix a threshold value of 0.8 for the estimated adjacency matrix. The recovery of bidirectional edges is assessed using the F1 score, which is defined as:
$$\text{F1 score} = 2\frac{\text{precision} \times \text{recall}}{\text{precision} + \text{recall}}$$
where
$$\text{precision} = \frac{TP}{TP + FP}, \quad \text{recall} = \frac{TP}{TP + FN}$$
and  TP, FP, FN  denote true positives, false positives, and false negatives, respectively. We use a threshold of $0.01$ for the estimated covariance matrix to identify the bidirectional edges. 

Additionally, we also measure the performance of DCCD-CONF and the baselines using \emph{Area Under Precision-Recall Curve} (AUPRC) as the error metric. AUPRC computes the area under the precision-recall curve evaluated at various threshold values (the higher the better). 

\subsubsection{Gene Perturbation Data set}

The dataset was obtained from the Single Cell Portal of the Broad Institute (accession code SCP1064). Following the experimental setup of \citet{sethuraman2023nodags}, we filtered out cells with fewer than 500 expressed genes and removed genes expressed in fewer than 500 cells. Due to computational constraints, we selected a subset of 61 perturbed genes (Table \ref{tab:genes}) from the full genome. The three experimental conditions—co-culture, IFN-$\gamma$, and control—were partitioned into separate datasets, and models were trained and evaluated on each condition independently. 

\begin{table}[!ht]
    \centering
    \caption{The list of chosen genes from Perturb-CITE-seq dataset \citep{frangieh2021multimodal}.}
    \vspace{0.2cm}
    \resizebox{\textwidth}{!}{
    \begin{tabular}{|llllllllll|}
    \hline
        ACSL3 & ACTA2 & B2M & CCND1 & CD274 & CD58 & CD59 & CDK4 & CDK6  & ~ \\
        CDKN1A & CKS1B & CST3 & CTPS1 & DNMT1 & EIF3K & EVA1A & FKBP4 & FOS  & ~ \\ 
        GSEC & GSN & HASPIN & HLA-A & HLA-B & HLA-C & HLA-E & IFNGR1 & IFNGR2  & ~ \\ 
        ILF2 & IRF3 & JAK1 & JAK2 & LAMP2 & LGALS3 & MRPL47 & MYC & P2RX4  & ~ \\ 
        PABPC1 & PAICS & PET100 & PTMA & PUF60 & RNASEH2A & RRS1 & SAT1 & SEC11C  & ~ \\ 
        SINHCAF & SMAD4 & SOX4 & SP100 & SSR2 & STAT1 & STOM & TGFB1 & TIMP2  & ~ \\ 
        TM4SF1 & TMED10 & TMEM173 & TOP1MT & TPRKB & TXNDC17 & VDAC2 & ~ &   & ~ \\ \hline
    \end{tabular}
    }
    \label{tab:genes}
\end{table}

\section{Additional Results} \label{app:additional-experiments}

\begin{figure}[t]
    \centering
    \begin{subfigure}{0.49\linewidth}
    \includegraphics[width=\linewidth]{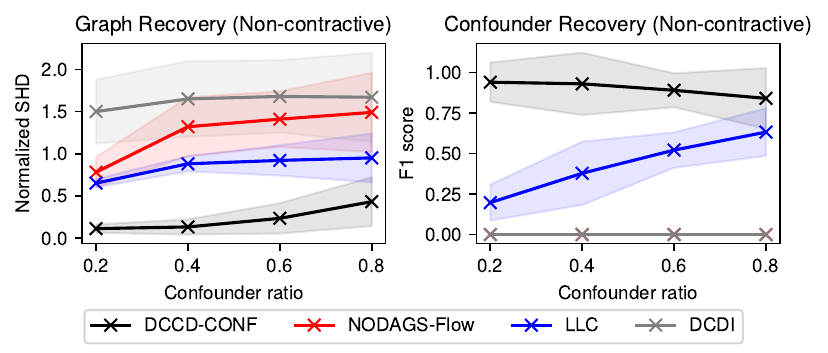}
    \caption{Noncontractive SEM}
    \label{fig:non-contractive}
    \end{subfigure}
    \begin{subfigure}{0.49\linewidth}
    \includegraphics[width=\linewidth]{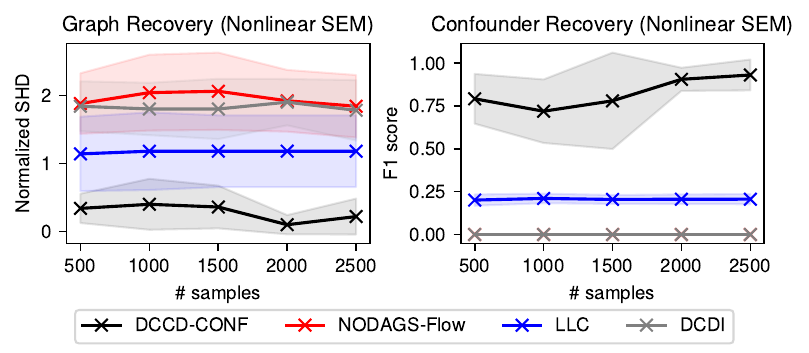}
    \caption{Performance v. sample size}
    \label{fig:num-samples}
    \end{subfigure}
    
    \begin{subfigure}{0.49\linewidth}
    \includegraphics[width=\linewidth]{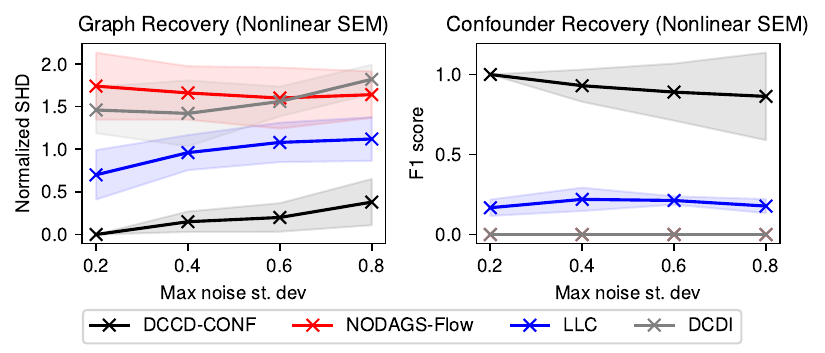}
    \caption{Performance v. Noise st. deviation}
    \label{fig:noise-scale}
    \end{subfigure}
    \begin{subfigure}{0.49\linewidth}
    \includegraphics[width=\linewidth]{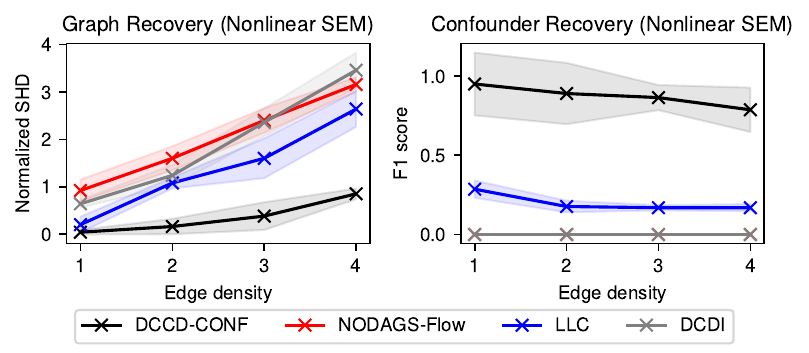}
    \caption{Performance v. outgoing edge density}
    \label{fig:edge-density}
    \end{subfigure}
    \caption{Performance comparison between DCCD-CONF and baseline methods on causal graph recovery and confounder identification, evaluated across varying model parameters. Each subplot details a specific experimental setting.}
    \label{fig:additional-exps}
\end{figure}

\subsection{Additional synthetic experiments}

\begin{figure}[h]
    \centering
    \begin{subfigure}{0.49\linewidth}
        \includegraphics[width=\linewidth]{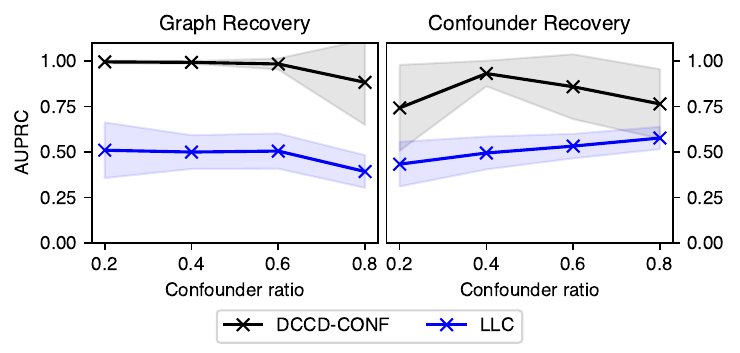}
        \caption{Performance v. number of confounders}
        \label{fig:auprc-confounders}
    \end{subfigure}
    \begin{subfigure}{0.49\linewidth}
        \includegraphics[width=\linewidth]{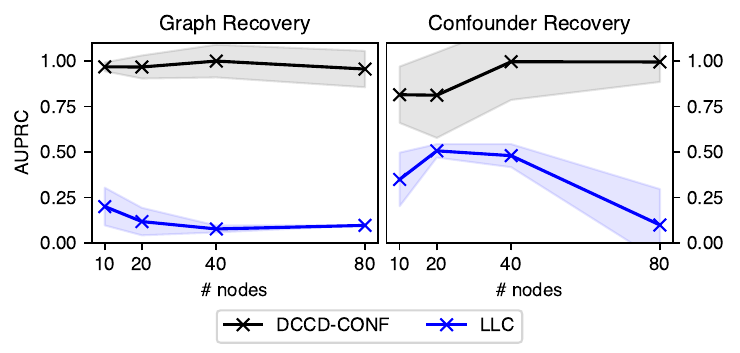}
        \caption{Performance v. number of nodes}
        \label{fig:auprc-nodes}
    \end{subfigure}

    \begin{subfigure}{0.49\linewidth}
        \includegraphics[width=\linewidth]{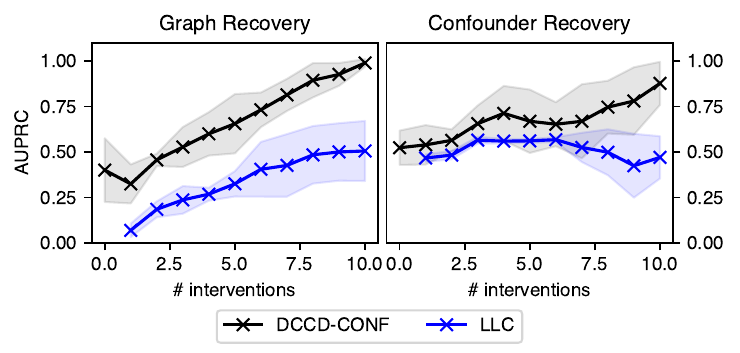}
        \caption{Performance v. number of interventions}
        \label{fig:auprc-interventions}
    \end{subfigure}
    \begin{subfigure}{0.49\linewidth}
        \includegraphics[width=\linewidth]{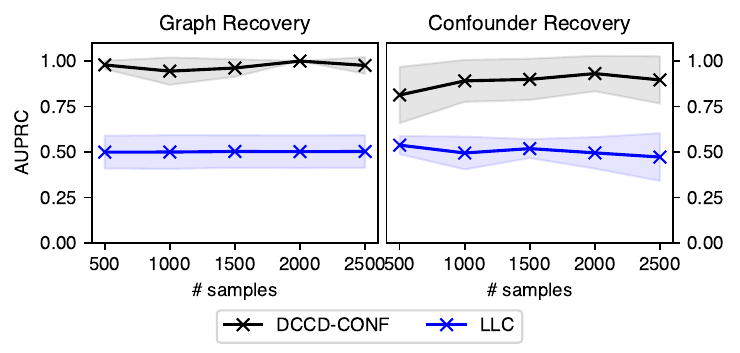}
        \caption{Performance v. sample size}
        \label{fig:auprc-samples}
    \end{subfigure}

    \begin{subfigure}{0.49\linewidth}
        \includegraphics[width=\linewidth]{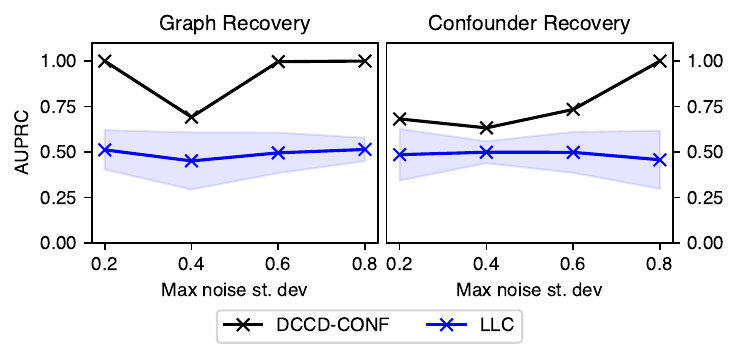}
        \caption{Performance v. max noise st. dev}
        \label{fig:auprc-noise-scale}
    \end{subfigure}
    \begin{subfigure}{0.49\linewidth}
        \includegraphics[width=\linewidth]{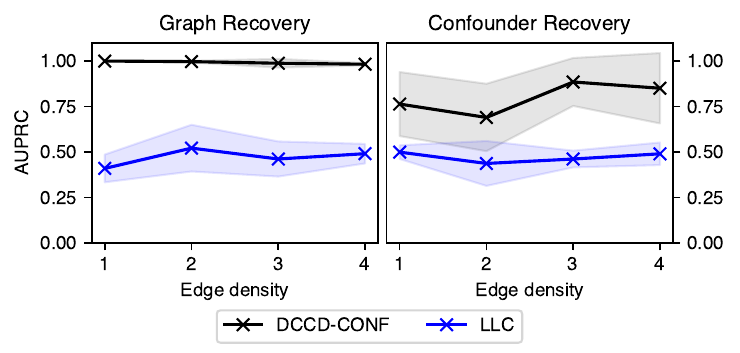}
        \caption{Performance v. edge density}
        \label{fig:auprc-edge-density}
    \end{subfigure}
\caption{Comparison of DCCD-CONF and baseline methods on causal graph recovery and confounder identification, measured using AUPRC across varying model parameters. }
\label{fig:auprc}
\end{figure}

\paragraph{Experiments on Non-contractive DAGs.}\label{app:non-contractive}

We evaluated the performance of DCCD-CONF and baseline methods on non-contractive SEMs, where the ground truth DMG is acyclic. While DCCD-CONF assumes a contractive causal mechanism, we adapt it for non-contractive settings using the preconditioning trick proposed by \citet{sethuraman2023nodags}. This approach introduces a learnable diagonal preconditioning matrix  $\mathbf{\Lambda}$ , transforming the causal mechanism as follows:
$$\hat{\vfg}_{x} = \mathbf{\Lambda}^{-1} \circ \vfg_{x} \circ \mathbf{\Lambda},$$
where  $\vfg_x$ , as defined in \eqref{eq:nn-causal-mech}, remains contractive (see \citet{sethuraman2023nodags} for details). We vary the confounder ratio, and the results are summarized in Figure~\ref{fig:non-contractive}. As shown in Figure, DCCD-CONF effectively learns the ADMG even in non-contractive SEM settings, demonstrating competitive performance against the baselines.

\paragraph{Performance comparison vs. sample size}

We also assess the sample requirements of DCCD-CONF. Figure~\ref{fig:num-samples} summarizes the results obtained by varying the number of samples per intervention. As shown in the figure, when the confounder ratio is 0.4, DCCD-CONF achieves low SHD even with 500 samples per intervention and attains near-perfect accuracy from 2000 samples onward. However, performance declines slightly as the confounder ratio increases.

\paragraph{Performance comparison vs. max endogenous noise st. deviation}

In this setting, we compare DCCD-CONF with the baseline by varying the maximum standard deviation of the endogenous noise terms between 0.2 and 0.8, the results are summarized in Figure~\ref{fig:noise-scale}. DCCD-CONF outperforms the baselines for all noise standard deviations. However, the performance of the models does deteriorate slightly as the noise standard deviation increases. 

\paragraph{Performance comparison vs. outgoing edge density}

In this case, the expected number of outgoing edges from each node is varied between 1 and 4. This affects the sparsity of the resulting graph. The results are summarized in Figure~\ref{fig:edge-density}. As seen from Figure~\ref{fig:edge-density}, DCCD-CONF still outperforms the baselines, even though the performance of all the models worsens as the edge density increases.

\subsection{Additional performance metrics}

In addition to SHD for directed edge recovery and F1 score for bi-directional edge recovery. We also AUPRC to compare DCCD-CONF and the baseline for all of the experimental settings stated in Appendix~\ref{app:exp-settings}. The results are summarized in Figure~\ref{fig:auprc}. Overall, DCCD-CONF performs better than LLC on nonlinear SEMs across all the settings, while achieving perfect AUPRC scores in several cases.

\subsection{Comparison with FCI-JCI}

\begin{figure}[t]
    \centering
    \includegraphics[width=0.5\linewidth]{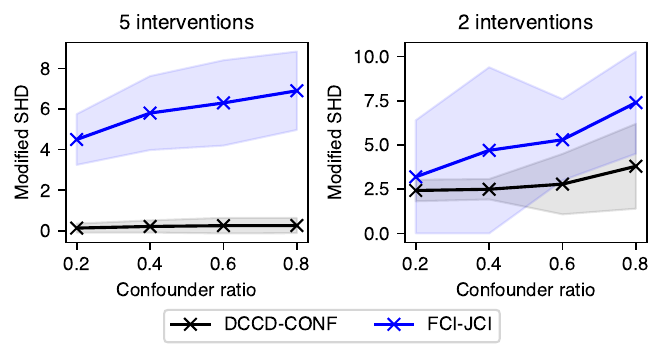}
    \caption{Performance comparison between DCCD-CONF and FCI-JCI with respect to modified SHD as the error metric. The number of observed nodes was set to $d=5$. For the left plot, all single node interventions along with the observational data were provided as training data. For the right plot, observational data and interventions over two of the nodes (randomly chosen) were used as training data. }
    \label{fig:fci-jci-comparison}
\end{figure}

Here, we compare the performance of DCCD-CONF with FCI-JCI \cite{mooij2020joint}, which is an extension of FCI algorithm that is capable of handling multiple contexts (in this case interventional settings). FCI-JCI outputs a \emph{Partial Ancestral Graph} (PAG), which is a graph structure that represents the equivalence class of MAGs. We define a modified SHD score in order to check if the DMG estimated by DCCD-CONF belongs to the same equivalence class of the ground truth DMG. To that end, we convert the ground-truth DMG and the estimated DMG to their augmented DMGs and then construct the MAG of the acyclified version of the augmented DMGs. The modified SHD score then computes the discrepancies in the conditions of Theorem~\ref{thm:mag-equivalence}, i.e., we count: (i) the number of extra edges ($N_1$) using the skeletons of the estimated MAG and ground truth MAG, (2) number of mismatched unshielded colliders ($N_2$), and (3) discrepancies in the discriminating paths ($N_3$). Similarly, for FCI-JCI, we count the disagreements between the ground-truth MAG and the estimated PAG, i.e., mismatch in skeleton $(N_1)$, mismatch in unshielded colliders ($N_2$), invariant edge orientation discrepancies ($N_3$). Finally, the \emph{modified SHD} $= N_1 + N_2 + N_3$. We compare DCCD-CONF and JCI-FCI over two different settings: (i) the training data consists of observational data and single node interventions over all the nodes in the graph, and (ii) the training data consists of observational data and interventions over 2 nodes (randomly chosen) in the graph. Due to the complexity of computing the modified SHD (as it involves iterating over the discriminating paths and inducing paths) we fix the number of observed nodes to be $d=5$. In the both the cases, the confounder ratio is varied between 0.2 and 0.8, and nonlinear SEM is used to generate the data. The results are summarized in Figure~\ref{fig:fci-jci-comparison}.

\begin{wrapfigure}{r}{0.3\linewidth}
\vspace{-0.5cm}
    \centering
    \includegraphics[width=\linewidth]{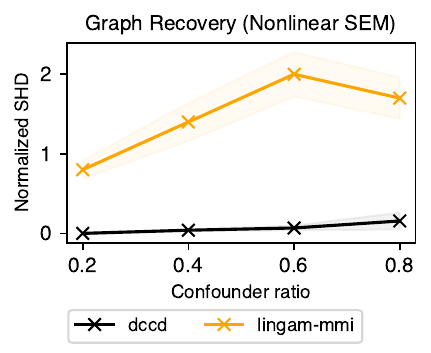}
    \caption{Performance comparison between DCCD-CONF and LiNGAM-MMI on $d=5$ node graphs. }
    \label{fig:lingam-mmi-comparison}
    \vspace{-1cm}
\end{wrapfigure}

As seen from Figure~\ref{fig:fci-jci-comparison}, DCCD-CONF outperforms FCI-JCI in the both the settings. However, the performance does decrease as the number of training interventions reduces. We attribute this to the increase sample requirements as the number of training interventions goes down. 

\subsection{Comparison with LiNGAM-MMI}

We compare the performance of DCCD-CONF with LiNGAM-MMI~\cite{lingam-mmi}, which extends the ICA-based LiNGAM~\cite{shimizu2006linear} to handle hidden confounding. Since LiNGAM-MMI requires iterating over all possible node permutations, its computational cost grows rapidly with the number of nodes; hence, we restrict our comparison to $d=5$. The training data include both observational samples and single-node interventions for all nodes, with 500 samples per interventional setting. We vary the confounder ratio (i.e., the ratio of confounders to observed nodes) between 0.2 and 0.8. As shown in Figure~\ref{fig:lingam-mmi-comparison}, DCCD-CONF consistently outperforms LiNGAM-MMI across all tested confounder ratios.

\subsection{Hyperparameter Sensitivity}

We evaluate the sensitivity of DCCD-CONF to its hyperparameters: (i) the directed edge sparsity regularization coefficient $\lambda_c$, and (ii) the bidirected edge sparsity regularization coefficient $\rho$. Figure~\ref{fig:param-sensitivity} summarizes the results for $\lambda_c, \rho \in {0.1, 0.01, 0.001}$ on graphs with $d=10$ nodes and a confounder ratio of 0.3. The model was trained using both observational data and all single-node interventions. As shown in the figure, DCCD-CONF remains fairly robust to hyperparameter variations, achieving normalized SHD values below 0.3 and F1 scores above 0.65 for most combinations of $\lambda_c$ and $\rho$.

\begin{figure}[h]
    \centering
    \includegraphics[width=0.5\linewidth]{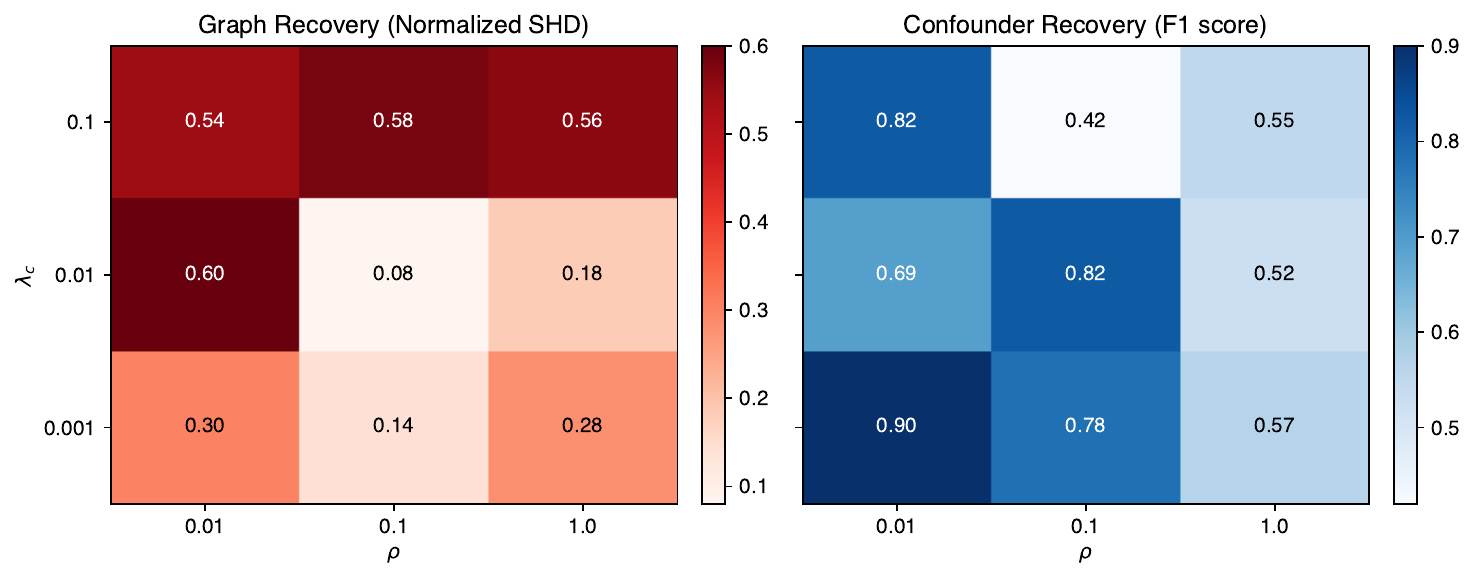}
    \caption{Illustration of DCCD-CONF performance for various choices of hyperparameters $\lambda_c$ (directed edge sparsity regularization coeff.) and $\rho$ (bidirected edge sparsity regularization coeff.). }
    \label{fig:param-sensitivity}
\end{figure}

\subsection{Training Time Comparison}

Figure~\ref{fig:training-time} compares the training times of DCCD-CONF and baseline methods. Unlike the other algorithms, LLC does not require stochastic gradient-based training, as it relies solely on solving a series of linear regressions. Consequently, LLC is considerably faster, as shown in Figure~\ref{fig:training-time}. Excluding LLC, NODAGS-Flow is the most efficient in terms of runtime; however, it cannot account for confounders within its framework. DCCD-CONF achieves training times comparable to ADMG and DCDI while simultaneously handling confounders, cycles, and nonlinear dependencies. The training times are computed on $d=10$ node graphs with training dataset consisting of observational and all single-node interventions. DCCD-CONF was trained for 200 epochs. All models were run on RTX6000 GPUs.

\begin{figure}[t]
    \centering
    \includegraphics[width=0.4\linewidth]{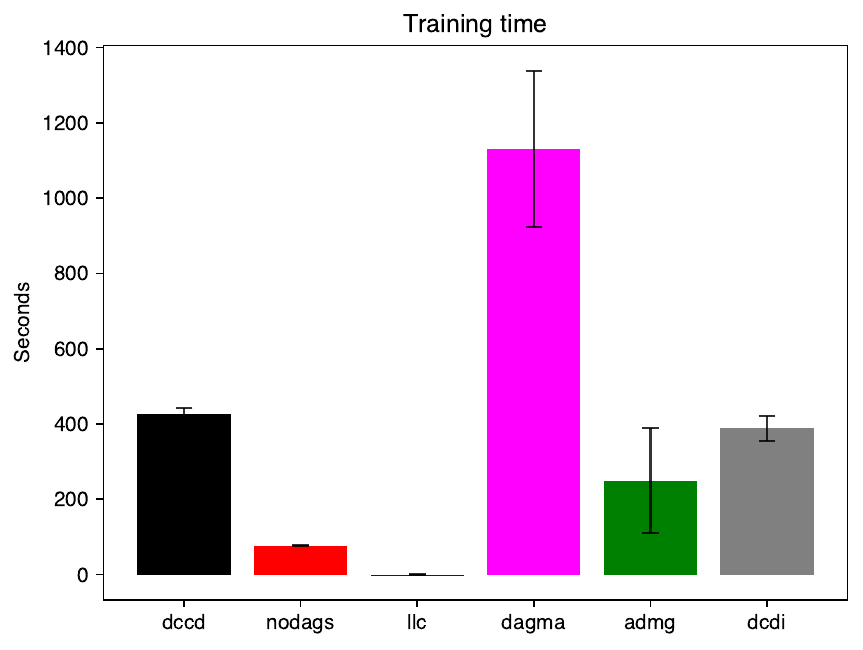}
    \caption{Training time comparison between DCCD-CONF and the baselines}
    \label{fig:training-time}
\end{figure}

\subsection{Additional Results on Perturb-CITE-seq Dataset}

In addition to test-set NLL, we evaluate the performance of DCCD-CONF and the baselines on the Perturb-CITE-seq dataset~\cite{frangieh2021multimodal} using \emph{Interventional Mean Absolute Error} (I-MAE) as the evaluation metric. I-MAE is computed as the mean of $\|\vx + \vfg_x(\vx)\|_1/d$ over all observations $\vx$ in the held-out test set. Beyond the baselines discussed in Section~\ref{sec:experiments}, we also include Bicycle~\cite{rohbeck2024bicycle} and DCDFG~\cite{dcdfg} for comparison. Bicycle supports nonlinear and cyclic structures, whereas DCDFG restricts the search space to acyclic graphs. The results, summarized in Table~\ref{tab:perturb-cite-seq-mae}, show that DCCD-CONF remains competitive with state-of-the-art methods under the I-MAE metric.

\begin{table}[h]
    \centering
    \caption{Results on Perturb-CITE-seq~\cite{frangieh2021multimodal} gene perturbation dataset. The table presents the average \emph{Mean Absolute Error} (MAE) on the test set, averaged over multiple trials (standard deviation is reported within paranthesis).}
    \vspace{0.2cm}
    \begin{tabular}{l|c|c|c}
    \hline
        \textbf{Method} & \textbf{Control} & \textbf{Co-Culture} & \textbf{IFN}-$\gamma$ \\ \hline\hline
        DCCD-CONF & 0.781 (0.037)  & 0.765 (0.046) & 0.843 (0.035) \\ 
        NODAGS    & 0.847 (0.018) & 0.762 (0.018) & 0.861 (0.023) \\ 
        Bicycle   & 0.782 (0.042) & 0.735 (0.036) & 0.883 (0.028) \\ 
        DCDFG     & 0.845 (0.066) & 0.774 (0.038) & 0.891 (0.041) \\ \hline
    \end{tabular}
    \label{tab:perturb-cite-seq-mae}
\end{table}

Additionally, we report the adjacency matrix of the recovered causal graph for the cell condition ``Co-Culture'' in Figure~\ref{fig:cocult-graph}. DCCD-CONF identified 38 feedback cycles. This number validates prior work showing that gene regulatory networks are rich in feedback loops~\cite{frangieh2021multimodal}. 

\begin{figure}[h]
    \centering
    \includegraphics[width=0.45\linewidth]{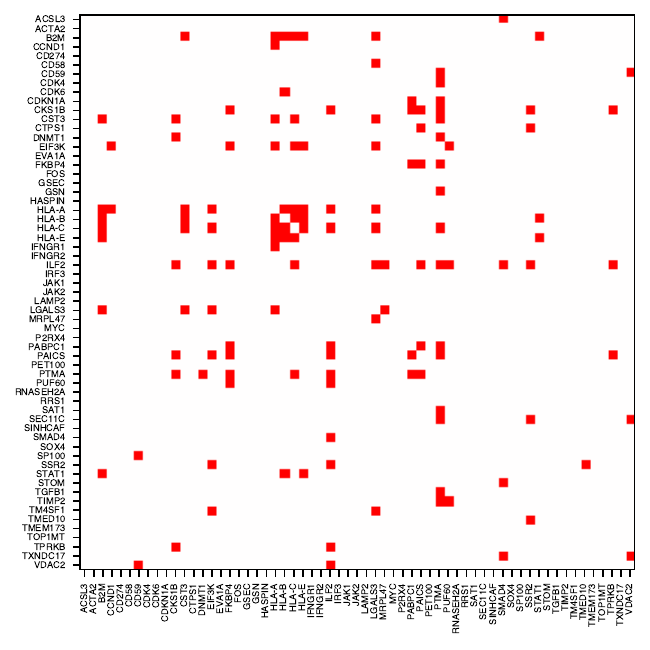}
    \caption{Adjacency matrix of learnt by DCCD-CONF for ``Co-Culture'' cell condition of Perturb-CITE-seq dataset. }
    \label{fig:cocult-graph}
\end{figure}

\clearpage
\subsection{Protein Signaling Dataset}

We further evaluate DCCD-CONF on a biological dataset for protein signaling network discovery~\cite{sachs_causal_2005}, which is widely used as a benchmark for causal discovery algorithms. 
\begin{wraptable}{r}{0.27\textwidth}
\vspace{-0.15cm}
\caption{Performance comparison on \citet{sachs_causal_2005} protein signaling dataset.}
\label{tab:sachs}
\begin{tabular}{l|c}
\textbf{Method}  & \textbf{SHD} \\ \hline
DCCD-CONF                  & \textbf{18}\\
DAG-GNN~\cite{yu2019dag}   & 19\\
DAGMA                      & 21\\
NOTEARS                    & 22\\
\hline
\end{tabular}
\end{wraptable}
The dataset contains continuous measurements of multiple phosphorylated proteins and phospholipid components in human immune system cells, with the corresponding network capturing the ordering of interactions among pathway components. Based on $n = 7466$ samples across $m = 11$ cell types, \citet{sachs_causal_2005} identified 20 edges in the underlying graph. Using the consensus network from~\cite{sachs_causal_2005} as ground truth, we evaluate performance using the Structural Hamming Distance (SHD) as the error metric. The results, summarized in Table~\ref{tab:sachs}, show that DCCD-CONF performs comparably to the baselines. The recovered directed graph is visualized in Figure~\ref{fig:sachs}.

\begin{figure}[h]
    \centering
    \includegraphics[width=0.6\linewidth]{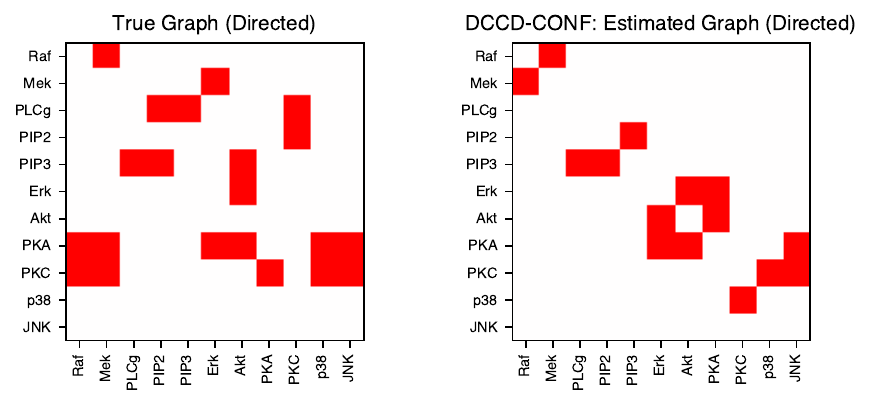}
    \caption{(Left) Consensus graph from \citet{sachs_causal_2005}, (right) graph learned by DCCD-CONF.}
    \label{fig:sachs}
\end{figure}

\end{document}


\appendix
{\Large \bfseries Appendices}

\vspace{0.3cm}

The appendices are structured as follows: Appendix~\ref{app:theory} presents the theoretical foundations of differentiable cyclic causal discovery in the presence of unmeasured confounders, including the proof of Theorem~\ref{thm:main-theorem}, and a characterization of the equivalence class of DMGs that maximize the score function. Appendix~\ref{app:implementation-details} provides implementation details of DCCD-CONF and the baselines. Finally, Appendix~\ref{app:additional-experiments} provides additional experimental results comparing DCCD-CONF with the baselines.
\section{Theory} \label{app:theory}

In this section, we establish the theoretical foundations of differentiable cyclic causal discovery in the presence of unmeasured confounders. We begin by reviewing key definitions and results from prior work that are essential for proving Theorem~\ref{thm:main-theorem}, starting with fundamental graph terminology.

\subsection{Preliminaries} \label{app:prelim}

Consider a directed mixed graph $\gG = (\sv, \se, \sbe)$. A \emph{path} $\pi$ between nodes $i$ and $j$ is a sequence $(i_0, \varepsilon_1, i_1, \dots, \varepsilon_n, i_n )$, where $\{i_0, \dots, i_n\} \subseteq \sv$ and $\{\varepsilon_1, \dots, \varepsilon_n\} \subseteq \se \cup \sbe$, with $i_0 = i$ and $i_n = j$. A path is \emph{directed} if each edge $\varepsilon_k$ follows the form $i_{k-1} \to i_k$ for all $k \in [n]$. A cycle through node $i$ consists of a directed path from $i$ to some node $j$ and an additional edge $j \to i$. For any node \(i \in \sv\), the \emph{ancestor set} is defined as \(\an_\gG(i) := \{j \in \sv \mid \text{a directed path from } j \text{ to } i \text{ exists in } \gG\}\), while the \emph{descendant set} is given by \(\de_\gG(i) := \{j \in \sv \mid \text{a directed path from } i \text{ to } j \text{ exists in } \gG\}\). The \emph{spouse set} of a node $i$ is defined as $\mathrm{sp}_\gG(i) := \{j \in \sv \mid j \lra i \in \sbe\}$. If $i$ is both a spouse and an ancestor of $j$, this creates a \emph{almost directed cycle}. A mixed graph is called \emph{ancestral} if it contains neither a directed or an almost directed cycle. The \emph{strongly connected component} of i, denoted \(\scn_\gG(i)\), is the intersection of its ancestors and descendants: \(\scn_\gG(i) = \an_\gG(i) \cap \de_\gG(i)\). The \emph{district} of a node $i \in \sv$ is defined as $\dis_\gG(i) = \{j \mid j \lra \cdots \lra i \in \gG \text{ or } i = j\}$. We can apply these definitions to subsets $\su \subseteq \sv$ by taking union of over the items of the subset, for instance, $\an_\gG(\su) = \cup_{i \in \su} \an_\gG(i)$. A vertex set $\sa \subseteq \sv$ is said to be \emph{barren} if $i \in \sa$ has no descendants in $\gG$ that are in $\sa$, however, $i$ may have descendants in $\gG$ not in $\sa$, that is, $\barr_\gG(\sa) = \{i \mid i \in \sa; \de_\gG(i) \cap \sa = \{i\}\}$. A subset $\sa \subseteq \sv$ is \emph{ancestrally closed} if $\sa$ contains all of its ancestors. We define $\sa(\gG) := \{\sa \mid \an_\gG(\sa) = \sa\}$ as the set of ancestrally closed sets in $\gG$. 

\begin{define}[Collider]
    For a directed mixed graph $\gG = (\sv, \se, \sbe)$, a node $i_k \in \sv$ in a path $\pi = (i_0, \varepsilon_1, i_1, \varepsilon_2, \ldots, i_{n-1}, \varepsilon_n, i_n)$ is called a \emph{collider} if $k \neq 0,n$ (non-endpoint) and the two edges $\varepsilon_{k}, \varepsilon_{k+1}$ have their heads pointed at $i$, i.e., the subpath $(i_{k-1}, \varepsilon_k, i_k, \varepsilon_{k+1}, i_{k+1})$ is of the form $i_{k-1} \to i_k \gets i_{k+1}$, $i_{k-1} \lra i_k \gets i_{k+1}$, $i_{k-1} \to i_k \lra i_{k+1}$, $i_{k-1} \lra i_k \lra i_{k+1}$. The node $i_k$ is called a \emph{non-collider} if $i_k$ is not a collider. 
\end{define}

Note that the end points of a walk are always non-colliders. We now define the notion of $d$-separation extended to DMGs. 

\begin{define}[$d$-separation]
    Let $\gG = (\sv, \se, \sbe)$ be a directed mixed graph and let $C \subseteq \sv$ be a subset of nodes. A path $\pi = (i_0, \varepsilon_1, i_1, \varepsilon_2, \ldots, i_{n-1}, \varepsilon_n, i_n)$ is said to be $d$-\emph{blocked} given $\sce$ if
    \begin{enumerate}
        \item $\pi$ contains a collider $i_k \notin \an_\gG(C)$
        \item $\pi$ contains a non-collider $i_k \in C$.
    \end{enumerate}
    The path $\pi$ is said to be $d$-\emph{open} given $C$ if it is not $d$-blocked. Two subsets of nodes $A, B \subseteq \sv$ is said to be $d$-separated given $C$ if all paths between $a$ and $b$, where $a\in A$ and $b \in B$, is $d$-blocked given $C$, and is denoted by $$A \mperp_{\gG}^d B \mid C.$$ 
\end{define}

If the underlying graph is acyclic, $d$-separation implies conditional independence. That is, for subsets of nodes $A, B, C \subseteq \sv$,
$$A \mperp_{\gG}^d B \mid C \implies \rvx_A \mperp_{p_\gG} \rvx_B \mid \rvx_C,$$
where $\mperp_{p_\gG}$ denotes conditional independence, and $p_\gG$ denotes the observational distribution. This is known as the \emph{directed global Markov property} of $\gG$ \citep{forre2017markov}. However, in general, cyclic graphs do not obey the directed global Markov property as shown by the counterexample below taken from \citep{bongers2021foundations, spirtes2013directed}.

\begin{figure}[t]
    \centering
    \scalebox{0.8}{
    \begin{tikzpicture}
    \begin{scope}[xshift=0cm]
        \node (x1) [circle, draw=black] at (0,0) {$X_1$};
        \node (x2) [circle, draw=black] at (2,0) {$X_2$}; 
        \node (x3) [circle, draw=black] at (0,-2) {$X_3$};
        \node (x4) [circle, draw=black] at (2,-2) {$X_4$}; 

        \node at (1, -3) {$\gG$};
    \end{scope}

    \begin{scope}[xshift=6cm]
        \node (xa1) [circle, draw=black] at (0,0) {$X_1$};
        \node (xa2) [circle, draw=black] at (2,0) {$X_2$}; 
        \node (xa3) [circle, draw=black] at (0,-2) {$X_3$};
        \node (xa4) [circle, draw=black] at (2,-2) {$X_4$}; 

        \node at (1, -3) {$\acy(\gG)$};
    \end{scope}

    \draw [thick, ->] (x1) -- (x3);
    \draw [thick, ->] (x3) to[out=-30, in=-150] (x4);
    \draw [thick, ->] (x4) to[out=150, in=30] (x3);
    \draw [thick, ->] (x2) -- (x4);

    \draw [thick, ->] (xa1) -- (xa3);
    \draw [thick, ->] (xa1) -- (xa4);
    \draw [thick, ->] (xa2) -- (xa3);
    \draw [thick, ->] (xa2) -- (xa4);
    \draw [thick, <->, red] (xa3) -- (xa4);

    \end{tikzpicture}}
    \caption{(Left) Illustration of a directed mixed graph that disobeys directed global Markov property. (Right) The graph on the right represents the graph $\gG$ after the acyclification process. }
    \label{fig:counter-example-acyclification}
\end{figure}

\begin{example} \label{example:counter-example}
    Consider the SEM given by:
    \begin{gather*}
    f_1(X, Z) = Z_1, \quad f_1(X, Z) = Z_2, \quad f_3(X, Z) = X_1X_4 + Z_3 \quad 
    f_4(X, Z) = X_2X_3 + Z_4,
    \end{gather*}
    and $p_Z$ is the standard normal distribution. One can check that $X_1$ is not independent of $X_2$ given $\{X_3, X_4\}$. However, the $X_1$ and $X_2$ are $d$-separated given $\{X_3, X_4\}$ in the graph corresponding to the SEM (see Figure~\ref{fig:counter-example-acyclification}). 
\end{example}

\citet{forre2017markov} introduced $\sigma$-separation as a generalization of d-separation to extend the directed global Markov property to cyclic graphs. This concept was motivated by applying $d$-separation to the acyclified version of the DMG. Before delving into $\sigma$-separation, we first define the acyclification procedure of a directed mixed graph, following \citep{bongers2021foundations}.


\begin{define}[Acyclification of a directed mixed graph]
Let $\gG = (\sv, \se, \sbe)$ denote a directed mixed graph, the \emph{acyclification} of $\gG$ maps $\gG$ to the acyclified graph $\acy(\gG) = (\sv, \hat{\se}, \hat{\sbe})$, where $j \to i \in \hat{\se}$ if and only if $j \in \pa_\gG(\scn_\gG(i))\setminus \scn_\gG(i)$, and $i\lra j \in \hat{\sbe}$ if and only if there exists $i^\prime \in \scn_\gG(i)$ and $j^\prime \in \scn_\gG(j)$ such that $i^\prime = j^\prime$ or $i^\prime \lra j^\prime \in \sbe$. 
\end{define}

It is important to note that the existence of a acylified graph for an SEM relies on the solvability of the SEM over all the strongly connected components of the DMG corresponding to the SEM. This is to say that we have a solution for $\rvx_{\scn_\gG(i)}$ given $\rvx_{\pa(\scn_\gG(i))\setminus\scn_\gG(i)}$ and $\rvz_{\scn_\gG(i)}$. This is indeed the case as we assume that the forward map $\vff_x$ is invertible for the all the SEMs under consideration, see \citet{bongers2021foundations} for more details. Figure~\ref{fig:counter-example-acyclification} illustrates the acyclification process for the graph corresponding to Example~\ref{example:counter-example}.

\begin{define}[$\sigma$-separation]
    Let $\gG = (\sv, \se, \sbe)$ be a directed mixed graph and let $C \subseteq \sv$ be a subset of nodes. A path $\pi = (i_0, \varepsilon_1, i_1, \varepsilon_2, \ldots, i_{n-1}, \varepsilon_n, i_n)$ is said to be $\sigma$-\emph{blocked} given $\sce$ if
    \vspace{-0.2cm}
    \begin{enumerate}
    \setlength\itemsep{0em}
        \item the first node of $\pi$, $i_0 \in C$ or its last node $i_n \in C$, or
        \item $\pi$ contains a collider $i_k \notin \an_\gG(C)$
        \item $\pi$ contains a non-collider $i_k \in C$ that points towards a neighbor that is not in the same strongly connected component as $i_k$ in $\gG$, i.e, such that $i_{k-1} \gets i_k$ in $\pi$ and $i_{k-1} \notin \scn_\gG(i_k)$, or $i_k \to i_{k+1}$ in $\pi$ and $i_{k+1} \notin \scn_\gG(i_k)$. 
    \end{enumerate}
    The path $\pi$ is said to be $\sigma$-\emph{open} given $C$ if it is not $\sigma$-blocked. Two subsets of nodes $A, B \subseteq \sv$ is said to be $\sigma$-separated given $C$ if all paths between $a$ and $b$, where $a\in A$ and $b \in B$, is $\sigma$-blocked given $C$, and is denoted by $$A \mperp_{\gG}^\sigma B \mid C.$$ 
\end{define}
Note that $\sigma$-separation reduces to $d$-separation for acyclic graphs, that is, when $\scn_\gG(i) = \{i\}$ for all $i \in \sv$. The following result in \citep{forre2017markov} relates $\sigma$-separation and $d$-separation.

\begin{prop}[\cite{forre2017markov}]
    Let $\gG = (\sv, \se, \sbe)$ be a directed mixed graph, then for $A, B, C \subseteq \sv$,
    $$A\mperp_\gG^\sigma B \mid C \iff A \mperp_{\text{\normalfont acy}(\gG)}^d B \mid C. $$
    \label{prop:sigma-to-d}
\end{prop}

\noindent
Using $\sigma$-separation we can now define the general directed global Markov property. 

\begin{define}[General directed global Markov property \citep{forre2017markov}]
    Let $\gG = (\sv, \se, \sbe)$ be a directed mixed graph and $p_\gG$ denote the probability density of the observations $\rvx$. The probability density $p_\gG$ satisfies the \emph{general directed global Markov property} if for $A, B, C \subseteq \sv$
    $$A \mperp_\gG^\sigma B \mid C \implies \rvx_A \mperp_{p_\gG} \rvx_B \mid \rvx_C,$$
    that is, $\rvx_A$ and $\rvx_B$ are conditionally independent given $\rvx_C$. 
\end{define}

\subsection{Joint Causal Modelling and Markov properties}

\begin{figure}[t]
    \centering
    \scalebox{0.8}{
    \begin{tikzpicture}
    \begin{scope}[xshift=0cm]
        \node (x1) [circle, draw=black] at (0,0) {$X_1$};
        \node (x2) [circle, draw=black] at (2,0) {$X_2$}; 
        \node (x3) [circle, draw=black] at (0,-2) {$X_3$};
        \node (x4) [circle, draw=black] at (2,-2) {$X_4$}; 

        \node at (1, -3) {$\gG$};
    \end{scope}

    \begin{scope}[xshift=4cm]
        \node (xa1) [circle, draw=black] at (0,0) {$X_1$};
        \node (xa2) [circle, draw=black] at (2,0) {$X_2$}; 
        \node (xa3) [circle, draw=black] at (0,-2) {$X_3$};
        \node (xa4) [circle, draw=black] at (2,-2) {$X_4$}; 

        \node at (1, -3) {$\tdo(\{X_3\})(\gG)$};
    \end{scope}

    \begin{scope}[xshift=8cm]
        \node (xb1) [circle, draw=black] at (0,0) {$X_1$};
        \node (xb2) [circle, draw=black] at (2,0) {$X_2$}; 
        \node (xb3) [circle, draw=black] at (0,-2) {$X_3$};
        \node (xb4) [circle, draw=black] at (2,-2) {$X_4$}; 

        \node at (1, -3) {$\tdo(\{X_4\})(\gG)$};
    \end{scope}

    \begin{scope}[xshift=12.8cm]
        \node (xc1) [circle, draw=black] at (0,0) {$X_1$};
        \node (xc2) [circle, draw=black] at (2,0) {$X_2$}; 
        \node (xc3) [circle, draw=black] at (0,-2) {$X_3$};
        \node (xc4) [circle, draw=black] at (2,-2) {$X_4$}; 
        \node (c1) [circle, draw=black] at (-1, -1) {$C_1$};
        \node (c2) [circle, draw=black] at (3, -1) {$C_2$};
        \node at (1, -3) {$\gG^\si$};
    \end{scope}

    \draw [thick, ->] (x1) -- (x3);
    \draw [thick, ->] (x3) to[out=-30, in=-150] (x4);
    \draw [thick, ->] (x4) to[out=150, in=30] (x3);
    \draw [thick, ->] (x2) -- (x4);

    \draw [thick, ->] (xa2) -- (xa4);
    \draw [thick, ->] (xa3) -- (xa4);

    \draw [thick, ->] (xb1) -- (xb3);
    \draw [thick, ->] (xb4) -- (xb3);

    \draw [thick, ->] (xc1) -- (xc3);
    \draw [thick, ->] (xc3) to[out=-30, in=-150] (xc4);
    \draw [thick, ->] (xc4) to[out=150, in=30] (xc3);
    \draw [thick, ->] (xc2) -- (xc4);
    \draw [thick, red, ->] (c1) -- (xc3);
    \draw [thick, red, ->] (c2) -- (xc4);
    \end{tikzpicture}}
    \caption{Illustration of the augmented graph $\gG^\si$ corresponding to the set of interventional targets $\si = \{\emptyset, \{X_3\}, \{X_4\}\}$. $\tdo(\{X_3\})$ and $\tdo(\{X_3\})$ corresponds to the graph obtained after hard interventions on $X_3$ and $X_4$ respectively. The augmented graph here is the union of the graphs $\gG$, $\tdo(\{X_3\})$, $\tdo(\{X_4\})$ along with the context variables.}
    \label{fig:augmented-graph}
\end{figure}


In order to incorporate multiple interventional settings into a single causal modeling framework, we follow the \emph{joint causal model} introduced by \cite{mooij2020joint}, where we augment the system with a set of context variables $\rvc^\si = (\rvc_1, \ldots, \rvc_K)$ each corresponding to a non-empty interventional setting. In this case, $\rvc_k = \emptyset$ for all $k = 1, \dots, K$ corresponds to the observational setting. We construct an augmented graph, denoted by $\gG^\si$ consisting of both the system variables $\rvx$ and the context variables $\rvc^\si$, such that the $\ch_\gG(\rvc_k) = I_k$, and no context variable has any parent or a spouse. Figure~\ref{fig:augmented-graph} illustrates the augmented graph for the graph from Example~\ref{example:counter-example} and the intervention sets $\si = \{\emptyset, \{X_3\}, \{X_4\}\}$. The new system containing both the observed variables and the context variables is called the \emph{meta system}. Finally, given a family of interventional targets $\si = \{I_k\}_{k=1}^K$ and the corresponding context variable $\rvc_k$, the structural equations of the meta systems governing the observations $\rvx$ and $\rvc^\si$ has the following form: 
$$
\tilde{F}_i(\rvx_{\pa_\gG(i)}, \rvc^\si_{\pa_\gG^\si(i)}, Z_i) = \begin{cases}
    (\rvc_k)_i, & \text{if } \exists \, k \in [K] \text{ s.t. } \rvc_K \neq \emptyset,  \text{ and } X_i \in I_k,\\
    F_i(\xpa{i}, Z_i), & \text{otherwise}.
\end{cases}
$$
We call the distribution over the context variables $p(\rvc^\si)$ the \emph{context distributions} and as noted by \citet{mooij2020joint}, the behavior of the system is usually invariant to the context distribution. We assume access to the context distribution as the interventional settings are known apriori. Note that, the observational distribution corresponds to $p_{\gG^\si}(\rvx\mid \rvc_1 = \cdots = \rvc_K = \emptyset)$. Similarly, the interventional distribution for the interventional setting $I_k$ corresponds to $p_{\gG^\si}(\rvx \mid \rvc_k = \bm{\xi}_{I_k}, \rvc_{-k} = \emptyset)$, i.e., 
$$
p_{\gG^\si}(\rvx \mid \rvc_k = \bm{\xi}_{I_k}, \rvc_{-k} = \emptyset) = p_{\tdo(I_k)(\gG)}(\rvx)
$$
Furthermore, 
\begin{equation}
p_{\gG^\si}(\rvc^\si, \rvx) = p_{\gG^\si}(\rvc^\si)p_{\gG^\si}(\rvx \mid \rvc^\si).
\label{eq:prob-meta-system}
\end{equation}


Recall that for the interventional setting $I_k$, the probability density function governing the observations $\rvx$ is given by \eqref{eq:prob-distribution-interventions}, which we repeat here for convenience
\begin{equation*}
    p_{\tdo(I_k)(\gG)}(\rvx) = p_I(\rvc)p_{Z}\Big(\big[\vff_x^{(I_k)}(\rvx)\big]_{\su_k}\Big)\big|\det\big(\mJ_{\vff_x^{(I_k)}}(\rvx)\big)\big|,
\end{equation*}

\begin{define}
    Let $\gG = (\sv, \se, \sbe)$ be a directed mixed graph, and $\si = \{I_k\}_{k=0}^K$ with $I_0 = \emptyset$ be a family of interventional targets. Let $\sm_\si(\gG)$ denote the set of positive densities $p_{\gG^\si} : \R^{2d} \to \R$ such that $p_{\gG^\si}$ is given by \eqref{eq:prob-meta-system} for all $\vffc:\R^{2d} \to \R^d$, with $F_i(\rvx, \rvz) = F_i(\rvx_{\pa_\gG}(i), Z_i)$, such that the resulting forward map $\vff_x$ is unique and invertible, and $\mSigma_Z \succ 0$ and $(\mSigma_Z)_{ij} \neq 0$ if and only if $i \lra j \in \sbe$.
\end{define}


\begin{prop}\label{prop:general-markov-property}
    For a directed mixed graph $\gG = (\sv, \se, \sbe)$ and a family of interventional targets $\si = \{I_k\}_{k=0}^K$ such that $I_0 = \emptyset$, let $p \in \sm_\si(\gG)$, then $p$ satisfies the general directed global Markov property relative to $\gG^\si$.
\end{prop}

\begin{proof}
    For an DMG $\gG$ and a choice of $\vffc : \R^{2d} \to \R^d$ such that $\vff_x$ is unique and invertible and $\mathbf{\Sigma} \succ 0$, the structural equations are uniquely solvable with respect to each strongly connected component of $\gG$. Morover, the addition of context variables in the augmented graph does not introduce any new cycles. Therefore the meta system forms a simple SCM. Thus, from Theorem A.21 in \citep{bongers2021foundations}, the distribution $p_{\gG^\si}$ is unique and it satisfies the general directed global Markov property. 
\end{proof}

We now define the notion of interventional Markov equivalence class for DMGs based on the set of distribution induced by them. 

\begin{define}[$\si$-Markov Equivalence Class]
    Two directed mixed graphs $\gG_1$ and $\gG_2$ are $\si$-Markov equivalent if and only if $\sm_\si(\gG_1) = \sm_\si(\gG_2)$, denoted as $\gG_1 \equiv_\si \gG_2$. The set of all directed mixed graphs that are $\si$-Markov equivalent to $\gG_1$ is the $\si$-Markov equivalence class of $\gG_1$, denoted as $\si$-MEC$(\gG_1)$. 
\end{define}

From Proposition~\ref{prop:sigma-to-d}, for a DMG $\gG$ and a family of interventional targets $\si = \{I_k\}_{k=0}^K$, any $\sigma$-separation statement in $\gG^\si$ translates to a $d$-separation statement in the acyclified graph $\acy(\gG^\si)$. Consequently, the acyclified graph $\acy(\gG^\si)$ is equivalent to the augmented graph $\gG^\si$. Furthermore, by the results of \citep{richardson2009factorization}, $p_{\gG^\si}$ admits a factorization, as formalized in the theorem below.

\begin{theorem}[\cite{richardson2009factorization}]
    A probability distribution $p$ obeys the directed Markov property for an acyclic directed mixed graph $\gG$ if and only if for every $A \in \mathcal{A}(\gG)$,
    \begin{equation}
        p(\rvx_A) = \prod_{H\in [A]_\gG} p(\rvx_H \mid \rvx_{\text{tail}(H)})
        \label{eq:factorization}
    \end{equation}
    where $[A]_\gG$ denotes a partition of $A$ into sets $\{H_1, \ldots, H_k\}$. 
    \label{thm:factorization}
\end{theorem}

Each term in the factorization above is of the form $p(\rvx_H \mid \rvx_T)$, $H, T \subseteq \sv$, and $H\cap T = \emptyset$. Following \citet{richardson2009factorization, lauritzen1997local} we refer to $H$ as the \emph{head} of the term $p(\rvx_H \mid \rvx_T)$, and $T$ as the \emph{tail}. An ordered pair of sets $(H, T)$ form the head and tail of the factor associated with $\gG$ if and only if all of the following conditions hold: 
\begin{enumerate}
    \item $H = \text{barren}_\gG(\text{an}_\gG(H))$,
    \item If every nodes $h \in H$ is connected via a path in the graph obtained by removing all the directed edges in the graph $\gG$ when restricted to the nodes $\text{an}_\gG(H)$, and 
    \item $T = (\text{dis}_{\text{an}_\gG(H)} \setminus H) \cup \text{pa}_\gG(\text{dis}_{\text{an}_\gG(H)})$. 
\end{enumerate}

\begin{prop}
    Let $\gG = (\sv, \se, \sbe)$ be a directed mixed graph and $\si = \{I_k\}_{k=0}^K$ be a family of interventional targets. The set of interventional distributions $p_{\gG^\si} \in \sm_\si(\gG)$ if and only if $p_{\gG^\si}$ admits a factorization of the form given by \eqref{eq:factorization}.
    \label{prop:factor-implies-markov}
\end{prop}

\begin{proof}
Since any $p_{\gG^\si} \in \sm_\si(\gG)$ is also Markov to $\acy(\gG^\si)$, the proposition above is a direct implication of applying of Theorem~\ref{eq:factorization} on $\acy(\gG^\si)$.
\end{proof}

\subsection{Proof of Theorem~\ref{thm:main-theorem}} \label{app:proof}

We now present the main result of this paper. Recall the score function introduced in Section~\ref{ssec:score-function},
\[
\ses_\si(\gG) := \sup_{\vphi} \sum_{k=1}^K \mathop{\pE}_{\rvx \sim p^{(k)}} \log p_{\tdo(I_k)(\gG)}(\rvx) - \lambda |\gG|,
\]
where  $p^{(k)}$  is the data-generating distribution for \( I_k \in \si \), and  $\vphi = \{\vtheta, \sz\}$  represents the set of all model parameters. In the context of the meta system, since we assume access to the context distribution, the score function above is equivalent to the following score: 
\[
\ses_\si(\gG) := \sup_{\phi} \mathop{\pE}_{(\rvx, \rvc) \sim p_\si^\ast} \log p_{\gG^\si}(\rvx, \rvc \mid \vphi) - \lambda |\gG|,
\]
where $p_{\gG^\si}(\rvx, \rvc \mid \vphi)$ is given by \eqref{eq:prob-meta-system} for a specific choice of $\vphi$, and $p_\si^\ast$ denotes the joint ground-truth distribution for the observed and the context variables. We define \( \spe_\si(\gG) \) as the set of all distributions \( p_{\gG^\si}(\rvx, \rvc \mid \vphi)\) that can be expressed by the model specified by equations \eqref{eq:sem-interventions-comb} and \eqref{eq:nn-causal-mech}. That is,
\begin{equation}
    \spe_\si(\gG) := \{p \mid \exists\,\vphi \text{ s.t } p = p_{\gG^\si}(\cdot \mid \vphi)\}.
\end{equation}
From the above definition it is clear that $\spe_\si(\gG) \subseteq \sm_\si(\gG)$. Theorem~\ref{thm:main-theorem} relies on the following set of assumptions. The first one ensures that the model is capable of representing the ground truth distribution. 
\begin{assume}[Sufficient Capacity] The joint ground truth distribution $p_\si^\ast$ is such that $p_\si^\ast \in \spe_\si(\gG^\ast)$, where $\gG^\ast$ is the ground truth graph. 
    \label{assume:sufficient-capacity}
\end{assume}
In other words, there exists a $\vphi$ such that $p_\si^\ast = p_{\gG^\si}(\cdot \mid \vphi)$. The second assumption generalizes the notion of faithfulness assumption to the interventional setting. 
\begin{assume}[$\si$-$\sigma$-faithfulness]
Let $\rvv = (\rvx, \rvc^\si)$, for any subset of nodes $A, B, C \subseteq \sv \cup \rvc^\si$, and $I_k \in \si$
$$A \mathop{\not\perp}_{\gG^\si}^\sigma B \mid C \implies \rvv_A \mathop{\not\perp}_{p_{\gG^\si}} \rvv_B \mid \rvv_C. $$
\label{assume:faithfulness}
\end{assume}
The above assumption implies that any conditional independency observed in the data must imply a $\sigma$-separation in the corresponding interventional ground truth graph. 
\begin{assume}[Finite differential entropy]
    For $\si = \{I_k\}_{k=0}^K$,
    $$ | \pE_{p_\si^\ast}\log p_\si^\ast(\rvx, \rvc) | < \infty.$$
    \label{assume:finite-entropy}
\end{assume}
The above assumption ensures that the hypothetical scenario where $\ses(\gG^\ast)$ and $\ses(\gG)$ are both infinity is avoided. This is formalized in the lemma below taken from \citep{dcdi}. 
\begin{lemma}[Finiteness of the score function \citep{dcdi}]
    Under assumptions \ref{assume:sufficient-capacity} and \ref{assume:finite-entropy}, $|\ses_\si(\gG)| < \infty$. 
\end{lemma}
From the results of \citep{dcdi}, we can now express the difference in score function between $\gG^\ast$ and $\gG$ as the minimization of KL diverengence plus the difference in the regularization terms. 
\begin{lemma}[Rewritting the score function \citep{dcdi}]\label{lemma:scores}
    Under assumptions \ref{assume:sufficient-capacity} and \ref{assume:finite-entropy}, we have 
    $$\ses(\gG^\ast) - \ses(\gG) = \inf_\phi D_{KL}(p_\si^\ast\| p_{\gG^\si}(\cdot \mid \vphi)) + \lambda (|\gG| - |\gG^\ast|).$$
\end{lemma}
We will now prove the following technical lemma (adapted from \citep{dcdi}) which we will be used in proving Theorem~\ref{thm:main-theorem}. 
\begin{lemma}\label{lemma:strictly-pos}
    Let $\gG = (\sv, \se, \sbe)$ be a directed mixed graph, for a set of interventional targets $\si = \{I_k\}_{k=0}^K$, and $p^\ast \notin \sm_\si(\gG))$, then
    $$\inf_{p\in \sm_\si(\gG))} D(p^\ast\|p) > 0.$$
\end{lemma}

\begin{proof}
    Let $\rvv = (\rvx, \rvc^\si)$, from theorem~\ref{thm:factorization}, any $p \in \sm_\si(\gG))$ admits a factorization of the form 
    $$p(\rvv) = \prod_{H \in [\sv]_\gG} p(\rvv_H \mid \rvv_T).$$
    Let us define a new distribution $\hat{p}$ as follows:
    $$\hat{p}(\rvv) = \prod_{H \in [\sv]_\gG} \hat{p}(\rvv_H \mid \rvv_T),$$
    where 
    $$\hat{p}(\rvv_H\mid \rvv_T) = \frac{p^\ast(\rvv_H, \rvv_T)}{p^\ast(\rvv_T)}. $$ 
    From proposition~\ref{prop:factor-implies-markov}, we see that $\hat{p} \in \sm_\si(\gG))$ and hence $p \neq \hat{p}$. We will show that $$\hat{p} = \arg\min_{p \in \sm_\si(\gG))} D_{KL}(p^\ast \| p).$$ For an arbitrary $p \in \sm_\si(\gG))$, consider the following: 
    \begin{align}
        \pE_{p^\ast} \log \frac{\hat{p}(\rvv)}{p(\rvv)} &= \pE_{p^\ast}\sum_{H \in [\sv]_\gG} \log \frac{p^\ast(\rvv_H \mid \rvv_T)}{p(\rvv_H\mid \rvv_T)}\\
        &= \sum_{H\in[\sv]_\gG}\pE_{p^\ast} \log \frac{p^\ast(\rvv_H \mid \rvv_T)}{p(\rvv_H\mid \rvv_T)}. 
    \end{align}
    In the equation above, we leverage the linearity of expectation, which holds under Assumption~\ref{assume:finite-entropy}, ensuring that we don't sum infinities of opposite signs. We now show that each term in the right hand side of the above equality is an expectation of KL divergence which is always in $[0,\infty)$. 
    \begin{align}
        \pE_{p^\ast} \log \frac{p^\ast(\rvv_H \mid \rvv_T)}{p(\rvv_H\mid \rvv_T)} &= \int p^\ast(\rvv_T)\int p^\ast(\rvv_H\mid \rvv_T) \log \frac{p^\ast(\rvv_H \mid \rvv_T)}{p(\rvv_H\mid \rvv_T)} d\rvv_Hd\rvv_T\\
        &= \int p^\ast(\rvv_T) D_{KL}\big(p^\ast(\cdot\mid \rvv_T)\|p(\cdot\|\rvv_T)\big) d\rvv_T. 
    \end{align}
    Thus, $\pE_{p^\ast} \log \frac{\hat{p}(\rvv)}{p(\rvv)} \in [0, \infty)$. 

    \noindent
    We now show that $\hat{p} = \arg\min_{p \in \sm(\tdo(I_k)(\gG))} D_{KL}(p^\ast \| p)$:
    \begin{align}
        D_{KL}(p^\ast \| p) &= \pE_{p^\ast} \log \frac{p^\ast(\rvv)}{\hat{p}(\rvv)}\frac{\hat{p}(\rvv)}{p(\rvv)}\\
        &= \pE_{p^\ast} \log \frac{p^\ast(\rvv)}{\hat{p}(\rvv)} + \pE_{p^\ast}\log\frac{\hat{p}(\rvv)}{p(\rvv)} 
        \label{eq:exp-split}\\
        &= D_{KL}(p^\ast \| \hat{p}) + \pE_{p^\ast}\log\frac{\hat{p}(\rvv)}{p(\rvv)}\\
        &\geq D_{KL}(p^\ast \| \hat{p}) > 0.
    \end{align}
    Since the expectations in \eqref{eq:exp-split} are both in $[0,\infty)$, splitting the expectation is valid. The very last inequality holds since $p^\ast \neq \hat{p}$. Thus, $$\inf_{p\in \sm_\si(\gG))} D(p^\ast\|p) \geq D_{KL}(p^\ast \| \hat{p}) > 0.$$
    This proves the lemma. 
\end{proof}

\noindent
We are now ready to prove Theorem~\ref{thm:main-theorem}. Recall,

\begin{customthm}{\ref{thm:main-theorem}}
    Let $\si = \{I_k\}_{k=0}^K$ be a family of interventional targets, let $\gG^\ast$ denote the ground truth directed mixed graph, $p^{(k)}$ denote the data generating distribution for $I_k$, and $\hat{\gG} := \arg\max_\gG \ses(\gG)$. Then, under the Assumptions \ref{assume:int-stability}, \ref{assume:sufficient-capacity}, \ref{assume:faithfulness}, and \ref{assume:finite-entropy}, and for a suitably chosen $\lambda > 0$, we have that $\hat{\gG} \equiv_\si \gG^\ast$. That is, $\hat{\gG}$ is $\si$-Markov equivalent to $\gG^\ast$.   
\end{customthm}

\begin{proof}
It is sufficient to show that for $\gG \notin \si\text{-MEC}(\gG^\ast)$, the score function of $\hat{\gG}$ is strictly lower than the score function of $\gG^\ast$, i.e., $\ses(\gG^\ast) > \ses(\gG)$. Since \( \gG \notin \si\text{-MEC}(\gG^\ast) \) and \( p_\si^\ast\in \sm_\si(\gG^\ast) \) (by Assumption~\ref{assume:sufficient-capacity}), there must exist subsets of nodes \( A, B, C \subseteq \sv \cup \rvc^\si \) such that either:
\begin{equation}\label{eq:condition-1}
A \mperp_{\gG}^\sigma B \mid C \quad \text{and} \quad A \mathop{\not\perp}_{\gG^\ast}^\sigma B \mid C, \tag{C1}
\end{equation}
or
\begin{equation}\label{eq:condition-2}
A \mathop{\not\perp}_{\gG}^\sigma B \mid C \quad \text{and} \quad A \mperp_{\gG^\ast}^\sigma B \mid C, \tag{C2}   
\end{equation}

If no such subsets exist, then \( \gG \) and \( \gG^\ast \) impose the same  $\sigma$ -separation constraints and thus induce the same set of distributions. This would imply that \( \gG \in \si\text{-MEC}(\gG^\ast) \), contradicting our assumption. Since $p_\si^\ast \in \sm_\si(\gG^\ast)$, in the case of \eqref{eq:condition-1}, it must be true that $\rvv_A \not\perp_{p_\si^\ast} \rvv_B \mid \rvv_C$ (Assumption~\ref{assume:faithfulness}). Therefore $p_\si^\ast$ doesn't satisfy the general directed Markov property with respect to $\gG^\si$ and hence $p_\si^\ast \notin \sm_\si(\gG)$. For \eqref{eq:condition-2}, if $p_\si^\ast \in \sm_\si(\gG)$, then from Assumption~\ref{assume:faithfulness}, it must be true that $\rvv_A \not\perp_{p_\si^\ast} \rvv_B \mid \rvv_C$. However, $p_\si^\ast \in \sm_\si(\gG^\ast)$, and Proposition~\ref{prop:general-markov-property} implies that $\rvv_A \perp_{p_\si^\ast} \rvv_B \mid \rvv_C$, resulting in a contradiction. Therefore, $p_\si^\ast \notin \sm_\si(\gG)$. 

\noindent
For convenience, let $$\eta(\gG) := \inf_\phi D_{KL}(p_\si^\ast\|p_{\gG^\si}(\cdot \mid \vphi)).$$
Note that
$$\eta(\gG) = \inf_\phi D_{KL}(p_\si^\ast\|p_{\gG^\si}(\cdot \mid \vphi)) \geq \inf_{p \in \sm_\si(\gG)} D_{KL}(p^{(k)}\|p) > 0,$$
where we use Lemma~\ref{lemma:strictly-pos} for the final inequality. Thus, from Lemma~\ref{lemma:scores}
\begin{equation}
    \ses(\gG^\ast) - \ses(\gG) = \eta(\gG) + \lambda(|\gG| - |\gG^\ast|)
\end{equation}
Following \citep{dcdi}, we now show that by choosing $\lambda$ sufficiently small, the above equation is stictly positive. Note that if $|\gG| \geq |\gG^\ast|$ then $\ses(\gG^\ast) - \ses(\gG) > 0$. Let $\mathbb{G}^+ := \{\gG \mid |\gG| < |\gG^\ast|\}$. Choosing $\lambda$ such that $0 < \lambda < \min_{\gG \in \mathbb{G}^+} \frac{\eta(\gG)}{|\gG^\ast| - |\gG|}$ we see that: 
\begin{align}
    &\lambda < \min_{\gG \in \mathbb{G}^+} \frac{\eta(\gG)}{|\gG^\ast| - |\gG|}\\
    \iff &\lambda < \frac{\eta(\gG)}{|\gG^\ast| - |\gG|} \quad \forall \gG \in \mathbb{G}^+\\
    \iff &\lambda (|\gG^\ast| - |\gG|) < \eta(\gG) \quad \forall \gG \in \mathbb{G}^+\\
    \iff & 0 < \eta(\gG) + \lambda(|\gG| - |\gG^\ast|) = \ses(\gG^\ast) - \ses(\gG) \quad \forall \gG \in \mathbb{G}^+.
\end{align}
Thus, every graph outside of the general directed Markov equivalence class of $(\gG^\ast)^\si$ has a strictly lower score. 
\end{proof}

\subsection{Characterization of Equivalence Class}\label{app:equivalence-class}

Let $\gG = (\sv, \se, \sbe)$ be a directed mixed graph, and consider a family of interventional targets $\si = {I_k}_{k=0}^K$ with $I_0 = \emptyset$. From Proposition~\ref{prop:sigma-to-d}, for any graph $\gG_1 \in \si\text{-MEC}(\gG)$, the corresponding augmented graph $\gG_1^\si$ is equivalent to the acyclification $\acy(\gG^\si)$ of $\gG^\si$. Several prior works have studied the characterization of equivalence classes of acyclic directed mixed graphs (ADMGs), including \cite{pmlr-vR1-spirtes97b, kocaoglu2019characterization, Ali_2009}. We now provide a graphical notion of the $\si$-Markov equivalence class of a DMG $\gG$. A graph $\gG$ is said to be \emph{maximal} if there exists no inducing path (relative to the empty set) between any two non-adjacent nodes. An \emph{inducing path} relative to a subset $L$ is a path on which every non-endpoint node $i \notin L$ is a collider on the path and every collider is an ancestor of an endpoint of the path. A \emph{Maximal Ancestral Graph} (MAG) is one that is both ancestral and maximal. Given an ADMG $\acy(\gG^\si)$, it is possible to construct a MAG over the variable set $\rvv = (\rvx, \rvc^\si)$ that preserves both the independence structure and ancestral relationships encoded in $\acy(\gG^\si)$; see \cite{zhang2008causal} for details. We denote $MAG(\acy(\gG^\si))$ to mean MAG that is constructed from $\acy(\gG^\si)$. Therefore all the independencies encoded in $\acy(\gG^\si)$ is also present in $MAG(\acy(\gG^\si))$. Before present the condition for MAG equivalence, we introduce the notion of discriminating path. A path $\pi=(i_0, \varepsilon_1, \ldots, i_{n-1}, \varepsilon_n, i_n)$ in $\acy(\gG^\si)$ is called a \emph{discriminating path} for $i_{n-1}$ if (1) $\pi$ includes at lest three edges; (2) $i_{n-1}$ is a non-endpoint node on $\pi$, and is adjacent to $i_n$ on $\pi$; and (3) $i_0$ and $i_n$ are not adjacent, and every node in between $i_0$ and $i_{n-1}$ is a collider on $\pi$ and is a parent of $i_n$. The following theorem from \citet{pmlr-vR1-spirtes97b} characterizes the equivalence of MAGs. 

\begin{theorem}[\citet{pmlr-vR1-spirtes97b}]\label{thm:mag-equivalence}
    Two MAGs $\gG_1$ and $\gG_2$ are Markov equivalent if and only if: 
    \begin{enumerate}
        \item $\gG_1$ and $\gG_2$ have the same skeleton;
        \item $\gG_1$ and $\gG_2$ have the same unshielded colliders; and
        \item if $\pi$ forms a discriminating path for $i$ in $\gG_1$ and $\gG_2$, then $i$ is a collider on $\pi$ if and only it is a collider on $\pi$ in $\gG_2$.
    \end{enumerate}
\end{theorem}

Therefore, from Proposition~\ref{prop:sigma-to-d} and Theorem~\ref{thm:mag-equivalence}, two DMGs $\gG_1$ and $\gG_2$ are equivalent if and only if $MAG(\acy(\gG_1^\si))$ and $MAG(\acy(\gG_2^\si))$ satisfying the conditions of Theorem~\ref{thm:main-theorem}, i.e., $MAG(\acy(\gG_1^\si))$ and $MAG(\acy(\gG_2^\si))$: (i) have the same skeleton, (ii) same unshielded colliders, and (iii) same discriminating paths with consistent colliders.

\section{Implementation Details} \label{app:implementation-details}

In this section we provide the implementation details of DCCD-CONF and the baseline models along with the details of the experimental setup. 

\subsection{Hutchinson trace estimator for computing log determinant of the Jacobian}\label{app:hutchinson}

Computing the log determinant of the Jacobian matrix present in \eqref{eq:prob-distribution-interventions} poses a significant challenge. However, following \citet{iresnet}, in Section~3.3.1
showed that the log-determinant of the Jacobian can be estimated using the following estimator
\begin{equation*}
\log\big|\det \big(\mJ_{\vff_x^{(I_k)}}(\rvx)\big)\big| = \mathop{\mathbb{E}}_{n \sim p_{\mathbb{N}}(N)}\Bigg[\sum_{m=1}^n \frac{(-1)^{m+1}}{m}\cdot\frac{\mathrm{Tr}\big\{\mJ_{\mU_k \vfg_x}^m(\rvx)\big\} - \mathrm{Tr}\big\{\mJ_{\mU_k \vfg_z}^m(\rvz)\big\}}{p_\mathbb{N}(\ell \geq m)}\Bigg].
\end{equation*}
The estimator above still has a major drawback: computing the $\mathrm{Tr}(\mJ_{\mU_k \vfg})$ still requires $\mathcal{O}(d^2)$ gradient calls to compute exactly. Fortunately, \emph{Hutchinson trace estimator} \citep{hutchtraceestimator} can be used to stochastically approximate the trace of the Jacobian matrix. This then results in the following estimator that can be computed efficiently via reverse-mode automatic differentiation
\begin{multline}
\log\big|\det \big(\mJ_{\vff_x^{(I_k)}}(\rvx)\big)\big| = \mathop{\mathbb{E}}_{n \sim p_{\mathbb{N}}(N), \rvv \sim \mathcal{N}(\bm{0}, \mathbf{I})}\Bigg[\sum_{m=1}^n \frac{(-1)^{m+1}}{m}\\\times\frac{\rvv^\top\big\{\mJ_{\mU_k \vfg_x}^m(\rvx)\big\}\rvv - \rvv^\top\big\{\mJ_{\mU_k \vfg_z}^m(\rvz)\big\}\rvv}{p_\mathbb{N}(\ell \geq m)}\Bigg].
\label{eq:log-det-jac-final}
\end{multline}

\subsection{Parameter update via score maximization}

As described in Section~\ref{sec:methodology}, the model parameters are updated in two stages. In the first stage, the parameters of the neural networks and the Gumbel-Softmax distribution, used to sample adjacency matrices, are updated via backpropagation using stochastic gradient descent. Since \eqref{eq:sem-interventions-comb} forms an implicit block of an implicit normalizing flow, following~\cite{lu2021implicit}, we directly estimate the gradients of $\hat{\ses}(\mB)$ with respect to $\vx$ and $\vphi = (\vtheta, \mB)$. The gradient computation involves two terms: $\frac{\partial}{\partial (\cdot)}\log \det (\mI + \mJ_{\vfg_x}(\vx, \vphi))$ and $\frac{\partial \hat{\ses}}{\partial \vz}\frac{\partial \vz}{\partial (\cdot)}$, where $(\cdot)$ is a placeholder for $\vx$ and $\vphi$. From \citep{lu2021implicit, iresnet}, we use the following unbiased estimators for the gradients: 
\begin{equation}
\frac{\partial \log \det (\mI + \mJ_{\vfg_x}(\vx, \vphi))}{\partial (\cdot)} = \mathop{\pE}_{n\sim p(N), \rvv \sim \mathcal{N}(\bm{0}, \mI)}\Bigg[\bigg(\sum_{k=0}^n \frac{(-1)^k}{P(N\geq k)}\rvv^\top \mJ^k\bigg)\frac{\partial \mJ_{\vfg_x}(\vx, \vphi)}{\partial (\cdot)}\rvv\Bigg].
\label{eq:grad-1}
\end{equation}
On the other hand, $\frac{\partial \hat{\ses}(\mB)}{\partial \vz}\frac{\partial \vz}{\partial (\cdot)}$ can be computed according to the \emph{implicit function theorem} as follows: 
\begin{equation}
    \frac{\partial \hat{\ses}(\mB)}{\partial \vz}\frac{\partial \vz}{\partial (\cdot)} = \frac{\hat{\ses}(\mB)}{\partial \vz}\mJ^{-1}_{\mathbf{G}_z}(\vz)\frac{\mathbf{G}(\vx, \vz, \vphi)}{\partial (\cdot)},
    \label{eq:grad-2}
\end{equation}
where $\mathbf{G}_z(\vz) = \vfg_z(\vz, \vphi) + \vz$, and recall that $\mathbf{G}(\vx, \vz, \vphi) = \vfg_x(\vx, \vphi) + \vx + \vfg_z(\vz, \vphi) + \vz$. See \cite{lu2021implicit} for more details. The procedure \textsc{SGUPDATE} shown in Algorithm~\ref{alg:parameter-update} performs the gradient computation in \eqref{eq:grad-1} and \eqref{eq:grad-2}.

In the second stage, the entries of the covariance matrix of the endogenous noise distribution are updated column-wise by solving a sequence of Lasso optimization problems. The complete parameter update procedure is summarized in Algorithm~\ref{alg:parameter-update}.

\begin{algorithm}[t] 
\caption{\textsc{Parameter Update}}
\label{alg:parameter-update}
\begin{algorithmic}[1]
\Require{Family of interventional targets $\si = \{I_k\}_{k=1}^K$, interventional dataset $\{\vx^{(i,k)}\}_{i=1, k=1}^{N_k, K}$, regularization coefficients $\lambda$ and $\rho$}. 
\Ensure{Learned  neural network parameters $\hat{\vtheta}$, graph structure parameters $\hat{\sbe}$, confounder-noise distribution parameters $\hat{\mSigma}_Z$}.

\State{Initialize the parameters: $\vtheta^{(0)} \sim p_\theta(\vtheta)$, $\sbe^{(0)} \sim p_\sbe(\sbe)$, and $\mSigma_Z = \mI$}

\State{Iteration counter: $t = 0$}
\While{NOT \textsc{Converged}}
    \For{$k = 1$ to $K$} 
    \State{$t \gets t + 1$}
    \State{$\mW \gets (\mSigma_Z^{(t)})_{\su_k, \su_k}$}
    \State{Compute score function $\tilde{\mathcal{L}}(\sbe^{(t)}, \vtheta^{(t)}, \mW, I_k)$}

    \State{$\sbe^{(t+1)}, \vtheta^{(t+1)} \gets \textsc{Sgupdate}(\tilde{\mathcal{L}}, \sbe^{(t)}, \vtheta^{(t)})$} 

    \For {$j = 1$ to $d$}
    \State{Push $j$-th row and column in $\mW$ to the end}
    \State{$\vbeta \gets \texttt{lasso}(\mW_{11}, \vs_{12}, \rho)$}
    \State{$\vw_{12} \gets \mW_{11}\vbeta$}
    \EndFor
    \State{$(\mSigma_Z^{(t+1)})_{\su_k, \su_k} \gets \mW$}
    \EndFor
\EndWhile
\State{$\hat{\vtheta}, \hat{\sbe}, \hat{\mSigma}_Z \gets \vtheta^{(t)}, \sbe^{(t)}, \mSigma_Z^{(t)}$}

\Return {$\hat{\vtheta}, \hat{\sbe}, \hat{\mSigma}_Z$}
\end{algorithmic}
\end{algorithm}

\subsection{DCCD-CONF and the baselines code details}

\paragraph{DCCD-CONF. } We implemented our framework using the libraries \texttt{Pytorch} and \texttt{Scikit-learn} in Python and the code used in running the experiments can be found in the following Github repository: \url{https://github.com/muralikgs/dccd_conf}.

Starting with an initialization of the model parameters \((\vtheta^{(0)}, \mB^{(0)}, \sz^{(0)})\), we iteratively alternate between maximizing the score function with respect to \((\vtheta^{(t)}, \mB^{(t)})\) and \(\sz^{(t)}\), as described in Algorithm~\ref{alg:parameter-update}. Standard stochastic gradient updates are used for \((\vtheta^{(t)}, \mB^{(t)})\), while coordinate gradient descent, implemented via the \texttt{Scikit-learn} library, is applied to \(\sz^{(t)}\). For modeling the causal function $\vfg_x$, we follow the setup of \citet{sethuraman2023nodags}, employing neural networks (NNs) with dependency masks parameterized by a Gumbel-softmax distribution. The log-determinant of the Jacobian is computed using a power series expansion combined with the Hutchinson trace estimator. To mitigate bias from truncating the power series expansion, the number of terms is sampled from a Poisson distribution, as detailed in Section~\ref{ssec:model-parameter-update} and Appendix~\ref{app:hutchinson}. The final objective is optimized using the Adam optimizer \citep{kingma2015adam}.

The learning rate in all our experiments was set to $10^{-2}$. The neural network models used in our experiments contained one multi-layer perceptron layer. No nonlinearities were added to the neural networks for the linear SEM experiments. We used \texttt{tanh} activation for the nonlinear SEM experiments and for the experiments on the perturb-CITE-seq data set. The graph sparsity regularization constant $\lambda$ was set to $10^{-2}$ for all the experiments. The sparsity inducing regularization constant for the inverse covariance matrix of the confounder distribution, $\rho$, was set to $10^{-1}$ in all the experiments. The models were trained and evaluated on NVIDIA RTX6000 GPUs. 

\paragraph{Baselines. } For NODAGS-Flow, we used the code provided by authors \citep{sethuraman2023nodags} available at \url{https://github.com/Genentech/nodags-flows}. The default values were set for the hyperparameters. We implemented the LLC algorithm based on the details provided in \citep{llc}. The implementation can be found within the \texttt{codes/baselines} folder in the supplementary materials. For FCI, we used the implementation that is available in the \texttt{causallearn} python library (\url{https://github.com/py-why/causal-learn}). For DCDI, we used the codebase provided by the authors \citep{dcdi}, available at \url{https://github.com/slachapelle/dcdi}. The default hyperparameters were used while training and evaluating the model. For DAGMA and ADMG, we used the codebase provided by the authors~\cite{bello2022dagma} and ~\cite{bhattacharya2021differentiable}, available at \url{https://github.com/kevinsbello/dagma} and \url{https://gitlab.com/rbhatta8/dcd} respectively. We implemented LiNGAM-MMI based on the details provided by the authors~\citep{lingam-mmi}.

\subsection{Experimental setup}

In this section, we describe how the data sets were generated for the various experiments conducted. 

\subsubsection{Synthetic Experiments} \label{app:exp-settings}

We begin by sampling a directed graph using the Erdős-Rényi (ER) random graph model with an edge density of 2 unless specified otherwise, which determines the directed edges in the DMG \(\gG\). Next, we generate a random matrix and project it onto the space of positive definite matrices to obtain the confounder covariance matrix \(\sz\), setting the maximum exogenous noise standard deviation to 0.5 unless specified otherwise. The nonzero off-diagonal entries of \(\sz\) correspond to the bidirectional edges in \(\gG\). For the linear SEM, edge weights are sampled uniformly from $\text{Unif}((-0.9, -0.2) \cup (0.2, 0.9))$. In all experiments except those on non-contractive SEMs, the edge weight matrix is rescaled to ensure a Lipschitz constant of less than one. For nonlinear SEMs, we apply a \texttt{tanh} nonlinearity to the linear system defined by the edge weights, i.e., $$\vx = \tanh(\mW^\top \vx + \vz),$$ where $\mW$ is the weighted adjacency matrix. In all the experiments, the training data consisted of 500 samples per interventional setting (unless specified otherwise). 

\paragraph{Impact of Confounder Count}

In this experiment, the number of observed nodes in the graph is fixed at  d = 10 . Training data consists of combination of observational data and single-node interventions over all nodes, i.e., \( \si = \emptyset \cup \{\{i\} \mid i \in [d]\} \). The confounder ratio (number of confounders divided by the number of nodes) is varied from 0.2 to 0.8.

\paragraph{Impact of Cycles} We fix the number of nodes in the graph $d = 10$, The number of cycles in the graph is varied between 0 and 8, with the confounder ratio set to 0.3. The training data consists of observational data as well as single node interventions over all the nodes in the graph. 

\paragraph{Impact of Nonlinearity} In this setting, the degree of nonlinearity (controlled by $\beta$) is varied between 0 and 1. That is, the data is generated from the following SEM: 
$$\vx = (1 - \beta)(\mW^\top \vx + \vz) + \beta\tanh(\mW^\top \vx + \vz). $$
Here, $
\beta= 0$ implies the SEM is purely linear and $\beta=1.0$ implies the data is purely nonlinear. The confounder ratio is set to 0.3. The number of cycles is randomly set. 

\paragraph{Scaling with Number of Nodes}

We fix the confounder ratio at 0.4. The total number of nodes in the graph is varied from 10 to 80. As in the previous setup, training data consists of combination of observational data and single-node interventions across all nodes, with 500 samples per interventional setting.

\paragraph{Scaling with Interventions}

We fix  $d = 10$  and set the confounder ratio to 0.4. The number of interventions during training varies from $0$ to  $d$. Zero interventions corresponds to observational data. When fewer than  $d$  interventions are provided, the intervened nodes are selected arbitrarily. Each interventional setting consists of 500 samples.

\paragraph{Non-Contractive SEM}

In this case, we explicitly enforce a non-contractive causal mechanism  $\vffc$  by rescaling edge weights to ensure that the Lipschitz constant of the edge weight matrix exceeds one. We set  $d = 10$  and provide observational data and single-node interventions across all nodes, with 500 samples per intervention. The confounder ratio varies between 0.2 and 0.8.

\paragraph{Scaling with Training Samples}

To examine the sample requirements of DCCD-CONF, we set the confounder ratio to 0.4. Training data consists of observational data and single-node interventions over all nodes, while the number of samples per intervention is varied from 500 to 2500.

\paragraph{Scaling with outgoing edge density}

In this case, the outgoing edge density of the ER random graphs is varied from 1 to 4. The confounder ratio is set to 0.4 and the number of nodes $d=10$. The training data consists of observational data and single node experiments over all the nodes in the graph. 

\paragraph{Scaling with noise standard deviation}

In this setting, we vary the maximum noise standard deviation between 0.2 and 0.8. The confounder ratio is set to 0.4 and the number of nodes $d=10$. The training data consists of observational data and single node experiments over all the nodes in the graph. 

\paragraph{Evaluation Metrics}

Across all experiments, we use Structural Hamming Distance (SHD) to evaluate the accuracy of the estimated directed edges relative to the ground truth. SHD measures the number of modifications (edge additions, reversals, and deletions) required to match the estimated graph to the ground truth. For DCCD-CONF and NODAGS-Flow we fix a threshold value of 0.8 for the estimated adjacency matrix. The recovery of bidirectional edges is assessed using the F1 score, which is defined as:
$$\text{F1 score} = 2\frac{\text{precision} \times \text{recall}}{\text{precision} + \text{recall}}$$
where
$$\text{precision} = \frac{TP}{TP + FP}, \quad \text{recall} = \frac{TP}{TP + FN}$$
and  TP, FP, FN  denote true positives, false positives, and false negatives, respectively. We use a threshold of $0.01$ for the estimated covariance matrix to identify the bidirectional edges. 

Additionally, we also measure the performance of DCCD-CONF and the baselines using \emph{Area Under Precision-Recall Curve} (AUPRC) as the error metric. AUPRC computes the area under the precision-recall curve evaluated at various threshold values (the higher the better). 

\subsubsection{Gene Perturbation Data set}

The dataset was obtained from the Single Cell Portal of the Broad Institute (accession code SCP1064). Following the experimental setup of \citet{sethuraman2023nodags}, we filtered out cells with fewer than 500 expressed genes and removed genes expressed in fewer than 500 cells. Due to computational constraints, we selected a subset of 61 perturbed genes (Table \ref{tab:genes}) from the full genome. The three experimental conditions—co-culture, IFN-$\gamma$, and control—were partitioned into separate datasets, and models were trained and evaluated on each condition independently. 

\begin{table}[!ht]
    \centering
    \caption{The list of chosen genes from Perturb-CITE-seq dataset \citep{frangieh2021multimodal}.}
    \vspace{0.2cm}
    \resizebox{\textwidth}{!}{
    \begin{tabular}{|llllllllll|}
    \hline
        ACSL3 & ACTA2 & B2M & CCND1 & CD274 & CD58 & CD59 & CDK4 & CDK6  & ~ \\
        CDKN1A & CKS1B & CST3 & CTPS1 & DNMT1 & EIF3K & EVA1A & FKBP4 & FOS  & ~ \\ 
        GSEC & GSN & HASPIN & HLA-A & HLA-B & HLA-C & HLA-E & IFNGR1 & IFNGR2  & ~ \\ 
        ILF2 & IRF3 & JAK1 & JAK2 & LAMP2 & LGALS3 & MRPL47 & MYC & P2RX4  & ~ \\ 
        PABPC1 & PAICS & PET100 & PTMA & PUF60 & RNASEH2A & RRS1 & SAT1 & SEC11C  & ~ \\ 
        SINHCAF & SMAD4 & SOX4 & SP100 & SSR2 & STAT1 & STOM & TGFB1 & TIMP2  & ~ \\ 
        TM4SF1 & TMED10 & TMEM173 & TOP1MT & TPRKB & TXNDC17 & VDAC2 & ~ &   & ~ \\ \hline
    \end{tabular}
    }
    \label{tab:genes}
\end{table}

\section{Additional Results} \label{app:additional-experiments}

\begin{figure}[t]
    \centering
    \begin{subfigure}{0.49\linewidth}
    \includegraphics[width=\linewidth]{figs/appendix/nonlinear-data-non-contractive.pdf}
    \caption{Noncontractive SEM}
    \label{fig:non-contractive}
    \end{subfigure}
    \begin{subfigure}{0.49\linewidth}
    \includegraphics[width=\linewidth]{figs/appendix/nonlinear-data-n-samples.pdf}
    \caption{Performance v. sample size}
    \label{fig:num-samples}
    \end{subfigure}
    
    \begin{subfigure}{0.49\linewidth}
    \includegraphics[width=\linewidth]{figs/appendix/nonlinear-data-var-noise-scale.pdf}
    \caption{Performance v. Noise st. deviation}
    \label{fig:noise-scale}
    \end{subfigure}
    \begin{subfigure}{0.49\linewidth}
    \includegraphics[width=\linewidth]{figs/appendix/nonlinear-data-var-edge-density.pdf}
    \caption{Performance v. outgoing edge density}
    \label{fig:edge-density}
    \end{subfigure}
    \caption{Performance comparison between DCCD-CONF and baseline methods on causal graph recovery and confounder identification, evaluated across varying model parameters. Each subplot details a specific experimental setting.}
    \label{fig:additional-exps}
\end{figure}

\subsection{Additional synthetic experiments}

\begin{figure}[h]
    \centering
    \begin{subfigure}{0.49\linewidth}
        \includegraphics[width=\linewidth]{figs/appendix/nonlinear-data-auprc-num-confounders.pdf}
        \caption{Performance v. number of confounders}
        \label{fig:auprc-confounders}
    \end{subfigure}
    \begin{subfigure}{0.49\linewidth}
        \includegraphics[width=\linewidth]{figs/appendix/nonlinear-data-auprc-num-nodes.pdf}
        \caption{Performance v. number of nodes}
        \label{fig:auprc-nodes}
    \end{subfigure}

    \begin{subfigure}{0.49\linewidth}
        \includegraphics[width=\linewidth]{figs/appendix/nonlinear-data-auprc-num-interventions.pdf}
        \caption{Performance v. number of interventions}
        \label{fig:auprc-interventions}
    \end{subfigure}
    \begin{subfigure}{0.49\linewidth}
        \includegraphics[width=\linewidth]{figs/appendix/nonlinear-data-auprc-num-samples.pdf}
        \caption{Performance v. sample size}
        \label{fig:auprc-samples}
    \end{subfigure}

    \begin{subfigure}{0.49\linewidth}
        \includegraphics[width=\linewidth]{figs/appendix/nonlinear-data-auprc-noise-scale.pdf}
        \caption{Performance v. max noise st. dev}
        \label{fig:auprc-noise-scale}
    \end{subfigure}
    \begin{subfigure}{0.49\linewidth}
        \includegraphics[width=\linewidth]{figs/appendix/nonlinear-data-auprc-edge-density.pdf}
        \caption{Performance v. edge density}
        \label{fig:auprc-edge-density}
    \end{subfigure}
\caption{Comparison of DCCD-CONF and baseline methods on causal graph recovery and confounder identification, measured using AUPRC across varying model parameters. }
\label{fig:auprc}
\end{figure}

\paragraph{Experiments on Non-contractive DAGs.}\label{app:non-contractive}

We evaluated the performance of DCCD-CONF and baseline methods on non-contractive SEMs, where the ground truth DMG is acyclic. While DCCD-CONF assumes a contractive causal mechanism, we adapt it for non-contractive settings using the preconditioning trick proposed by \citet{sethuraman2023nodags}. This approach introduces a learnable diagonal preconditioning matrix  $\mathbf{\Lambda}$ , transforming the causal mechanism as follows:
$$\hat{\vfg}_{x} = \mathbf{\Lambda}^{-1} \circ \vfg_{x} \circ \mathbf{\Lambda},$$
where  $\vfg_x$ , as defined in \eqref{eq:nn-causal-mech}, remains contractive (see \citet{sethuraman2023nodags} for details). We vary the confounder ratio, and the results are summarized in Figure~\ref{fig:non-contractive}. As shown in Figure, DCCD-CONF effectively learns the ADMG even in non-contractive SEM settings, demonstrating competitive performance against the baselines.

\paragraph{Performance comparison vs. sample size}

We also assess the sample requirements of DCCD-CONF. Figure~\ref{fig:num-samples} summarizes the results obtained by varying the number of samples per intervention. As shown in the figure, when the confounder ratio is 0.4, DCCD-CONF achieves low SHD even with 500 samples per intervention and attains near-perfect accuracy from 2000 samples onward. However, performance declines slightly as the confounder ratio increases.

\paragraph{Performance comparison vs. max endogenous noise st. deviation}

In this setting, we compare DCCD-CONF with the baseline by varying the maximum standard deviation of the endogenous noise terms between 0.2 and 0.8, the results are summarized in Figure~\ref{fig:noise-scale}. DCCD-CONF outperforms the baselines for all noise standard deviations. However, the performance of the models does deteriorate slightly as the noise standard deviation increases. 

\paragraph{Performance comparison vs. outgoing edge density}

In this case, the expected number of outgoing edges from each node is varied between 1 and 4. This affects the sparsity of the resulting graph. The results are summarized in Figure~\ref{fig:edge-density}. As seen from Figure~\ref{fig:edge-density}, DCCD-CONF still outperforms the baselines, even though the performance of all the models worsens as the edge density increases.

\subsection{Additional performance metrics}

In addition to SHD for directed edge recovery and F1 score for bi-directional edge recovery. We also AUPRC to compare DCCD-CONF and the baseline for all of the experimental settings stated in Appendix~\ref{app:exp-settings}. The results are summarized in Figure~\ref{fig:auprc}. Overall, DCCD-CONF performs better than LLC on nonlinear SEMs across all the settings, while achieving perfect AUPRC scores in several cases.

\subsection{Comparison with FCI-JCI}

\begin{figure}[t]
    \centering
    \includegraphics[width=0.5\linewidth]{figs/appendix/nonlinear-data-mod-shd-num-confounders.pdf}
    \caption{Performance comparison between DCCD-CONF and FCI-JCI with respect to modified SHD as the error metric. The number of observed nodes was set to $d=5$. For the left plot, all single node interventions along with the observational data were provided as training data. For the right plot, observational data and interventions over two of the nodes (randomly chosen) were used as training data. }
    \label{fig:fci-jci-comparison}
\end{figure}

Here, we compare the performance of DCCD-CONF with FCI-JCI \cite{mooij2020joint}, which is an extension of FCI algorithm that is capable of handling multiple contexts (in this case interventional settings). FCI-JCI outputs a \emph{Partial Ancestral Graph} (PAG), which is a graph structure that represents the equivalence class of MAGs. We define a modified SHD score in order to check if the DMG estimated by DCCD-CONF belongs to the same equivalence class of the ground truth DMG. To that end, we convert the ground-truth DMG and the estimated DMG to their augmented DMGs and then construct the MAG of the acyclified version of the augmented DMGs. The modified SHD score then computes the discrepancies in the conditions of Theorem~\ref{thm:mag-equivalence}, i.e., we count: (i) the number of extra edges ($N_1$) using the skeletons of the estimated MAG and ground truth MAG, (2) number of mismatched unshielded colliders ($N_2$), and (3) discrepancies in the discriminating paths ($N_3$). Similarly, for FCI-JCI, we count the disagreements between the ground-truth MAG and the estimated PAG, i.e., mismatch in skeleton $(N_1)$, mismatch in unshielded colliders ($N_2$), invariant edge orientation discrepancies ($N_3$). Finally, the \emph{modified SHD} $= N_1 + N_2 + N_3$. We compare DCCD-CONF and JCI-FCI over two different settings: (i) the training data consists of observational data and single node interventions over all the nodes in the graph, and (ii) the training data consists of observational data and interventions over 2 nodes (randomly chosen) in the graph. Due to the complexity of computing the modified SHD (as it involves iterating over the discriminating paths and inducing paths) we fix the number of observed nodes to be $d=5$. In the both the cases, the confounder ratio is varied between 0.2 and 0.8, and nonlinear SEM is used to generate the data. The results are summarized in Figure~\ref{fig:fci-jci-comparison}.

\begin{wrapfigure}{r}{0.3\linewidth}
\vspace{-0.5cm}
    \centering
    \includegraphics[width=\linewidth]{figs/ca-appendix/mmi-comparison.pdf}
    \caption{Performance comparison between DCCD-CONF and LiNGAM-MMI on $d=5$ node graphs. }
    \label{fig:lingam-mmi-comparison}
    \vspace{-1cm}
\end{wrapfigure}

As seen from Figure~\ref{fig:fci-jci-comparison}, DCCD-CONF outperforms FCI-JCI in the both the settings. However, the performance does decrease as the number of training interventions reduces. We attribute this to the increase sample requirements as the number of training interventions goes down. 

\subsection{Comparison with LiNGAM-MMI}

We compare the performance of DCCD-CONF with LiNGAM-MMI~\cite{lingam-mmi}, which extends the ICA-based LiNGAM~\cite{shimizu2006linear} to handle hidden confounding. Since LiNGAM-MMI requires iterating over all possible node permutations, its computational cost grows rapidly with the number of nodes; hence, we restrict our comparison to $d=5$. The training data include both observational samples and single-node interventions for all nodes, with 500 samples per interventional setting. We vary the confounder ratio (i.e., the ratio of confounders to observed nodes) between 0.2 and 0.8. As shown in Figure~\ref{fig:lingam-mmi-comparison}, DCCD-CONF consistently outperforms LiNGAM-MMI across all tested confounder ratios.

\subsection{Hyperparameter Sensitivity}

We evaluate the sensitivity of DCCD-CONF to its hyperparameters: (i) the directed edge sparsity regularization coefficient $\lambda_c$, and (ii) the bidirected edge sparsity regularization coefficient $\rho$. Figure~\ref{fig:param-sensitivity} summarizes the results for $\lambda_c, \rho \in {0.1, 0.01, 0.001}$ on graphs with $d=10$ nodes and a confounder ratio of 0.3. The model was trained using both observational data and all single-node interventions. As shown in the figure, DCCD-CONF remains fairly robust to hyperparameter variations, achieving normalized SHD values below 0.3 and F1 scores above 0.65 for most combinations of $\lambda_c$ and $\rho$.

\begin{figure}[h]
    \centering
    \includegraphics[width=0.5\linewidth]{figs/ca-appendix/parameter-sensitivity.pdf}
    \caption{Illustration of DCCD-CONF performance for various choices of hyperparameters $\lambda_c$ (directed edge sparsity regularization coeff.) and $\rho$ (bidirected edge sparsity regularization coeff.). }
    \label{fig:param-sensitivity}
\end{figure}

\subsection{Training Time Comparison}

Figure~\ref{fig:training-time} compares the training times of DCCD-CONF and baseline methods. Unlike the other algorithms, LLC does not require stochastic gradient-based training, as it relies solely on solving a series of linear regressions. Consequently, LLC is considerably faster, as shown in Figure~\ref{fig:training-time}. Excluding LLC, NODAGS-Flow is the most efficient in terms of runtime; however, it cannot account for confounders within its framework. DCCD-CONF achieves training times comparable to ADMG and DCDI while simultaneously handling confounders, cycles, and nonlinear dependencies. The training times are computed on $d=10$ node graphs with training dataset consisting of observational and all single-node interventions. DCCD-CONF was trained for 200 epochs. All models were run on RTX6000 GPUs.

\begin{figure}[t]
    \centering
    \includegraphics[width=0.4\linewidth]{figs/ca-appendix/training-time.pdf}
    \caption{Training time comparison between DCCD-CONF and the baselines}
    \label{fig:training-time}
\end{figure}

\subsection{Additional Results on Perturb-CITE-seq Dataset}

In addition to test-set NLL, we evaluate the performance of DCCD-CONF and the baselines on the Perturb-CITE-seq dataset~\cite{frangieh2021multimodal} using \emph{Interventional Mean Absolute Error} (I-MAE) as the evaluation metric. I-MAE is computed as the mean of $\|\vx + \vfg_x(\vx)\|_1/d$ over all observations $\vx$ in the held-out test set. Beyond the baselines discussed in Section~\ref{sec:experiments}, we also include Bicycle~\cite{rohbeck2024bicycle} and DCDFG~\cite{dcdfg} for comparison. Bicycle supports nonlinear and cyclic structures, whereas DCDFG restricts the search space to acyclic graphs. The results, summarized in Table~\ref{tab:perturb-cite-seq-mae}, show that DCCD-CONF remains competitive with state-of-the-art methods under the I-MAE metric.

\begin{table}[h]
    \centering
    \caption{Results on Perturb-CITE-seq~\cite{frangieh2021multimodal} gene perturbation dataset. The table presents the average \emph{Mean Absolute Error} (MAE) on the test set, averaged over multiple trials (standard deviation is reported within paranthesis).}
    \vspace{0.2cm}
    \begin{tabular}{l|c|c|c}
    \hline
        \textbf{Method} & \textbf{Control} & \textbf{Co-Culture} & \textbf{IFN}-$\gamma$ \\ \hline\hline
        DCCD-CONF & 0.781 (0.037)  & 0.765 (0.046) & 0.843 (0.035) \\ 
        NODAGS    & 0.847 (0.018) & 0.762 (0.018) & 0.861 (0.023) \\ 
        Bicycle   & 0.782 (0.042) & 0.735 (0.036) & 0.883 (0.028) \\ 
        DCDFG     & 0.845 (0.066) & 0.774 (0.038) & 0.891 (0.041) \\ \hline
    \end{tabular}
    \label{tab:perturb-cite-seq-mae}
\end{table}

Additionally, we report the adjacency matrix of the recovered causal graph for the cell condition ``Co-Culture'' in Figure~\ref{fig:cocult-graph}. DCCD-CONF identified 38 feedback cycles. This number validates prior work showing that gene regulatory networks are rich in feedback loops~\cite{frangieh2021multimodal}. 

\begin{figure}[h]
    \centering
    \includegraphics[width=0.45\linewidth]{figs/ca-appendix/pcs-adj-mat.pdf}
    \caption{Adjacency matrix of learnt by DCCD-CONF for ``Co-Culture'' cell condition of Perturb-CITE-seq dataset. }
    \label{fig:cocult-graph}
\end{figure}

\clearpage
\subsection{Protein Signaling Dataset}

We further evaluate DCCD-CONF on a biological dataset for protein signaling network discovery~\cite{sachs_causal_2005}, which is widely used as a benchmark for causal discovery algorithms. 
\begin{wraptable}{r}{0.27\textwidth}
\vspace{-0.15cm}
\caption{Performance comparison on \citet{sachs_causal_2005} protein signaling dataset.}
\label{tab:sachs}
\begin{tabular}{l|c}
\textbf{Method}  & \textbf{SHD} \\ \hline
DCCD-CONF                  & \textbf{18}\\
DAG-GNN~\cite{yu2019dag}   & 19\\
DAGMA                      & 21\\
NOTEARS                    & 22\\
\hline
\end{tabular}
\end{wraptable}
The dataset contains continuous measurements of multiple phosphorylated proteins and phospholipid components in human immune system cells, with the corresponding network capturing the ordering of interactions among pathway components. Based on $n = 7466$ samples across $m = 11$ cell types, \citet{sachs_causal_2005} identified 20 edges in the underlying graph. Using the consensus network from~\cite{sachs_causal_2005} as ground truth, we evaluate performance using the Structural Hamming Distance (SHD) as the error metric. The results, summarized in Table~\ref{tab:sachs}, show that DCCD-CONF performs comparably to the baselines. The recovered directed graph is visualized in Figure~\ref{fig:sachs}.

\begin{figure}[h]
    \centering
    \includegraphics[width=0.6\linewidth]{figs/ca-appendix/sachs-adj-mat.pdf}
    \caption{(Left) Consensus graph from \citet{sachs_causal_2005}, (right) graph learned by DCCD-CONF.}
    \label{fig:sachs}
\end{figure}

\bibliography{references}